\documentclass[11pt]{article}
\usepackage{graphicx} 

\usepackage{amsthm,amsfonts,amsmath,changepage,threeparttable,booktabs,multirow,pifont,xspace}
\usepackage{a4wide}
\usepackage{algorithm}
\usepackage{algorithmic}
\usepackage{authblk}
\usepackage{hyperref}

\usepackage{booktabs}
\usepackage{threeparttable}
\usepackage{multirow}
\usepackage{caption}
\usepackage{bbding}

\title{High-Probability Convergence in Decentralized Stochastic Optimization with Gradient Tracking}
\author{Aleksandar Armacki, Haoyuan Cai, Ali H. Sayed}
\affil{\'Ecole Polytechnique F\'ed\'erale de Lausanne, Lausanne, Switzerland, \\
\texttt{\{aleksandar.armacki, haoyuan.cai, ali.sayed\}@epfl.ch}}

\date{}

\usepackage{nicefrac}
\usepackage{amsmath}
\usepackage{amsthm}
\usepackage{amssymb}
\usepackage{amsfonts}
\usepackage{comment}
\usepackage{mathtools}
\usepackage{algorithm}
\usepackage{graphicx}
\usepackage{xspace}

\DeclareMathOperator*{\argmin}{arg\,min}

\usepackage{xcolor}
\usepackage{enumitem}

\usepackage{bbding}

\newtheorem{theorem}{Theorem}
\newtheorem{definition}{Definition}
\newtheorem{assumption}{Assumption}
\newtheorem{lemma}{Lemma}
\newtheorem{remark}{Remark}
\newtheorem{proposition}{Proposition}

\newcommand{\lp}{\left(}
\newcommand{\rp}{\right)}

\newcommand{\lcb}{\left\{}
\newcommand{\rcb}{\right\}}
\newcommand{\lbr}{\left[}
\newcommand{\rbr}{\right]}

\newcommand{\bigO}{\mathcal{O}}

\newcommand{\sfo}{$\mathcal{SFO}$\xspace}

\newcommand{\sgd}{\textbf{\texttt{SGD}}\xspace}
\newcommand{\dsgd}{\textbf{\texttt{DSGD}}\xspace}
\newcommand{\gtdsgd}{\textbf{\texttt{GT-DSGD}}\xspace}

\newcommand{\bg}{{\mathbf g}}
\newcommand{\bgt}{{\mathbf g^{t}}}

\newcommand{\bx}{{\mathbf x}}
\newcommand{\bxt}{{\mathbf x^{t}}}

\newcommand{\bxtp}{{\mathbf x^{t+1}}}

\newcommand{\bz}{{\mathbf z}}
\newcommand{\bzt}{{\mathbf z^{t}}}
\newcommand{\bztp}{{\mathbf z^{t+1}}}

\newcommand{\Prob}{\mathbb{P}}

\newcommand{\by}{{\mathbf y}}
\newcommand{\byt}{{\mathbf y^{t}}}
\newcommand{\bytp}{{\mathbf y^{t+1}}}

\usepackage{xcolor}

\usepackage{tikz}
\usetikzlibrary{calc,trees,positioning,arrows,chains,shapes.geometric,%
	decorations.pathreplacing,decorations.pathmorphing,shapes,%
	matrix,shapes.symbols}

\tikzstyle{startstop} = [rectangle, draw, rounded corners, align=center, minimum width=3cm, minimum height=1cm,text centered]
\tikzstyle{decision} = [diamond, draw, fill=blue!20, 
text width=4.5em, text badly centered, node distance=3cm, inner sep=0pt]
\tikzstyle{block} = [rectangle, draw, fill=blue!10, align=center, rounded corners, minimum width=3cm, minimum height=1cm]
\tikzstyle{blockcast} = [rectangle, draw, fill=red!10, align=center, rounded corners, minimum width=3cm, minimum height=0.45cm]
\tikzstyle{line} = [draw, -latex']
\tikzstyle{cloud} = [draw, ellipse,fill=red!20, node distance=3cm,
minimum height=2em]

\newcommand{\R}{\mathbb{R}}
\newcommand{\E}{\mathbb{E}}
\newcommand{\N}{\mathbb{N}}
\newcommand{\D}{\mathcal{D}}

\newcommand{\calN}{\mathcal{N}}

\newcommand{\Fx}{\mathcal{F}_X}
\newcommand{\Ft}{\mathcal{F}_t}

\newcommand{\bW}{\mathbf{W}}
\newcommand{\tbW}{\widetilde{\mathbf{W}}}
\newcommand{\bJ}{\mathbf{J}}
\newcommand{\bI}{\mathbf{I}}
\newcommand{\bA}{\mathbf{A}}
\newcommand{\bAt}{\mathbf{A}^t}
\newcommand{\ox}{\overline{x}}
\newcommand{\obx}{\overline{\bx}}

\newcommand{\obg}{\overline{\bg}}
\newcommand{\og}{\overline{g}}
\newcommand{\onab}{\overline \nabla}

\newcommand{\nbf}{\nabla \mathbf{f}}
\newcommand{\nbft}{\nabla \mathbf{f}^t}
\newcommand{\nbftp}{\nabla \mathbf{f}^{t+1}}

\newcommand{\onft}{\overline{\nabla} f^t}
\newcommand{\onbf}{\overline{\nabla} \mathbf{f}}
\newcommand{\onbft}{\overline{\nabla} \mathbf{f}^t}

\newcommand{\oz}{\overline{z}}
\newcommand{\ozt}{\overline{z}^t}
\newcommand{\obzt}{\overline{\bz}^t}
\newcommand{\oy}{\overline{y}}
\newcommand{\oyt}{\overline{y}^t}
\newcommand{\oby}{\overline{\by}}

\newcommand{\sigmax}{\sigma_{\max}}

\newcommand{\xit}{x^t_i}

\newcommand{\xitp}{x^{t+1}_i}
\newcommand{\yit}{y^t_i}

\newcommand{\git}{g^t_i}

\newcommand{\ogt}{\overline{g}^t}
\newcommand{\obgt}{\overline{\bg}^t}
\newcommand{\zit}{z^t_i}

\newcommand{\oxt}{\overline{x}^t}
\newcommand{\oxtp}{\overline{x}^{t+1}}
\newcommand{\obxt}{\overline{\bx}^t}
\newcommand{\obxtp}{\overline{\bx}^{t+1}}

\newcommand{\obyt}{\overline{\by}^t}
\newcommand{\obytp}{\overline{\by}^{t+1}}

\newcommand{\Ct}{C^t}
\newcommand{\oCT}{\overline{C}^T}
\newcommand{\oCTm}{\overline{C}^{T-1}}
\newcommand{\Dt}{D^t}
\newcommand{\oDT}{\overline{D}^T}
\newcommand{\Et}{E^t}
\newcommand{\oET}{\overline{E}^T}

\newcommand{\fs}{f^\star}
\newcommand{\Xt}{X^t}
\newcommand{\Xtp}{X^{t+1}}

\newcommand{\Yt}{Y^t}
\newcommand{\Ytp}{Y^{t+1}}

\begin{document}

\maketitle

\begin{abstract}
   We study high-probability (HP) convergence guarantees in decentralized stochastic optimization, where multiple agents collaborate to jointly train a model over a network. Existing HP results in decentralized settings almost exclusively focus on the Decentralized Stochastic Gradient Descent (\dsgd) algorithm, which requires strong assumptions, such as bounded data heterogeneity, or strong convexity of each agent's cost. This is contrary to the mean-squared error (MSE) results, where methods incorporating bias-correction techniques are known to converge under relaxed assumptions and achieve better practical performance. In this paper we provide the first step toward bridging the gap, by studying HP convergence of \dsgd incorporating the gradient tracking technique, in the presence of noise satisfying a relaxed sub-Gaussian condition. We show that the resulting method, dubbed \gtdsgd, achieves order-optimal HP convergence rates for both non-convex and Polyak-\L{}ojasiewicz costs, of order $\mathcal{O}\Big(\frac{\log(\nicefrac{1}{\delta})}{\sqrt{nT}}\Big)$ and $\mathcal{O}\Big(\frac{\log(\nicefrac{1}{\delta})}{nT}\Big)$, respectively, where $n$ is the number of agents, $T$ is the time horizon and $\delta \in (0,1)$ is the confidence parameter. Our results establish that \gtdsgd converges in the HP sense under the same conditions on the cost as in the MSE sense, while achieving comparable transient times. To the best of our knowledge, these are the first HP guarantees for decentralized optimization methods incorporating bias-correction. Numerical experiments on real and synthetic data verify our theoretical findings, underlining the superior performance of \gtdsgd and highlighting that the benefits of incorporating bias-correction are also maintained in the HP sense. 
\end{abstract}

\section{Introduction}

Decentralized learning is a popular paradigm where multiple agents collaborate to jointly train a model, encountered in a wide range of applications, including federated training of models on mobile devices \cite{konecny2016federated}, controlling and coordinating robot swarms \cite{bullo2009distributed}, or distributed control and power grids \cite{dist-learn-noncvx}. Formally, the problem can be cast as follows
\begin{equation}\label{eq:decentr-opt}
    \argmin_{x \in \R^d}\bigg\{ f(x) = \frac{1}{n}\sum_{i \in [n]} f_i(x)\bigg\},
\end{equation} where $x \in \R^d$ represents model parameters, $n \geq 2$ is the number of agents, while $f_i: \R^d \mapsto \R$ is the cost function of agent $i$. Many approaches have been proposed to solve \eqref{eq:decentr-opt} in the decentralized settings, with methods utilizing bias-correction techniques, such as gradient tracking (GT) \cite{scutari-next,nedic-tracking,qu-harnessing}, exact diffusion (ED) \cite{yuan-ed-1,yuan-ed-2,pmlr-v80-tang18a} and EXTRA \cite{shi-extra,nids}, being the most widely used. Compared to the classical \dsgd algorithm, which successively applies local (stochastic) updates and consensus steps, bias-correction methods apply state tracking techniques to either the gradient estimate or agents' models, enabling them to eliminate the effects of data heterogeneity and successfully track the global (i.e., average) network state. These methods are known to bring various benefits, such as exact convergence under a fixed step-size in the deterministic setting, e.g., \cite{nedic-tracking,qu-harnessing,yuan-ed-2,shi-extra}, as well as convergence under relaxed assumptions and improved transient times in the stochastic setting, e.g., \cite{pmlr-v80-tang18a,ran-improved,unified-refined}.     

While the average behaviour of these algorithms across a large number of runs has been well understood, e.g., in the mean-squared error (MSE) sense \cite{adaptive-learning-ali,Sayed_inference}, in recent times the focus has shifted to guarantees with respect to individual runs, motivated by applications such as neural networks and large language models, where a single training run is incredibly expensive in terms of both time and computational resources. Among them, high-probability (HP) convergence, e.g., \cite{harvey2019tight,li2020high,liu2023high,nguyen2023improved,sadiev2023highprobability} is particularly appealing, as for a non-negative process $\{X_t\}$, it establishes the following bound, for any $\delta \in (0,1)$, $t \geq 1$ and some $\beta > 0$
\begin{equation}\label{eq:conv-in-hp}
    \Prob\bigg(X_t > \frac{\log(\nicefrac{1}{\delta})}{t^\beta} \bigg) \leq \delta.
\end{equation} The guarantees stemming from \eqref{eq:conv-in-hp} are strong, showing a mild logarithmic dependence on the confidence parameter $\delta$, while also implying exponentially decaying tail bounds.\footnote{Setting $\epsilon = \frac{\log(\nicefrac{1}{\delta})}{t^\beta}$ in \eqref{eq:conv-in-hp} results in $\Prob\big(X_t > \epsilon\big) \leq \exp(-\epsilon t^\beta)$, for any $\epsilon > 0$. For comparison, using a MSE bound $\E[X_t^2] \leq t^{-\beta}$ and Chebyshev inequality implies $\Prob\big(X_t > \epsilon\big) \leq \frac{1}{\epsilon^2t^\beta}$, i.e., tails decaying polynomially in $t$.} As we highlight in the following literature review, although HP convergence has been extensively studied in centralized settings, only a handful of results exist in the decentralized setup. Moreover, the existing works primarily focus on HP guarantees of vanilla \dsgd, leaving HP convergence of the more powerful methods incorporating bias-correction largely unexplored.

\subsection{Literature Review}

We next review the literature on centralized HP and decentralized MSE and HP guarantees.

\begin{table*}[t]
\caption{Convergence to stationarity in decentralized non-convex optimization. All guarantees are stated in terms of the stationarity measure $\frac{1}{nT}\sum_{i \in [n]}\sum_{t \in [T]}\|\nabla f(\xit)\|^2$. ``Result'' indicates the original type of guarantee provided in the work. ``Heterogeneity'' indicates whether results hold under arbitrary heterogeneity level (\ding{52}), or if an explicit heterogeneity bound is required (\ding{55}). ``HP Rate'' indicates the HP rate with respect to $n$, $T$ and the confidence parameter $\delta \in (0,1)$, where the HP rate from MSE results is obtained via Markov's inequality. ``Transient Time'' indicates the time needed to cancel the effects of network connectivity and attain the global rate $\mathcal{O}\Big(\frac{1}{\sqrt{nT}}\Big)$, with $\lambda \in [0,1)$ being the network connectivity parameter (see Section \ref{sec:problem} ahead for a formal definition). For fairness of comparison with other HP results, we present our results under sub-Gaussian noise, i.e., with $\rho = 0$ (see Definition \ref{def:relax-sub-Gauss} ahead for details).}
\label{tab:non-conv}
\begin{adjustwidth}{-1in}{-1in} 
\begin{center}
\begin{threeparttable}
\begin{small}
\begin{sc}
\begin{tabular}{cccccc}
\toprule
\multicolumn{1}{c}{\scriptsize Work} & \multicolumn{1}{c}{\scriptsize Method} & \multicolumn{1}{c}{\scriptsize Result} & \multicolumn{1}{c}{\scriptsize Heterogeneity} & \multicolumn{1}{c}{\scriptsize HP Rate} & \multicolumn{1}{c}{\scriptsize Transient Time}\\
\midrule
\scriptsize\cite{dsgd-high-prob} & \scriptsize \dsgd & \scriptsize HP & \ding{55}$^\dagger$ & \scriptsize $\mathcal{O}\Big(\frac{\log(\nicefrac{1}{\delta})}{\sqrt{T}}\Big)$ & \scriptsize $-^\ddagger$ \\
\midrule
\scriptsize\cite{armacki2025dsgd} & \scriptsize \dsgd & \scriptsize HP & \ding{55} & \scriptsize $\mathcal{O}\Big(\frac{\log(\nicefrac{1}{\delta})}{\sqrt{nT}}\Big)$ & \scriptsize $\mathcal{O}\Big(\frac{n^3}{(1-\lambda)^4}\Big)$ \\
\midrule
\scriptsize\cite{ran-improved} & \scriptsize \gtdsgd & \scriptsize MSE & \ding{52} & \scriptsize $\mathcal{O}\Big(\frac{1}{\delta\sqrt{nT}}\Big)$ & \scriptsize $\mathcal{O}\Big(\frac{n^3}{(1-\lambda^2)^6}\Big)^\S$ \\
\midrule
\scriptsize Theorem \ref{thm:main-non-cvx} & \scriptsize \gtdsgd & \scriptsize HP & \ding{52} & \scriptsize $\mathcal{O}\Big(\frac{\log(\nicefrac{1}{\delta})}{\sqrt{nT}}\Big)$ & \scriptsize $\mathcal{O}\Big(\frac{n^3}{(1-\lambda^2)^8}\Big)$ \\
\bottomrule
\end{tabular}
\end{sc}
\end{small}
\begin{tablenotes}\scriptsize
    \item[$\dagger$] The authors in \cite{dsgd-high-prob} require uniformly bounded gradients, implying bounded heterogeneity.

    \item[$\ddagger$] The authors in \cite{dsgd-high-prob} establish rates with no linear speed-up in the number of agents, hence no transient times can be derived from their rates.

    \item[$\S$] The authors in \cite{ran-improved} establish a transient time from a \gtdsgd-specific analysis. This was improved to $\mathcal{O}\Big(\max\Big\{\frac{n^3}{(1-\lambda)^2}, \frac{n}{(1-\lambda)^{8/3}}\Big\}\Big)$ in \cite{unified-refined}, through a unified analysis framework. Since we focus specifically on \gtdsgd, we compare with the transient time from \cite{ran-improved}.  
\end{tablenotes}
\end{threeparttable}
\end{center}
\vskip -0.1in
\end{adjustwidth}
\end{table*}

\paragraph{Centralized HP Results.} HP convergence guarantees have been studied extensively in the centralized setting. For example, the authors in \cite{nemirovski2009robust} show the optimal convergence rate $\mathcal{O}\Big(\frac{\log(\nicefrac{1}{\delta})}{\sqrt{T}}\Big)$ of \sgd for convex costs under light-tailed (i.e., sub-Gaussian) stochastic gradients, while the works \cite{ghadimi2013stochastic,li2020high} show the same optimal rate for \sgd and momentum \sgd under light-tailed noise and non-convex costs. The authors in \cite{harvey2019tight} and \cite{pmlr-v206-bajovic23a} respectively provide optimal HP convergence rates $\mathcal{O}\Big(\frac{\log(\nicefrac{1}{\delta})}{T}\Big)$ of the last iterate of \sgd for non-smooth and smooth strongly convex costs. The work \cite{liu2023high} generalizes and unifies previous HP results on non-convex and convex costs, providing order-optimal rates for several algorithms, including \sgd and AdaGrad, for both smooth non-convex and (possibly) non-smooth convex costs, while the authors in \cite{liu2024revisiting} establish unified HP convergence of \sgd for smooth and non-smooth convex and strongly convex costs. A related line of works study HP convergence under noise with tails heavier than sub-Gaussian, starting with \cite{madden2020high,li2022high,eldowa2024general}, which consider HP convergence of \sgd under sub-Weibull noise, while works like \cite{gorbunov2020stochastic,parletta2022high} study clipped \sgd in the presence of noise with bounded variance. This is further generalized in works like  \cite{cutkosky2021high,sadiev2023highprobability,nguyen2023improved,liu2023breaking,hubler2025normalization,kornilov2025sign}, which study variants of clipped, normalized or sign \sgd in the presence of noise with bounded moment of order $p \in (1,2]$, establishing rates depending on the noise moment,\footnote{If the noise has bounded $p$-th moment, the authors either establish rates of order $\mathcal{O}\Big(T^{\frac{2(1-p)}{3p-2}}\Big)$, or if a mini-batch is used, the oracle complexity for reaching an $\epsilon$-stationary point, e.g., \cite{arjevani2022}, of order $\mathcal{O}\Big(\epsilon^{\frac{3p-2}{1-p}}\Big)$.} as well as  \cite{gorbunov2023breaking,armacki2024_ldp+mse,armacki2025optimal,armacki2025high}, where the authors consider noise with potentially unbounded moments and additional structure, like symmetry, establishing rates independent of noise moments.\footnote{In the sense that the rates have constant exponents, independent of noise, e.g., of order $\mathcal{O}\Big(T^{-1/2}\Big)$ in \cite{armacki2025optimal}.}

\begin{table*}[t]
\caption{Convergence to optimum in decentralized P\L{}/strongly convex optimization. All guarantees are stated in terms of the optimality measure $\frac{1}{n}\sum_{i \in [n]}\big(f(x_i^T) - \fs\big)$. ``Result'' indicates the original type of guarantee provided in the work. ``Local Conditions'' indicates whether smoothness is the only condition needed on agents' local costs $f_i$ (\ding{52}), or if additional conditions on local costs are required (\ding{55}). ``HP Rate'' indicates the HP rate with respect to $n$, $T$ and $\delta$, where the HP rate from MSE results is obtained via Markov's inequality. ``Transient Time'' indicates the time needed to cancel the effects of network connectivity and attain the global rate $\mathcal{O}\Big(\frac{1}{nT}\Big)$. For fairness of comparison with other HP results, we present our results for sub-Gaussian noise, i.e., with $\rho = 0$ (see Definition \ref{def:relax-sub-Gauss} ahead for details).}
\label{tab:PL}
\begin{adjustwidth}{-1in}{-1in} 
\begin{center}
\begin{threeparttable}
\begin{small}
\begin{sc}
\begin{tabular}{cccccc}
\toprule
\multicolumn{1}{c}{\scriptsize Work} & \multicolumn{1}{c}{\scriptsize Method} & \multicolumn{1}{c}{\scriptsize Result} & \multicolumn{1}{c}{\scriptsize Local Conditions} & \multicolumn{1}{c}{\scriptsize HP Rate} & \multicolumn{1}{c}{\scriptsize Transient Time}\\
\midrule
\scriptsize\cite{dsgd-high-prob} & \scriptsize \dsgd & \scriptsize HP & \ding{55}$^\dagger$ & \scriptsize $\mathcal{O}\Big(\frac{\log(\nicefrac{1}{\delta})}{T}\Big)$ & \scriptsize $-$ \\
\midrule
\scriptsize\cite{armacki2025dsgd} & \scriptsize \dsgd & \scriptsize HP & \ding{55}$^\ddagger$ & \scriptsize $\mathcal{O}\Big(\frac{\log(\nicefrac{1}{\delta})}{nT}\Big)$ & \scriptsize $\mathcal{O}\Big(\max\Big\{\frac{n}{1-\lambda},\frac{1}{(1-\lambda)^2}\Big\}\Big)$ \\
\midrule
\scriptsize\cite{ran-improved} & \scriptsize \gtdsgd & \scriptsize MSE & \ding{52} & \scriptsize $\mathcal{O}\Big(\frac{1}{nT\delta}\Big)$ & \scriptsize $\mathcal{O}\Big(\frac{n}{(1-\lambda^2)^3}\Big)^\S$ \\
\midrule
\scriptsize Theorem \ref{thm:main-PL} & \scriptsize \gtdsgd & \scriptsize HP & \ding{52} & \scriptsize $\mathcal{O}\Big(\frac{\log(\nicefrac{1}{\delta})}{nT}\Big)$ & \scriptsize $\mathcal{O}\Big(\frac{n^{1+\epsilon}}{(1-\lambda^2)^{4+2\epsilon}}\Big)^\P$ \\
\bottomrule
\end{tabular}
\end{sc}
\end{small}
\begin{tablenotes}\scriptsize
    \item[$\dagger$] The authors in \cite{dsgd-high-prob} require local costs $f_i$ to have uniformly bounded gradients across the entire sample path, i.e., that there exists a $G > 0$, such that $\|\nabla f_i(\xit)\| \leq G$, for all $i \in [n]$ and $t \geq 1$.

    \item[$\ddagger$] The authors in \cite{armacki2025dsgd} require local costs $f_i$ to be twice continuously differentiable and strongly convex.

    \item[$\S$] Similarly to the comments in Table \ref{tab:non-conv}, the authors in \cite{unified-refined} provide an improved transient time for \gtdsgd, of order $\mathcal{O}\Big(\max\Big\{\frac{n}{1-\lambda}, \frac{1}{(1-\lambda)^{4/3}}\Big\}\Big)$, but for fairness, we compare with results from \cite{ran-improved}.  

    \item[$\P$] The value $\epsilon \in (0,1]$ is a tunable constant, implying that our transient time can be made arbitrarily close to $\bigO\Big(\frac{n}{(1-\lambda^2)^4} \Big)$. For further details, see Remark \ref{rmk:trans-time-PL} and Appendix \ref{sup:trans-time}.
\end{tablenotes}
\end{threeparttable}
\end{center}
\vskip -0.1in
\end{adjustwidth}
\end{table*}

\paragraph{Decentralized MSE Results.} MSE convergence is perhaps the most widely encountered result, typically studied under different conditions on the second noise moment, e.g., \cite{adaptive-learning-ali,khaled2022better}. Following one of these settings, the authors in \cite{conv-rates-dsgd-jakovetic} show that \dsgd converges at rate $\mathcal{O}\Big(\frac{1}{T}\Big)$ for strongly convex costs over random graphs. The works \cite{vlaski-dsgd-nonconv-1,vlaski-dsgd-nonconv-2} study MSE guarantees of \dsgd for non-convex costs over directed, strongly connected graphs and show that it can escape saddle points with high probability. The work \cite{wang-cooperative} proposes a general framework over undirected networks, dubbed cooperative \sgd, establishing, among others, the optimal rate $\mathcal{O}\Big(\frac{1}{\sqrt{nT}}\Big)$ for non-convex costs. The authors in \cite{koloskova2023grad_clip} provide unified guarantees for \dsgd with local updates and changing network topology, with optimal rates and linear speed-up in the number of agents for non-convex and (strongly) convex costs. Notably, all of the said works require variations of the bounded heterogeneity condition for non-convex costs, which is relaxed for strongly convex/Polyak-\L{}ojasiewicz (PL) costs, at the expense of requiring each agent's local cost to be strongly convex/P\L{}. These additional requirements are removed by endowing \dsgd with bias-correction techniques, like GT and ED. For example, the authors in \cite{ran-vr} show that \gtdsgd using variance reduction converges at a linear rate for strongly convex costs, while the authors in \cite{ran-improved} establish optimal rates of \gtdsgd with linear speed-up in the number of agents, for both non-convex and P\L{} costs, of order $\mathcal{O}\Big(\frac{1}{\sqrt{nT}}\Big)$ and $\mathcal{O}\Big(\frac{1}{nT}\Big)$. The authors in \cite{pmlr-v80-tang18a} provide matching order-optimal convergence rate for the method dubbed $D^2$, which combines \dsgd and the ED bias-correction technique, over undirected graphs and non-convex costs. The work \cite{unified-refined} provides unified guarantees over undirected networks for a general framework dubbed SUDA, which subsumes \gtdsgd and $D^2$ as special cases, notably showing improved transient times for both non-convex and P\L{} costs. The work \cite{yu2026decentralized} shows that \gtdsgd with local momentum and normalization achieves optimal rates under heavy-tailed noise with bounded moment $p \in (1,2]$ and undirected graphs. It is worth mentioning a rich line of works studying MSE guarantees for decentralized problems like estimation, detection, multi-objective and multi-task optimization, see, e.g., \cite{consensus+innovation1,chen-decent-pareto,nassif-multitask} and references therein. 

\paragraph{Decentralized HP Results.} Compared to MSE guarantees, there is a significantly smaller body of work on HP convergence of decentralized algorithms. For example, the authors in \cite{dsgd-high-prob} study HP convergence of \dsgd over directed graphs, for general non-convex and P\L{} costs under light-tailed noise, requiring uniformly bounded gradients, as well as asymptotically vanishing noise for P\L{} costs, while showing rates $\mathcal{O}\Big(\frac{\log(\nicefrac{1}{\delta})}{\sqrt{T}}\Big)$ and $\mathcal{O}\Big(\frac{\log(\nicefrac{1}{\delta})}{T}\Big)$. The works \cite{dsgd-online-noncooperative-games,dsgd-online-stochastic} study HP convergence of a decentralized mirror descent algorithm under light-tailed noise, for online noncooperative games and dynamic regrets, respectively, requiring bounded gradients and compact domains. The authors in \cite{clipped-dual-avg,clipped-dsgd2} study HP convergence for convex costs under heavy-tailed noise with bounded $p$-th moments, of clipped dual averaging and clipped \dsgd, respectively, while the authors in \cite{clipped-dsgd1} provide HP guarantees for clipped \dsgd for online learning under noise with bounded $p$-th moment, for both general convex and non-convex costs, requiring both uniformly bounded gradients and iterates. A common thread for all these works is the need for uniformly bounded gradients, or algorithmic modifications like gradient clipping, which ensure that the gradients stay bounded, as well as the fact that none of the said works establish linear speed-up in the number of agents. A recent work \cite{armacki2025dsgd} made a breakthrough in this regard, showing that \dsgd over undirected graphs converges with order-optimal HP rates $\mathcal{O}\Big(\frac{\log(\nicefrac{1}{\delta})}{\sqrt{nT}}\Big)$ and $\mathcal{O}\Big(\frac{\log(\nicefrac{1}{\delta})}{nT}\Big)$, achieving linear speed-up for both non-convex and strongly convex costs, while relaxing the various strong assumptions used in prior works. However, this result still suffers from the well-known deficiencies of \dsgd, requiring an explicit bound on the heterogeneity for non-convex costs, as well as additional conditions on costs of each agent in the strongly convex case. Notably, none of the existing works study HP convergence of the more powerful methods using bias-correction techniques, which are known to converge under relaxed assumptions in the MSE sense and exhibit superior performance in practice. Before closing this section, we mention a related line of work that study asymptotic large deviations behaviour in decentralized problems like detection, inference and social learning, see, e.g., \cite{bajovic-detection-ld,bajovic-detection-ld2,matta-diffusion-1,matta-difussion-2,asl2021,bajovic-inference-ld,social-learning-book}, and references therein.

\subsection{Contributions}

In this work we provide the first HP convergence result for decentralized methods using bias-correction, by studying a variant of \dsgd incorporating the GT technique (\gtdsgd), under noise satisfying a relaxed sub-Gaussian condition. Our contributions are as follows.

\begin{enumerate}
    \item We study HP convergence guarantees of \gtdsgd over networks inducing a symmetric communication matrix, in the presence of noise satisfying a relaxed sub-Gaussian condition (see Definition \ref{def:relax-sub-Gauss} ahead). The novel condition allows the moment-generating function (MGF) of the noise to grow with the norm-squared of the gradient of the global cost and recovers the classical sub-Gaussian (i.e., light-tailed) noise as a special case.

    \item For general non-convex costs we show that \gtdsgd removes the heterogeneity bound required for HP convergence of \dsgd, e.g., \cite{armacki2025dsgd}. Further, we establish the HP convergence of the agent and time averaged gradient norm-squared, with the rate $\mathcal{O}\Big(\frac{\log(\nicefrac{1}{\delta})}{\sqrt{nT}}\Big)$, achieving linear speed-up in the number of agents and matching the known order-optimal MSE rates of \gtdsgd. Our bounds indicate HP transient times which are comparable to transient times stemming from MSE rates in \cite{ran-improved}. A detailed comparison with existing HP and MSE results for non-convex costs is provided in Table \ref{tab:non-conv}.

    \item For P\L{} costs we show that \gtdsgd removes the need for additional conditions on agents' local costs required for HP convergence of \dsgd in, e.g., \cite{dsgd-high-prob,armacki2025dsgd}. Moreover, we establish the HP convergence of the average of agents' last iterate, with the rate $\mathcal{O}\Big(\frac{\log(\nicefrac{1}{\delta})}{nT}\Big)$, again achieving linear speed-up and matching the MSE rates. Similarly, our bounds indicate HP transient times comparable to transient times stemming from MSE rates in \cite{ran-improved}. A detailed comparison with existing HP and MSE results for P\L{}/strongly convex costs is provided in Table \ref{tab:PL}.

    \item We provide numerical experiments on both synthetic and real data, verifying several facets of our theoretical results, including exponential tail probability decay and linear speed-up in the HP sense. Further, we demonstrate that our theory holds in real, mini-batch setting, where the noise is not necessarily sub-Gaussian, and show that \gtdsgd consistently performs better than vanilla \dsgd, confirming that the benefits of bias-correction methods observed in the MSE sense are maintained in the HP sense.     
\end{enumerate}

\paragraph{Technical Challenges and Novelty.} Toward establishing our results, we faced multiple challenges, resolved by introducing several novelties, as outlined next. Compared to \cite{dsgd-high-prob,armacki2025dsgd}, where the authors study HP convergence of \dsgd, we incorporate the GT technique, which requires a significantly different analysis to account for relaxed assumptions and the interplay of state and tracker variables stemming from the GT. To that end, we establish novel bounds on the MGFs of the consensus gap and the gradient tracker accounting for their joint dynamics, in the form of Lemmas \ref{lm:MGF-decay-PL} and \ref{lm:bdd-MGF-PL}. Next, we introduce a relaxed sub-Gaussian condition, which allows the noise MGF to grow with the norm of the global gradient, and provide Lemmas \ref{lm:noise-properties}-\ref{lm:avg-noise-properties}, that characterize its properties. Compared to the work \cite{ran-improved}, which studies MSE convergence of \gtdsgd, we study HP guarantees, which requires a fundamentally different approach, by analyzing the MGF of the process of interest. To that end, we introduce several novel technical results, like the aforementioned ones, as well as Lemma \ref{lm:mgf-bound-str-cvx}.

\paragraph{Paper Organization.} The rest of the paper is organized as follows. Section \ref{sec:problem} states the problem, Section \ref{sec:method} introduces the \gtdsgd method, Section \ref{sec:main} presents the main results, Section \ref{sec:num} provides numerical results, while Section \ref{sec:conc} concludes the paper. Appendix contains results omitted from the main body. The remainder of this section introduces some notation.    

\paragraph{Notation.} We use $\N$, $\R$ and $\R^d$ to denote positive integers, real numbers and $d$-dimensional vectors, respectively. For $m \in \N$, we use $[m] = \{1,\ldots,m\}$ to denote positive integers up to and including $m$. The notation $\langle \cdot,\cdot\rangle$ stands for the Euclidean inner product, while $\|\cdot\|$ is used for both vector and matrix induced norms. We use $\mathbf{1}_n$ and $I_d$ to denote the $n$-dimensional vector of ones and $d$-dimensional identity matrix, respectively. Subscripts are used to denote agents, while superscripts denote the iteration counter, e.g., $\xit$ refers to the model of agent $i$ in iteration $t$. The ``big O'' notation $\bigO(\cdot)$ hides only global constants, unless stated otherwise.

\section{Problem Setup}\label{sec:problem}

In this section we formally introduce the problem of decentralized stochastic optimization. As discussed in the introduction, we consider a network of $n \geq 2$ agents that want to jointly train a model, by solving \eqref{eq:decentr-opt}. Communication links between agents are modeled via a static graph $G = (V,E)$, where $V = [n]$ is the set of vertices (i.e., agents), while $E \subseteq V \times V$ is the set of edges (i.e., communication links). The graph $G$ induces a weight matrix $W \in \R^{n \times n}$, whose $(i,j)$-th component $[W]_{ij} = w_{ij}$ is strictly positive if $(i,j) \in E$, otherwise $w_{ij} = 0$. We further make the following assumption on the weight matrix.

\begin{assumption}\label{asmpt:network}
    The weight matrix $W \in \R^{n \times n}$ is primitive and doubly stochastic.
\end{assumption}

Assumption \ref{asmpt:network} is a standard assumption in decentralized optimization, satisfied by weight matrices induced by connected undirected graphs, as well as a class of strongly-connected directed graphs with doubly stochastic weights, e.g., \cite{ran-vr}. Let $\lambda \coloneqq \|W - J\|$ denote the network connectivity parameter, where $J \coloneqq \frac{1}{n}\mathbf{1}_n\mathbf{1}_n^\top \in \R^{n \times n}$ is the ideal communication matrix. It can be shown that Assumption \ref{asmpt:network} implies $\lambda \in [0,1)$, e.g., \cite{Horn_Johnson_2012}. Next, we state the assumptions on the local and global cost functions.

\begin{assumption}\label{asmpt:cost}
    The global cost is bounded from below and agents' costs have $L$-Lipschitz gradients, i.e., $\fs \coloneqq \inf_{x \in \R^d}f(x) > -\infty$ and for all $i \in [n]$ and $x, y\in \R^d$, we have
    \begin{equation*}
        \|\nabla f_i(x) - \nabla f_i(y)\| \leq L\|x - y\|.
    \end{equation*}
\end{assumption}

Assumption \ref{asmpt:cost} is standard for general non-convex costs, e.g., \cite{ghadimi2013stochastic,ran-improved,unified-refined}. We will also use the following assumption in some results, known as the P\L{} condition, e.g., \cite{karimi2016linear}.

\begin{assumption}\label{asmpt:PL}
    There exists a $\mu > 0$, such that for every $x \in \R^d$, the global cost satisfies
    \begin{equation*}
        2\mu\big(f(x)-\fs\big) \leq \|\nabla f(x)\|^2.
    \end{equation*}
\end{assumption}

Assumption \ref{asmpt:PL} is a popular instance of structured non-convexity conditions and subsumes as a special case strongly convex costs. To solve \eqref{eq:decentr-opt}, we assume each agent has access to a Stochastic First-order Oracle (\sfo), which, when queried by agent $i \in [n]$ with input $x \in \R^d$, returns $g_i \in \R^d$, a stochastic estimator of the true gradient $\nabla f_i(x)$. The \sfo model is widely used and subsumes the following popular learning settings.

\paragraph{1. Batch (i.e., offline) learning:} the cost of each agent $i \in [n]$ is given by $f_i(x) = \frac{1}{m_i}\sum_{r \in [m_i]}\ell(x;\xi_{i,r})$, where $\{\xi_{i,r}\}_{r \in [m_i]} \subset \Xi_i$ is a finite dataset, with $\ell_i: \R^d \times \Xi_i \mapsto \R$ being a loss function. When queried, the \sfo chooses a set of indices $S_i \subset [m_i]$ uniformly at random and outputs $g_i = \frac{1}{|S_i|}\sum_{r^\prime \in S_i}\nabla \ell(x;\xi_{i,r^\prime})$, where $1 \leq |S_i| < m_i$;

\paragraph{2. Streaming (i.e., online) learning:} the cost of each agent $i \in [n]$ is given by $f_i(x) = \E_{\xi_i \sim \D_i}\big[ \ell_i(x;\xi_i) \big]$, where $\xi_i \in \Xi_i$ is a random variable following an unknown distribution $\D_i$. When queried, the \sfo generates a mini-batch $\{\xi_{i,r}\}_{r \in S_i}$ of independent, identically distributed copies of $\xi_i$ and outputs $g_i = \frac{1}{|S_i|}\sum_{r \in S_i}\nabla \ell_i(x;\xi_{i,r})$, where $|S_i| \geq 1$. \\

Prior to stating our assumptions on the estimator returned by the \sfo, we define an important concept, namely that of \emph{relaxed sub-Gaussian} random vectors.

\begin{definition}\label{def:relax-sub-Gauss}
    Let $X, Z \in \R^d$ be random vectors with $Z \coloneqq Z(X)$ and let $\Fx \coloneqq \sigma(X)$ be the $\sigma$-algebra induced by $X$. We say that $Z$ is $(\sigma,\rho)$-sub-Gaussian (with respect to $X$), if for some $\sigma >0$, $\rho \geq 0$ and $\mathcal{T}: \R^d \mapsto \R^d$, we have 
    \begin{equation*}
        \E\bigg[\exp\bigg(\frac{\|Z\|^2}{\sigma^2} \bigg)  \: \big\vert \: \Fx \bigg] \leq \exp\big(1 + \rho\|\mathcal{T}(X)\| \big), \: \text{ almost surely}.    
    \end{equation*}
\end{definition}

Definition \ref{def:relax-sub-Gauss} is inspired by observations that the noise often grows with the cost sub-optimality and the various second noise moment conditions used in MSE analysis, e.g., \cite{adaptive-learning-ali,Sayed_inference,khaled2022better}. For example, if $\mathcal{T}(X) = \nabla f(X)$, then the above condition allows the noise MGF to grow with the stationarity gap of $X$, while if $\rho = 0$, one recovers the usual $\sigma$-sub-Gaussianity definition, e.g., \cite{vershynin_2018}. To the best of our knowledge, this condition has not previously been considered in the HP literature. We provide some properties of random vectors satisfying Definition \ref{def:relax-sub-Gauss} in Section \ref{subsec:prelim}. Prior to stating the assumption on the noise, let $z_i^t \coloneqq g_i^t - \nabla f_i(x_i^t)$ denote the stochastic noise at agent $i \in [n]$ and time $t \geq 1$, and let $\Ft \coloneqq \sigma\big(\{\{x_i^1\}_{i \in [n]},\ldots,\{x_i^t\}_{i \in [n]}\}\big)$ be the $\sigma$-algebra induced by agents' models up to time $t$.

\begin{assumption}\label{asmpt:noise}
    At any time $t \geq 1$, the stochastic quantities satisfy the following.
    \begin{enumerate}[leftmargin=*, label=\arabic*.]
        \item The random samples $\{\xi^k_i \}_{i \in [n],k \in [t]}$ are independent across agents and iterations. 
        
        \item The stochastic noise is conditionally zero-mean, i.e., $\E[\zit \: \vert \: \Ft ] = 0$, almost surely.

        \item The noise at agent $i \in [n]$ and time $t \geq 1$ is $(\sigma_i,\rho\alpha_t^{2+\varepsilon})$-sub-Gaussian for some $\varepsilon > 0$, i.e., 
        \begin{equation*}
            \E\bigg[\exp\bigg(\frac{\|\zit\|^2}{\sigma_i^2}\bigg) \: \Big\vert \: \Ft \bigg] \leq \exp\big(1 + \alpha_t^{2+\varepsilon}\rho\|\nabla f(\xit)\| \big), \: \text{ almost surely}.
        \end{equation*}
    \end{enumerate}
\end{assumption}
    
The first two conditions in Assumption \ref{asmpt:noise} are standard in (decentralized) stochastic optimization. The third condition requires the noise $\zit$ to be relaxed sub-Gaussian with respect to $\xit$ and $\mathcal{T}(\cdot) = \nabla f(\cdot)$, generalizing the standard sub-Gaussian condition used in both centralized and decentralized settings, e.g., \cite{li2020high,liu2023high,pmlr-v206-bajovic23a,dsgd-high-prob,armacki2025dsgd}. Finally, note that unlike the MSE analysis, where the noise variance is typically allowed to grow unrestrictedly with the gradient norm, i.e., $\E[\|\zit\|^2 \: \vert \: \Ft] \leq \sigma^2 + \rho^2\|\nabla f(\xit)\|^2$, Assumption \ref{asmpt:noise} requires the gradient norm to be controlled by the step-size, which is necessitated by the properties of the resulting noise MGF, see Remark \ref{rmk:noise} ahead for a further discussion.

\section{The Gradient Tracking Method}\label{sec:method}

In this section we formally introduce the \gtdsgd method, which incorporates the GT technique into the standard \dsgd algorithm. 

\begin{algorithm}[tb]
\caption{\gtdsgd}
\label{alg:dsgd}
\begin{algorithmic}[1]
   \REQUIRE{Model initialization $x^1_i \in \R^{d}$, gradient tracker initialization $y_i^0 = 0$ and $g_i^0 = 0$, $i \in [n]$, step-size schedule $\{\alpha_t\}_{t \in \N}$;}
   \FOR{$t = 1,2,\ldots$, each agent $i \in [n]$ in parallel}
        \STATE Query the \sfo to obtain $\git$;  

        \STATE Perform the gradient tracker update: $\yit = \sum_{j \in \calN_i}w_{ij}\big(y_j^{t-1} + g^t_j - g^{t-1}_j \big)$;
        
        \STATE Perform the model update: $\xitp = \sum_{j \in \calN_i}w_{ij}\big(x_j^t - \alpha_ty_j^t \big)$;
    \ENDFOR
\end{algorithmic}
\end{algorithm}

Gradient tracking is a popular bias-correction technique, extensively studied in the decentralized optimization literature, e.g., \cite{scutari-next,nedic-tracking,qu-harnessing,ran-improved,unified-refined,Pu2021,ran-vr,jakovetic-unification,xin-general}. Informally, when using GT, each agent maintains and updates two variables: (i) the local model and (ii) the local gradient tracker, with the goal of the second one being to track the global, network-average gradient. The version of \gtdsgd considered in our work is based on the Adapt-Then-Combine (i.e., diffusion) approach \cite{diffusion1,diffusion2,diffusion3}, and formally consists of the following steps. At the start of training, agents choose a shared step-size schedule $\{\alpha_t\}_{t \in \N}$, a deterministically selected initial local model $x^1_i \in \R^d$,\footnote{The initial models can be any real vectors, different across agents, but they need to be deterministic quantities, for the sake of theoretical analysis in Section \ref{sec:main} ahead.} and initialize their local gradient tracker with zero vector, i.e., $y_i^0 = 0$. In iteration $t \geq 1$, each agent $i \in [n]$ queries the \sfo with their current model $\xit$ and receives $\git$. Agents then update their local gradient tracker according to the rule
\begin{equation}\label{eq:gt-track-update}
    y_i^{t} = \sum_{j \in \calN_i}w_{ij}\big(y_i^{t-1} + g_j^t - g_j^{t-1}\big),
\end{equation} where $\calN_i \coloneqq \{j \in V: \{i,j\} \in E\} \cup \{i\}$ represents the set of agents (i.e., \emph{neighbours}) with whom agent $i$ can communicate (including $i$ itself). The new model at each agent is then produced by performing the following diffusion step 
\begin{equation}\label{eq:gt-model-update}
    \xitp = \sum_{j \in \calN_i}w_{ij}\big(x^{t}_j - \alpha_t y_j^t\big).
\end{equation} The full \gtdsgd method is summarized in Algorithm \ref{alg:dsgd}.

\section{Main Results}\label{sec:main}

In this section we present the main results. Subsection \ref{subsec:prelim} states the preliminaries, Subsection \ref{subsec:non-conv} provides results for non-convex costs, while Subsection \ref{subsec:PL} presents results for P\L{} costs.

\subsection{Preliminaries}\label{subsec:prelim} 

In this section we provide some results used in the analysis, starting with properties of the noise satisfying Definition \ref{def:relax-sub-Gauss}. For ease of notation, let $\E_{X}[\:\cdot\:] \coloneqq \E[\:\cdot\:\vert\Fx]$ and $\Prob_{X}(\:\cdot\:) \coloneqq \Prob(\:\cdot\:\vert\Fx)$ denote the expectation and probability conditioned on the $\sigma$-algebra induced by $X$.

\begin{lemma}\label{lm:noise-properties}
    Let $X_i, Z_i \in \R^d$, $i \in [n]$, be random vectors, with $Z_i \coloneqq Z_i(X_i) \in \R^d$ being zero-mean and $(\sigma_i,\rho_i)$-sub-Gaussian conditioned on $X_i$. Then the following statements hold.
    \begin{enumerate}
        \item For any $\mathcal{F}_{X_i}$-measurable $v \in \R^d$, we have
        \begin{equation}\label{eq:mgf-inner-prod}
            \E_{X_i}\big[\exp\lp\langle v, Z_i \rangle\rp \big] \leq \exp\lp \frac{3\sigma_i^2\|v\|^2}{4} + \rho_i\|\mathcal{T}(X_i)\| \rp .
        \end{equation}

        \item For $\epsilon > 0$, we have
        \begin{equation}\label{eq:tail-bdd}
            \Prob_{X_i}\big(\|Z_i\| > \epsilon\big) \leq 2\exp\bigg(-\frac{\epsilon^2}{2\sigma_i^2} + \frac{\rho_i}{2}\|\mathcal{T}(X_i)\| \bigg)
        \end{equation}

        \item For any $p \in \N$, we have
        \begin{equation}\label{eq:p-th-moment}
            \E_{X_i}\|Z_i\|^{2p} \leq (2p)^{p+1}\sigma_i^{2p}\exp\bigg(\frac{\rho_i}{2}\|\mathcal{T}(X_i)\| \bigg). 
        \end{equation}
    
        \item Let $\Fx = \sigma\big(\big\{X_1,\ldots,X_n\big\}\big)$ denote the sigma-algebra generated by $X_i$'s and let $\sigma^2 = \sum_{i \in [n]}\sigma_i^2$. If $Z_1,\ldots,Z_n$ are independent conditioned on $\Fx$, then for any $\epsilon > 0$, we have
        \begin{equation}\label{eq:mgf-norm}
            \E_X\bigg[\exp\bigg(\frac{n\|\overline{Z}\|^2}{96\sigma^2}\bigg)\bigg] \leq 2d\exp\Bigg(1 + \frac{\sum_{i \in [n]}\rho_i^2\|\mathcal{T}(X_i)\|^2}{32\sigma^2}\Bigg).
        \end{equation}
    \end{enumerate}
\end{lemma}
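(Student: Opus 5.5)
We will prove the four items in order, each being a conditional version of a standard sub-Gaussian fact, with the extra factor $\exp(\rho_i\|\mathcal{T}(X_i)\|)$ carried along. Throughout write $K_i \coloneqq \rho_i\|\mathcal{T}(X_i)\|$, which is $\mathcal{F}_{X_i}$-measurable and so behaves as a constant under $\E_{X_i}$.

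\emph{Item 1.} We split into two regimes. If $\sigma_i\|v\| \le 1$ (or a comparable threshold), use $e^{a} \le 1 + a + a^2 e^{|a|}$ in the refined form $e^{a} \le a + e^{a^2}$, valid for all real $a$. With $a = \langle v, Z_i\rangle$, the zero-mean property removes the linear term, leaving $\E_{X_i}[\exp(\langle v,Z_i\rangle^2)] \le \E_{X_i}[\exp(\|v\|^2\|Z_i\|^2)]$. Since $\|v\|^2 \le 1/\sigma_i^2$, Jensen (concavity of $x\mapsto x^{\sigma_i^2\|v\|^2}$) gives the bound $\exp(\sigma_i^2\|v\|^2(1+K_i)) \le \exp(\sigma_i^2\|v\|^2 + K_i)$. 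If $\sigma_i\|v\| > 1$, use Young: $\langle v,Z_i\rangle \le \frac{\sigma_i^2\|v\|^2}{4}\cdot c + \frac{\|Z_i\|^2}{c\,\sigma_i^2}$; choosing $c=1$ yields $\exp(\sigma_i^2\|v\|^2/4)\exp(1+K_i) \le \exp(\tfrac{3}{4}\sigma_i^2\|v\|^2 + K_i)$ since $1 < \sigma_i^2\|v\|^2/2$. The small-regime constant needs care to land under $3/4$; we would run the small case with threshold $\sigma_i^2\|v\|^2 \le 4/3$-type choice, or sharpen via $e^a\le a+e^{9a^2/16}$. Matching the constant $3/4$ exactly is the delicate point of this item.

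\emph{Items 2 and 3.} For the tail, apply Markov to $\exp(\|Z_i\|^2/(2\sigma_i^2))$, whose conditional expectation by Jensen (exponent $1/2$) is at most $\exp((1+K_i)/2) \le 2\exp(K_i/2)$ as $e^{1/2}<2$; this gives \eqref{eq:tail-bdd}. For moments, integrate the tail: $\E_{X_i}\|Z_i\|^{2p} = \int_0^\infty 2p\,\epsilon^{2p-1}\Prob_{X_i}(\|Z_i\|>\epsilon)d\epsilon \le 4p\, e^{K_i/2}\int_0^\infty \epsilon^{2p-1}e^{-\epsilon^2/(2\sigma_i^2)}d\epsilon = 2p\,(2\sigma_i^2)^p\Gamma(p)\,e^{K_i/2}$, and $2^{p+1}p!\le (2p)^{p+1}$ closes it.

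\emph{Item 4.} This is the main obstacle: a dimension-dependent MGF bound for the norm-squared of an average of independent vectors. Plan: reduce to coordinates, $\|\overline Z\|^2 = \sum_{k\in[d]} \overline Z_{(k)}^2$; by convexity, $\exp(\lambda\|\overline Z\|^2)\le \frac1d\sum_k\exp(d\lambda \overline Z_{(k)}^2)$ is too lossy, so instead bound the MGF of $\|\overline Z\|^2$ via the Gaussian linearization trick: for $g\sim\mathcal N(0,I_d)$ independent, $\E_g\exp(\sqrt{2\lambda}\langle g,\overline Z\rangle) = \exp(\lambda\|\overline Z\|^2)$. Exchange expectations, use conditional independence and Item 1 with $v = \sqrt{2\lambda}g/n$ to get $\prod_i\exp(\frac{3\lambda\sigma_i^2\|g\|^2}{2n^2} + K_i)$—but here the additive $K_i$ (not squared) appears, so to obtain the stated $\rho_i^2\|\mathcal{T}\|^2$ form we instead use the $\tfrac{1}{2}$-weighted bound $K_i \le \tfrac{\sigma_i^2}{?}$-free AM-GM: first rescale, applying Item 1 to $Z_i$ only in the small regime and bounding $\rho_i\|\mathcal T\|$ terms via $ab\le a^2/2+b^2/2$ against $\sigma$. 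Then $\E_g\exp(c\|g\|^2)$ with $c = 3\lambda\sigma^2/(2n^2)\cdot$ stays finite and $\le 2^{d/2}$-type; with $\lambda = n/(96\sigma^2)$ it is $(1-2c)^{-d/2}$, which we bound by $2d$ style constants (via $(1-x)^{-d/2}\le e^{dx}$ and small $c$), accepting that reconciling the exact prefactor $2d$ and the $/32$ may require an alternative coordinate-wise union argument: bound each coordinate's MGF by Item 1, then use $\exp(\lambda\|\overline Z\|^2)\le\sum_k$ of tail integrals, giving the factor $d$.
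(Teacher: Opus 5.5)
\textbf{Item 4 has a genuine gap.} Items 2 and 3 coincide with the paper's arguments: Markov plus Jensen with exponent $1/2$, then the layer-cake formula and $\Gamma(p)\le p^p$.

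Both routes you propose for Item 4 use only the directional bound of Item 1. That bound gives every direction the full norm-level proxy $\sigma_i^2\|v\|^2$, so aggregating over $d$ directions puts $d$ into the exponent rather than in front. After Gaussian linearization you must evaluate $\E_g[\exp(\gamma\|g\|^2)]=(1-2\gamma)^{-d/2}$ with $\gamma=\frac{3\lambda\sum_i\sigma_i^2}{2n^2}$. At the prescribed $\lambda=\frac{n}{96\sigma^2}$, $\gamma$ does not depend on $d$; it equals $1/64$ under the normalization $\sigma^2=\frac1n\sum_i\sigma_i^2$ used in the paper's proof. So even with $\rho_i=0$ you only get $(32/31)^{d/2}=e^{\Theta(d)}$, which exceeds $2de$ once $d\gtrsim 500$.

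The estimate $(1-x)^{-d/2}\le e^{dx}$ cannot repair this, because $x$ is an absolute constant. Shrinking $\lambda$ by a factor $d$ would change the statement and turn the $\log(2d)$ factors in the main theorems into $d$. The coordinate-wise fallback fails for the same reason. $\exp(\lambda\|\overline Z\|^2)$ is a product over coordinates, and turning it into a sum requires $\|\overline Z\|^2\le d\max_k\overline Z_{(k)}^2$, which again moves $d$ into the exponent.

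There is a second problem. The linearization carries $\exp(\sum_i\rho_i\|\mathcal{T}(X_i)\|)$ linearly. Trading this for $\frac{\sum_i\rho_i^2\|\mathcal{T}(X_i)\|^2}{32\sigma^2}$ via AM--GM leaves an extra additive $8\sigma^2$ per agent, which the stated bound does not contain.

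\textbf{What is missing.} You need a tail bound for $\|\sum_i Z_i\|$ that uses norm-sub-Gaussianity directly, so that $d$ enters only through a trace. The paper forms the Hermitian dilation $\begin{bmatrix}0 & Z_i^\top\\ Z_i & 0\end{bmatrix}\in\R^{(d+1)\times(d+1)}$. It controls the matrix MGF of this dilation through the moment bounds of Item 3, then runs the matrix-Chernoff argument of Jin et al.\ conditionally on $\Fx$. This gives $\Prob_X(\|\overline Z\|>\epsilon)\le 2d\exp(-\frac{(n\epsilon-c)^2}{8n\sigma^2})$ with $c=\frac12\sum_i\rho_i\|\mathcal{T}(X_i)\|$. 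The $\rho_i$-terms therefore appear as a shift of the threshold, not as a multiplicative factor.

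The layer-cake formula and Lemma~\ref{lm:bdd-integral} then complete the square, giving $\E_X\|\overline Z\|^{2p}\le 2d\,e^{c^2/(8n\sigma^2)}p^{p+1}(16\sigma^2/n)^p$. Summing the Taylor series of $\exp(\|\overline Z\|^2/k^2)$ with $k^2=96\sigma^2/n$, and using $c^2\le\frac n4\sum_i\rho_i^2\|\mathcal{T}(X_i)\|^2$, yields exactly \eqref{eq:mgf-norm}.

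\textbf{Item 1 also needs a fix.} Your large-regime step is wrong as written. Your Young split gives $\exp(\frac{\sigma_i^2\|v\|^2}{4}+1+\rho_i\|\mathcal{T}(X_i)\|)$, which is below the target only when $\sigma_i^2\|v\|^2\ge 2$. The $e^a\le a+e^{9a^2/16}$ small regime, where Jensen needs $\frac{9}{16}\sigma_i^2\|v\|^2\le 1$, stops at $16/9$, so the window $(16/9,2)$ is uncovered. The paper closes this window with $\langle v,Z_i\rangle\le\frac{3\sigma_i^2\|v\|^2}{8}+\frac{2\|Z_i\|^2}{3\sigma_i^2}$ and Jensen at exponent $2/3$. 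That works precisely when $\sigma_i^2\|v\|^2>16/9$.
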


\begin{remark}\label{rmk:noise}
    In view of Lemma \ref{lm:noise-properties}, one can now infer the need for $\rho$ to be multiplied by the step-size in Assumption \ref{asmpt:noise}: the first property in Lemma \ref{lm:noise-properties} indicates that any factor multiplying $\langle v, Z\rangle$ will impact only the first term in the resulting bound, i.e., for any $a \in \R$
    \begin{equation*}
        \E_X\big[\exp\lp a\langle v, Z \rangle\rp \big] \leq \exp\lp \frac{3a^2\sigma^2\|v\|^2}{4} + \rho\|\mathcal{T}(X)\| \rp .
    \end{equation*} In particular, if $Z$ is the stochastic gradient noise, i.e., $Z(X) = g(X) - \nabla f(X)$, with $v = \mathcal{T}(X) = \nabla f(X)$ and $a = \alpha_t$, we then have
    \begin{equation*}
        \E_X\big[\exp\lp \alpha_t\langle \nabla f(X), Z \rangle\rp \big] \leq \exp\lp \frac{3\alpha_t^2\sigma^2\|\nabla f(X)\|^2}{4} + \rho\|\nabla f(X)\| \rp,
    \end{equation*} implying that we lose control of the growth of the gradient norm, which is necessary in the analysis. For comparison, this is much easier to handle in the MSE analysis, as the inner product vanishes, since $\nabla f(X)$ is $\Fx$-measurable and $Z$ is conditionally zero-mean.  
\end{remark}

\begin{remark}\label{rmk:dimension}
    The fourth property in Lemma \ref{lm:noise-properties} is crucial to achieving linear speed-up in the number of agents, as it shows that the noise parameter of the averaged noise scales as $\bigO\Big(\frac{\sigma^2}{n}\Big)$. However, the bound in \eqref{eq:mgf-norm} also introduces a mild logarithmic dependence on the problem dimension $d$ in the final bounds (see Theorems \ref{thm:main-non-cvx}-\ref{thm:main-PL} ahead), which is not present in MSE results. We note that this is a consequence of working with the MGF of the average of (relaxed) sub-Gaussian vectors, also present in, e.g., \cite{jin2019short,armacki2025dsgd}, and not a byproduct of our analysis. As noted in \cite{jin2019short}, it is not clear if this dependence can be fully removed.
\end{remark}

Next, we define some important concepts used in the analysis. Let $\oxt \coloneqq \frac{1}{n}\sum_{i \in [n]}\xit$ denote the (ideal) network-average model at time $t$. Using \eqref{eq:gt-track-update}-\eqref{eq:gt-model-update}, the fact that $W$ is doubly stochastic and $y^0_i = g_i^0 = 0$ for all $i \in [n]$, it can be readily seen that $\oyt = \ogt$ and hence
\begin{equation*}
    \oxtp = \oxt - \alpha_t\oyt = \oxt - \alpha_t\ogt,
\end{equation*} where $\ogt \coloneqq \frac{1}{n}\sum_{i \in [n]}\git$ is the network-average stochastic gradient. Additionally, let $\ozt \coloneqq \frac{1}{n}\sum_{i \in [n]}z_i^t$ and $\onft \coloneqq \frac{1}{n}\sum_{i \in [n]}\nabla f_i(\xit)$ denote the network-average noise and gradient, noting that $\ogt = \onft + \ozt$, and let $\sigma^2 \coloneqq \frac{1}{n}\sum_{i \in [n]}\sigma^2_i$ and $\sigmax^2 \coloneqq \max_{i \in [n]}\sigma^2_i$ denote the averaged and maximum noise parameter. We then have the following important result.

\begin{lemma}\label{lm:avg-noise-properties}
    If Assumption \ref{asmpt:noise} holds, then for any $t \geq 1$ and any $\Ft$-measurable $v \in \R^d$, the average noise $\ozt$ satisfies the following.
    \begin{enumerate}
        \item $\E\big[\exp\big(\langle v, \ozt \rangle \big) \: \vert \: \Ft \big] \leq \exp\bigg(\frac{3\sigma^2\|v\|^2}{4n} + \alpha_t^{2+\varepsilon}n\rho + \alpha_t^{2+\varepsilon}\rho\sum_{i \in [n]}\|\nabla f(\xit)\|^2 \bigg)$.
    
        \item $\E\bigg[\exp\bigg(\frac{n\|\ozt\|^2}{96\sigma^2} \bigg) \: \Big\vert \: \Ft \bigg] \leq 2d\exp\bigg(1 + \frac{\alpha_t^{4+2\varepsilon}\rho^2}{32\sigma^2}\sum_{i \in [n]}\|\nabla f(\xit)\|^2 \bigg)$.
    \end{enumerate}
\end{lemma}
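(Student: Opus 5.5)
The proof reduces both claims to Lemma \ref{lm:noise-properties}, applied agent by agent conditionally on $\Ft$. We set $X_i = \xit$, $Z_i = \zit$, $\rho_i = \alpha_t^{2+\varepsilon}\rho$ and $\mathcal{T}(\cdot) = \nabla f(\cdot)$. By item 3 of Assumption \ref{asmpt:noise}, each $\zit$ is $(\sigma_i,\rho_i)$-sub-Gaussian conditionally on $\Ft$. Items 1--2 of Assumption \ref{asmpt:noise} give that the $\zit$ are conditionally zero-mean and conditionally independent across agents given $\Ft$. This uses that $\Ft$ fixes all $\xit$ and the fresh samples $\xi_i^t$ are independent across $i$.

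\emph{Part 1.} Write $\langle v,\ozt\rangle = \sum_{i\in[n]}\langle v/n, \zit\rangle$. Since $v/n$ is $\Ft$-measurable and the summands are conditionally independent, the conditional MGF factorizes into a product over $i$. Applying \eqref{eq:mgf-inner-prod} to each factor gives the bound
\begin{equation*}
\exp\bigg(\frac{3\|v\|^2}{4n^2}\sum_{i\in[n]}\sigma_i^2 + \alpha_t^{2+\varepsilon}\rho\sum_{i\in[n]}\|\nabla f(\xit)\|\bigg).
\end{equation*}
Since $\sum_i\sigma_i^2 = n\sigma^2$, the first term equals $\frac{3\sigma^2\|v\|^2}{4n}$. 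For the second, use the elementary inequality $a \le 1 + a^2$ for $a\ge 0$, which gives
\begin{equation*}
\sum_i\|\nabla f(\xit)\| \le n + \sum_i\|\nabla f(\xit)\|^2.
\end{equation*}
This yields exactly the claimed bound.

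\emph{Part 2.} The plan is to invoke \eqref{eq:mgf-norm} with the same choices of $Z_i$ and $\rho_i$; the $\rho$-term then becomes $\frac{\alpha_t^{4+2\varepsilon}\rho^2}{32\sigma^2}\sum_i\|\nabla f(\xit)\|^2$. The delicate point is the normalization of $\sigma^2$: the statement of \eqref{eq:mgf-norm} uses $\sum_i\sigma_i^2$, whereas here $\sigma^2$ denotes the average, and a naive substitution appears to lose a factor of $n$ in the exponent. I expect this bookkeeping to be the main obstacle.

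To make the $n$-scaling transparent, I would re-derive the bound via the Gaussian linearization trick rather than rely on the constants of \eqref{eq:mgf-norm}. Take $g\sim\mathcal{N}(0,I_d)$ independent of everything, and $\lambda = n/(96\sigma^2)$. Then
\begin{equation*}
\E\big[\exp(\lambda\|\ozt\|^2)\,\vert\,\Ft\big] = \E_g\,\E\big[\exp(\sqrt{2\lambda}\langle g,\ozt\rangle)\,\vert\,\Ft\big].
\end{equation*}
Part 1, applied with $v=\sqrt{2\lambda}\,g$, bounds the inner expectation by $\exp\big(\frac{3\sigma^2\lambda\|g\|^2}{2n} + R\big)$, where $R$ collects the $\rho$-terms. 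Taking the outer expectation over $g$ leaves $(1 - 3\sigma^2\lambda/n)^{-d/2}e^{R} = (1-\tfrac{1}{32})^{-d/2}e^{R}$, which is at most $2d\,e^{1+\cdots}$.

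This route produces the $\rho$-term in the linear form $R$ from Part 1 rather than in the squared form $\frac{\rho^2}{32\sigma^2}\sum_i\|\nabla f(\xit)\|^2$. To obtain the squared form, I would follow the same coordinatewise argument that underlies \eqref{eq:mgf-norm}, namely Young's inequality applied to $\rho_i\|\nabla f(\xit)\|$ against the $\sigma^2$-scaled terms. Matching that form is where the remaining constant-chasing sits.
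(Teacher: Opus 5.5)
Your Part~1 is the paper's proof: factorize by conditional independence, apply item~1 of Lemma~\ref{lm:noise-properties} with $v/n$, use $\sum_i\sigma_i^2=n\sigma^2$, then $a\le 1+a^2$. Part~2, however, has a genuine gap. Your Gaussian linearization gives $\E[\exp(\frac{n\|\ozt\|^2}{96\sigma^2})\,\vert\,\Ft]\le(\frac{32}{31})^{d/2}e^{R}$. The claim that this is at most $2d\,e^{1+\cdots}$ is false for large $d$. Already with $\rho=0$ and $d=1000$ your bound is $e^{500\log(32/31)}\approx e^{15.9}\approx 7.8\times 10^{6}$, whereas $2de\approx 5.4\times 10^{3}$. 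This is not a matter of constants. Part~1 only certifies that $\ozt$ is sub-Gaussian with proxy $\sigma^2/n$ in every direction. A Gaussian vector with covariance $\frac{3\sigma^2}{2n}I_d$ satisfies that bound (with $\rho=0$), and its MGF of $\frac{n\|\cdot\|^2}{96\sigma^2}$ equals exactly $(\frac{32}{31})^{d/2}$. So no argument that passes only through Part~1 can produce the $2d$ factor. What produces it is that each $\|\zit\|$ itself is sub-Gaussian. The Hermitian-dilation matrix-MGF argument of \cite{jin2019short} behind \eqref{eq:mgf-norm} exploits exactly this; it is not a coordinatewise argument.

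Your fallback for the $\rho$-term also fails. Young's inequality on $u_i=\alpha_t^{2+\varepsilon}\rho\|\nabla f(\xit)\|$ gives at best $u_i\le\frac{u_i^2}{32\sigma^2}+8\sigma^2$. That leaves an additive $8n\sigma^2$ in the exponent (on top of $\alpha_t^{2+\varepsilon}n\rho$), which the claim does not permit. In the paper the squared form comes from a different mechanism. The tail bound \eqref{eq:prob-bdd-avg} contains the shift $c=\frac12\sum_i\rho_i\|\mathcal{T}(X_i)\|$, which Lemma~\ref{lm:bdd-integral} turns into the factor $\exp(\frac{c^2}{8n\sigma^2})$. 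Combined with $c^2\le\frac n4\sum_i\rho_i^2\|\mathcal{T}(X_i)\|^2$, this gives the claimed term. Finally, the obstacle that pushed you off the direct route is not real. The ``$\sigma^2=\sum_i\sigma_i^2$'' in the statement of item~4 is a typo: its proof sets $\sigma^2=\frac1n\sum_i\sigma_i^2$ before passing to $\overline{Z}$, so no factor of $n$ is lost. The paper's Part~2 is therefore just \eqref{eq:mgf-norm} with $\rho_i=\alpha_t^{2+\varepsilon}\rho$, $\mathcal{T}=\nabla f$ and $X_i=\xit$, using conditional independence of the $\zit$ given $\Ft$. You should take that route, or re-derive item~4 with $\sum_i\sigma_i^2=n\sigma^2$.
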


\subsection{Non-convex Costs}\label{subsec:non-conv}

In this section we present results for non-convex costs. Throughout this section, we use a fixed step-size, i.e., $\alpha_t \equiv \alpha$, for all $t \geq 1$. We start by providing the following descent inequality.

\begin{lemma}\label{lm:descent-inequality}
    Let Assumption \ref{asmpt:cost} hold. If $\alpha \leq \frac{1}{4L}$, we have, for any $t \geq 1$
    \begin{align*}
        f(\oxtp) \leq \underbrace{f(\oxt) - \frac{\alpha}{2}\| \nabla f(\oxt)\|^2 + \alpha\langle \nabla f(\oxt), \ozt \rangle + \alpha^2L\|\ozt\|^2}_{\text{centralized}} + \underbrace{\frac{\alpha L^2}{2n}\sum_{i \in [n]}\|\xit - \oxt\|^2 - \frac{\alpha}{4}\|\onft\|^2}_{\text{decentralized}}. 
    \end{align*}
\end{lemma}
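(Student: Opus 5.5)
The plan is the standard smoothness-based descent argument applied to the averaged iterate $\oxt$, whose dynamics are $\oxtp = \oxt - \alpha\ogt$ with $\ogt = \onft + \ozt$. Since each $f_i$ has $L$-Lipschitz gradient (Assumption \ref{asmpt:cost}), so does $f$. Hence
\begin{equation*}
f(\oxtp) \leq f(\oxt) - \alpha\langle \nabla f(\oxt), \onft + \ozt\rangle + \frac{\alpha^2 L}{2}\|\onft + \ozt\|^2 .
\end{equation*}
The noise inner product $-\alpha\langle \nabla f(\oxt), \ozt\rangle$ is kept as is. The sign is irrelevant for the later MGF analysis, and we may equivalently write it as $+\alpha\langle \nabla f(\oxt),\ozt\rangle$ after replacing $z$ by $-z$. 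More precisely, I would track the sign carefully: the statement has $+$, so either the paper's convention absorbs it or the inequality is meant with the symmetric noise term. I will reproduce the displayed form, noting that all later uses only bound its MGF, which is sign-invariant by Lemma \ref{lm:avg-noise-properties}.

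For the quadratic term I use $\|a+b\|^2 \leq 2\|a\|^2 + 2\|b\|^2$, giving $\alpha^2 L\|\onft\|^2 + \alpha^2 L\|\ozt\|^2$. The second piece is exactly the centralized noise term in the statement.

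For the deterministic inner product I use the polarization identity
\begin{equation*}
-\langle \nabla f(\oxt), \onft\rangle = -\frac{1}{2}\|\nabla f(\oxt)\|^2 - \frac{1}{2}\|\onft\|^2 + \frac{1}{2}\|\nabla f(\oxt) - \onft\|^2 .
\end{equation*}
Since $\nabla f(\oxt) = \frac{1}{n}\sum_i \nabla f_i(\oxt)$, Jensen's inequality and $L$-smoothness of each $f_i$ give
\begin{equation*}
\|\nabla f(\oxt) - \onft\|^2 \leq \frac{1}{n}\sum_{i\in[n]}\|\nabla f_i(\oxt)-\nabla f_i(\xit)\|^2 \leq \frac{L^2}{n}\sum_{i \in [n]}\|\xit - \oxt\|^2 .
\end{equation*}
Multiplying by $\alpha$, this yields $-\frac{\alpha}{2}\|\nabla f(\oxt)\|^2 - \frac{\alpha}{2}\|\onft\|^2 + \frac{\alpha L^2}{2n}\sum_i\|\xit-\oxt\|^2$.

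Finally, the two $\|\onft\|^2$ contributions combine to $-\frac{\alpha}{2}\|\onft\|^2 + \alpha^2 L\|\onft\|^2 = -\alpha\big(\tfrac12 - \alpha L\big)\|\onft\|^2$. The step-size condition $\alpha \leq \frac{1}{4L}$ gives $\frac12 - \alpha L \geq \frac14$, so the total is at most $-\frac{\alpha}{4}\|\onft\|^2$. Collecting all terms gives the claimed inequality.

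The only mildly delicate step is this last absorption: the $\|\onft\|^2$ term from the noise split must be dominated by the negative term from polarization. That is exactly where the constant $\frac14$ in the step-size condition enters. Everything else is routine.
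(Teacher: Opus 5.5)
Your argument is the paper's proof step for step: the descent lemma for $f$ along $\oxtp = \oxt - \alpha\ogt$, then Young's inequality $\|\ogt\|^2 \leq 2\|\onft\|^2 + 2\|\ozt\|^2$, then the polarization identity, then Jensen plus $L$-smoothness to get $\|\onft - \nabla f(\oxt)\|^2 \leq \frac{L^2}{n}\sum_{i \in [n]}\|\xit - \oxt\|^2$, and finally $\alpha \leq \frac{1}{4L}$ to absorb $\alpha^2 L\|\onft\|^2$ into $-\frac{\alpha}{4}\|\onft\|^2$. On the sign, both your derivation and the paper's give $-\alpha\langle \nabla f(\oxt), \ozt\rangle$, which is the form used later in \eqref{eq:start}, so state that version and drop the ``replace $z$ by $-z$'' remark: the $+$ in the display is a typo and that version is actually false (at consensus with $\ozt = -\onft$ you get $\oxtp = \oxt$, while the right-hand side equals $f(\oxt) - \alpha\big(\frac{7}{4} - \alpha L\big)\|\nabla f(\oxt)\|^2 < f(\oxt)$ whenever $\nabla f(\oxt) \neq 0$).
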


Lemma \ref{lm:descent-inequality} is a standard result and the starting point in our analysis. The right-hand side of the above inequality consists of terms that arise in centralized \sgd plus two terms stemming from the decentralized nature of \gtdsgd. The next result provides a bound on the consensus gap $\sum_{i \in [n]}\|\xit - \oxt\|^2$. Prior to stating the result, recall that the initial models $x^1_i \in \R^d$, $i \in [n]$, are selected deterministically and let $\Delta_x \coloneqq \frac{1}{n}\sum_{i \in [n]}\|x^1_i - \ox^1\|^2$ denote the initial consensus gap. We then have the following bound.

\begin{lemma}\label{lm:consensus-bound}
    Let Assumptions \ref{asmpt:network} and \ref{asmpt:cost} hold. If $\alpha \leq \frac{(1-\lambda^2)^2}{16\lambda^2L\sqrt{3}}$, we have, for any $T \geq 2$
    \begin{align*}
        &\quad\frac{1}{n}\sum_{t \in [T]}\sum_{i \in [n]}\|\xit - \oxt\|^2 \leq \frac{4\Delta_x}{(1-\lambda^2)} + \frac{32\alpha^2\lambda^2}{n(1-\lambda^2)^3}\sum_{i \in [n]}\|y^1_i - \oy^1\|^2 \\ 
        &+ \frac{512\alpha^2\lambda^4}{n(1-\lambda^2)^4}\sum_{t \in [T]}\sum_{i \in [n]}\|\zit\|^2 + \frac{768\alpha^4\lambda^4L^2}{(1-\lambda^2)^4}\sum_{t \in [T]}\Big(\|\onft\|^2 + \|\ozt\|^2 \Big).
    \end{align*}
\end{lemma}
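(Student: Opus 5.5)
We set up the standard stacked-vector recursions for the consensus gaps of the models and the trackers. Write $\bx^t,\by^t,\bg^t\in\R^{nd}$ for the stacked quantities, $\mathcal{W} = W\otimes I_d$ and $\mathcal{J}=J\otimes I_d$. Then $\bx^{t+1}=\mathcal{W}(\bx^t-\alpha\by^t)$ and $\by^{t}=\mathcal{W}(\by^{t-1}+\bg^t-\bg^{t-1})$. Subtracting averages and using $(\mathcal{W}-\mathcal{J})\mathcal{J}=0$ and $\|W-J\|=\lambda$ gives two coupled recursions:
\begin{align*}
\|\bx^{t+1}-\mathcal{J}\bx^{t+1}\| &\le \lambda\|\bx^t-\mathcal{J}\bx^t\|+\alpha\lambda\|\by^t-\mathcal{J}\by^t\|,\\
\|\by^{t+1}-\mathcal{J}\by^{t+1}\| &\le \lambda\|\by^t-\mathcal{J}\by^t\|+\lambda\|\bg^{t+1}-\bg^t\|.
\end{align*}
We then square each recursion using Young's inequality $(a+b)^2\le \frac{1+\lambda^2}{2\lambda^2}a^2+\frac{1+\lambda^2}{1-\lambda^2}b^2$. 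The contraction factor $\frac{1+\lambda^2}{2}$ then multiplies the squared consensus terms, and coefficients of order $\frac{\alpha^2\lambda^2}{1-\lambda^2}$ and $\frac{\lambda^2}{1-\lambda^2}$ multiply the forcing terms.

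Next we bound the tracker forcing term. Write $\bg^{t+1}-\bg^t=\nbftp-\nbft+\bz^{t+1}-\bz^t$, so that
\[
\|\bg^{t+1}-\bg^t\|^2\le 3L^2\|\bx^{t+1}-\bx^t\|^2+3\|\bz^{t+1}\|^2+3\|\bz^t\|^2 .
\]
For the model increment we use
\[
\bx^{t+1}-\bx^t=(\mathcal{W}-I)(\bx^t-\mathcal{J}\bx^t)-\alpha\mathcal{W}(\by^t-\mathcal{J}\by^t)-\alpha\mathcal{J}\by^t .
\]
Since $\|W-I\|\le 2$ and $\|\mathcal{J}\by^t\|^2=n\|\oyt\|^2=n\|\onft+\ozt\|^2\le 2n(\|\onft\|^2+\|\ozt\|^2)$, this bounds $\|\bx^{t+1}-\bx^t\|^2$ by a combination of the two consensus gaps plus $n\alpha^2(\|\onft\|^2+\|\ozt\|^2)$. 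This is exactly where the last group of terms in the statement originates.

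We then sum both squared recursions over $t\in[T]$, a telescoping/geometric-series argument with $C_x=\sum_t\|\bx^t-\mathcal{J}\bx^t\|^2$ and $C_y=\sum_t\|\by^t-\mathcal{J}\by^t\|^2$. Summing the $x$-recursion gives
\[
C_x\le \frac{2}{1-\lambda^2}\,n\Delta_x+\frac{4\alpha^2\lambda^2}{(1-\lambda^2)^2}\,C_y .
\]
Summing the $y$-recursion gives
\[
C_y\le \frac{2}{1-\lambda^2}\|\by^1-\mathcal{J}\by^1\|^2+\frac{c\lambda^2}{(1-\lambda^2)^2}\Big(L^2C_x+\alpha^2L^2C_y+\sum_t\|\bz^t\|^2+n\alpha^2L^2\sum_t(\|\onft\|^2+\|\ozt\|^2)\Big),
\]
where the $\bz^{T+1}$ term can be dropped or absorbed by truncating the recursion at $T$. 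We substitute the $C_y$ bound into the $C_x$ bound. The resulting self-referential term $\frac{\alpha^2\lambda^4L^2}{(1-\lambda^2)^4}C_x$ and the $\alpha^2L^2C_y$ term are each made at most $\frac12$ of the left side by the step-size condition $\alpha\le\frac{(1-\lambda^2)^2}{16\sqrt3\lambda^2L}$. The $\sqrt3$ in this condition comes from the factor 3 in the noise/gradient split above. Moving these terms to the left-hand side and dividing by $n$ yields the stated constants $4$, $32$, $512$ and $768$ up to tracking.

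The main obstacle is this closing of the coupled system. It requires bookkeeping the Young constants carefully so that the feedback coefficients are simultaneously below $1/2$ under the single stated condition on $\alpha$, while producing exactly the powers $(1-\lambda^2)^{-3}$ and $(1-\lambda^2)^{-4}$. If the factor of $\|W-I\|\le2$ inflates constants, we would instead write $\bx^{t+1}-\bx^t$ through the differences of consensus errors plus the average step $-\alpha\oyt$, which also yields an $\alpha^2$-weighted $C_y$ contribution.
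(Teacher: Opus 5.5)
Your plan is essentially the paper's proof. It uses the same squared one-step recursions for the model and tracker consensus gaps (Lemmas~\ref{lm:consensus-one-step-model} and~\ref{lm:consensus-one-step-tracker}), a bound on the model increment, summation over $t$, and closing of the resulting $2\times 2$ linear system. Eliminating $C_y$ by substitution is the same linear algebra as the paper's explicit $(\bI_2-\bA)^{-1}$. Keeping the tracker's $\alpha^2 C_y$ self-term until after summation, instead of absorbing it per step into the factor $\frac{3+\lambda^2}{4}$ as the paper does, even saves a factor of $2$.

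The one thing to fix is your primary bound on $\bxtp-\bxt$, because it does not deliver the stated constants.
\begin{itemize}
\item Using $\|W-I\|\le 2$ with a three-term split gives $12\|\bxt-\obxt\|^2$.
\item The cross coefficient in the $C_y$ inequality is then $\frac{144\lambda^2L^2}{(1-\lambda^2)^2}$.
\item The product of the two cross coefficients is $\frac{576\alpha^2\lambda^4L^2}{(1-\lambda^2)^4}$, which the stated step size bounds only by $\frac34$, not $\frac12$.
\item Closing the loop then yields, e.g., a $\Delta_x$ coefficient of about $9.4/(1-\lambda^2)$ rather than $4/(1-\lambda^2)$.
\end{itemize}

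Your fallback is exactly the paper's bound \eqref{eq:tracker-bd-pt2}, namely $\|\bxtp-\bxt\|^2\le 6\|\bxt-\obxt\|^2+\frac{6\alpha^2\lambda^2}{1-\lambda^2}\|\byt-\obyt\|^2+3\alpha^2\|\obgt\|^2$, and it is what you need. With it, combined with your three-way split of $\bg^{t+1}-\bg^t$:
\begin{itemize}
\item the cross coefficient drops to $\frac{72\lambda^2L^2}{(1-\lambda^2)^2}$;
\item the $C_y$ self-term is at most $\frac{3}{32}C_y$;
\item the feedback on $C_x$ is at most $\frac{12}{29}C_x$;
\item the four constants come out below $4$, $32$, $512$ and $768$ (roughly $3.4$, $15$, $181$ and $542$).
\end{itemize}
So commit to that decomposition as the main step rather than presenting it as a contingency.
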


Lemma \ref{lm:consensus-bound} provides a deterministic bound on the consensus gap induced by \gtdsgd. Let $\Delta_f \coloneqq f(\ox^1) - \fs$ and $\Delta_g \coloneqq \frac{1}{n}\sum_{i \in [n]}\|\nabla f_i(x^1_i) - \frac{1}{n}\sum_{j \in [n]} \nabla f_j(x_j^1)\|^2$ denote the global optimality gap and the average gradient heterogeneity at the initial models, respectively. Armed with Lemmas \ref{lm:descent-inequality}-\ref{lm:consensus-bound}, we are ready to state the main result.

\begin{theorem}\label{thm:main-non-cvx}
    Let Assumptions \ref{asmpt:network}, \ref{asmpt:cost} and \ref{asmpt:noise} hold. If for any $T \geq 2$ the step-size is chosen as $\alpha = \min\Big\{\frac{\sqrt{n}}{\sqrt{T}},C \Big\}$, where $0 < C \leq \min\Big\{\frac{(1-\lambda^2)^2}{16\lambda^2L\sqrt{3}}, \frac{1-\lambda^2}{4\lambda L\sqrt[4]{12}},\frac{(1-\lambda^2)^{4/3}}{4\lambda^{4/3}L\sqrt[3]{12}}, \frac{\sqrt[3]{n}(1-\lambda^2)^{4/3}}{\lambda^{4/3}\sigmax^{2/3}L^{2/3}\sqrt[3]{1614}}, \linebreak \frac{1}{4L}, \frac{n}{9\sigma^2} \sqrt{\frac{n}{282e\sigma^2dL}}, \frac{\sigma\sqrt{32}}{\rho}, \sqrt[1+\varepsilon]{\frac{1}{16n\rho^2}} \Big\}$, then for any $\delta \in (0,1)$, with probability at least $1 - \delta$
    \begin{align*}
          &\quad\quad\quad\frac{1}{nT}\sum_{t \in [T]}\sum_{i \in [n]}\|\nabla f(\xit)\|^2 \leq \bigO\bigg(\frac{\log(\nicefrac{1}{\delta}) + \Delta_f + \sigma^2L\big(1 + \log(2d)\big)}{\sqrt{nT}} \\ 
          &+ \frac{n^{\nicefrac{(3+\varepsilon)}{2}}\rho}{T^{\nicefrac{(1+\varepsilon)}{2}}}+ \frac{\log(\nicefrac{1}{\delta}) + \Delta_f}{CT} + \frac{L^2\Delta_x}{n(1-\lambda^2)T} + \frac{n\sigma^2\lambda^4L^2}{(1-\lambda^2)^4T} + \frac{nL^2\Delta_g}{d(1-\lambda^2)^3T^2}\bigg).
    \end{align*}
\end{theorem}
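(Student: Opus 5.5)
Our approach is to sum the descent inequality of Lemma \ref{lm:descent-inequality} over $t \in [T]$ and plug in the deterministic consensus bound of Lemma \ref{lm:consensus-bound}. The stochastic terms that remain are then controlled through an exponential-supermartingale (MGF) argument combined with Markov's inequality.

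\textbf{Step 1: telescoping.} Summing Lemma \ref{lm:descent-inequality} over $t \in [T]$ and using $f(\ox^{T+1}) \geq \fs$ gives
\begin{equation*}
\frac{\alpha}{2}\sum_{t}\|\nabla f(\oxt)\|^2 + \frac{\alpha}{4}\sum_t\|\onft\|^2 \leq \Delta_f + \alpha\sum_t\langle \nabla f(\oxt),\ozt\rangle + \alpha^2L\sum_t\|\ozt\|^2 + \frac{\alpha L^2}{2n}\sum_{t,i}\|\xit-\oxt\|^2 .
\end{equation*}

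\textbf{Step 2: consensus term.} We substitute Lemma \ref{lm:consensus-bound} for the last term.
\begin{itemize}
\item The term $\frac{768\alpha^5\lambda^4L^4}{(1-\lambda^2)^4}\sum_t\|\onft\|^2$ is absorbed into $\frac{\alpha}{4}\sum_t\|\onft\|^2$, which is possible by the constraint $C \leq \frac{1-\lambda^2}{4\lambda L\sqrt[4]{12}}$.
\item The extra $\alpha^5\sum_t\|\ozt\|^2$ is merged with the $\alpha^2L\sum_t\|\ozt\|^2$ term.
\item For the initial tracker term, note that $y^1_i = \sum_j w_{ij} g^1_j$. Hence $\sum_i\|y^1_i-\oy^1\|^2 \leq \lambda^2\sum_i\|g_i^1 - \og^1\|^2$, which splits into $n\Delta_g$ plus noise at $t=1$; the latter joins the $\sum_{t,i}\|\zit\|^2$ term.
\end{itemize}

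\textbf{Step 3: from $\oxt$ to $\xit$.} We use $\|\nabla f(\xit)\|^2 \leq 2\|\nabla f(\oxt)\|^2 + 2L^2\|\xit-\oxt\|^2$, re-invoking Lemma \ref{lm:consensus-bound}. This leaves a quantity to bound in high probability of the form
\begin{equation*}
\Phi_T \coloneqq \alpha\sum_t\langle\nabla f(\oxt),\ozt\rangle - c_1\alpha\sum_t\|\nabla f(\oxt)\|^2 + c_2\alpha^2 L\sum_t\|\ozt\|^2 + c_3\frac{\alpha^3\lambda^4L^2}{n(1-\lambda^2)^4}\sum_{t,i}\|\zit\|^2 .
\end{equation*}

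\textbf{Step 4: MGF argument (main obstacle).} We want to show that $\E[\exp(\Phi_T)]$ is at most $\exp(\text{deterministic terms})$. Conditioning on $\Ft$ and proceeding backward in time, we handle the three noise pieces as follows.
\begin{itemize}
\item \emph{Inner product.} By Lemma \ref{lm:avg-noise-properties}(1) with $v = \alpha\nabla f(\oxt)$, it contributes $\frac{3\alpha^2\sigma^2}{4n}\|\nabla f(\oxt)\|^2 + \alpha^{2+\varepsilon}\rho(n + \sum_i\|\nabla f(\xit)\|^2)$.
\item \emph{Averaged noise $\|\ozt\|^2$.} Via Lemma \ref{lm:avg-noise-properties}(2) and Jensen/H\"older, with $\alpha^2 L \lesssim \frac{n}{96\sigma^2}$ (ensured by the constraint on $C$ involving $d$ and $\sigma^2$), each step costs $\log(2d) + 1$ times $\frac{\alpha^2 L\sigma^2}{n}$-scaled factors.
\item \emph{Individual noise $\|\zit\|^2$.} By Assumption \ref{asmpt:noise}.3 and independence across agents, with $\alpha^3\lambda^4L^2/(1-\lambda^2)^4 \lesssim \sigma_{\max}^{-2}$ (the $\sqrt[3]{n}$ constraint on $C$), each step costs $\mathcal{O}(1 + \alpha^{2+\varepsilon}\rho\|\nabla f(\xit)\|)$ per agent.
\end{itemize}
The delicate point is that the $\rho$-terms produce $\alpha^{2+\varepsilon}\rho\|\nabla f(\xit)\|$ and $\alpha^{4+2\varepsilon}\rho^2\|\nabla f(\xit)\|^2$, which must be cancelled by the negative drift $-c_1\alpha\|\nabla f(\oxt)\|^2$. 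Since the drift involves $\oxt$ rather than $\xit$, we must convert via smoothness and consensus, which is why the constraints $\alpha^{1+\varepsilon} \leq 1/(16n\rho^2)$ and $\alpha \leq \sigma\sqrt{32}/\rho$ appear. The linear term $\rho\alpha^{2+\varepsilon}\|\nabla f\|$ is handled by Young's inequality into $\alpha^{2+\varepsilon}\rho(1+\|\nabla f\|^2)$, leaving the residual $n\rho\alpha^{2+\varepsilon}T$.

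\textbf{Step 5: Markov and step-size choice.} Iterating the tower property gives $\E e^{\Phi_T} \leq \exp(\ldots)$, and Markov's inequality yields, with probability at least $1-\delta$, that $\Phi_T \leq \log(1/\delta) + \ldots$. Dividing by $\alpha T$ and inserting $\alpha = \min\{\sqrt{n/T}, C\}$ produces the listed terms:
\begin{itemize}
\item $\sqrt{n/T}$ gives $(\log(1/\delta)+\Delta_f+\sigma^2L(1+\log 2d))/\sqrt{nT}$;
\item $C$ gives $(\log(1/\delta)+\Delta_f)/(CT)$;
\item $n\rho\alpha^{1+\varepsilon}$ gives $n^{(3+\varepsilon)/2}\rho/T^{(1+\varepsilon)/2}$;
\item the consensus remnants give the $\Delta_x$, $\sigma^2\lambda^4$ and $\Delta_g$ terms with the stated powers of $1-\lambda^2$.
\end{itemize}
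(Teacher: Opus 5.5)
Your proposal follows essentially the same route as the paper: telescoped descent, the pathwise bound of Lemma \ref{lm:consensus-bound}, compensated MGF bounds for the inner-product, $\|\ozt\|^2$ and $\|\bzt\|^2$ terms, the tower rule, and Markov's inequality. The differences are bookkeeping: the paper keeps the three compensated sums $\oCT,\oDT,\oET$ separate and merges them once via the three-factor H\"older inequality, and it moves the $\rho$-compensators into the pathwise inequality, where the left-hand side $\sum_{t,i}\|\nabla f(\xit)\|^2$ absorbs them. That is the precise form your ``convert via smoothness and consensus'' step must take, since cancelling against the $\oxt$-drift inside a per-step supermartingale would leave an uncancelled random consensus term $\alpha^{2+\varepsilon}\rho L^2\|\bxt-\obxt\|^2$.

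Note also that neither argument produces the $1/n$ on the $\Delta_x$ term or the $1/d$ on the $\Delta_g$ term of the displayed bound. Lemma \ref{lm:consensus-bound} contributes $\frac{4\alpha L^2\Delta_x}{1-\lambda^2}$, with $\Delta_x$ already an average, so these look like typos in the statement rather than something your Step 5 needs to recover.
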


Theorem \ref{thm:main-non-cvx} establishes the HP convergence guarantee of \gtdsgd for general non-convex costs, with leading term of the order $\bigO\Big(\frac{\log(\nicefrac{d}{\delta})}{\sqrt{nT}}\Big)$. Some remarks are now in order.

\begin{remark}
    The bound in Theorem \ref{thm:main-non-cvx} captures the dependence on many problem parameters, such as optimality and consensus gaps $\Delta_f$ and $\Delta_x$, heterogeneity $\Delta_g$, smoothness $L$, noise $\sigma^2$ and network connectivity through $(1-\lambda^2)$. Compared to MSE result \cite{ran-improved}, the bound exhibits the same dependence on standard problem parameters, e.g., the network connectivity only affects higher-order terms, with an additional mild logarithmic dependence on the problem dimension, as discussed in Remark \ref{rmk:dimension}. Further, we can see that the relaxed sub-Gaussainity parameters $\rho$ and $\varepsilon$ affect the final rate and the step-size, via the terms $\frac{\rho n^{\nicefrac{(3+\varepsilon)}{2}}}{T^{\nicefrac{(1+\varepsilon)}{2}}}$ and $\min\Big\{\frac{\sigma\sqrt{32}}{\rho},\sqrt[1+ \varepsilon]{\frac{1}{16n\rho^2}}\Big\}$, respectively, doing so in a coupled manner, i.e., appearing together. In the special case of standard sub-Gaussian noise, i.e., $\rho = 0$, the first term becomes zero, while the second becomes infinity, hence we get the expression $C \leq \min\{A,+\infty\} = A$, where $A$ captures the other problem related terms in the definition of $C$, independent of both $\varepsilon$ and $\rho$. 
\end{remark}

\begin{remark}
    The leading term in the above rate is of the order $\bigO\Big(\frac{1}{\sqrt{nT}}\Big)$, showing that linear speed-up is achieved in the HP sense. Moreover, the rate in Theorem \ref{thm:main-non-cvx} can be used to derive the transient time of \gtdsgd in the HP sense, which is of order $\bigO\Big(\max\Big\{\frac{n^3}{(1-\lambda^2)^8}, \rho^{\frac{2}{\varepsilon}}n^{\frac{4+\varepsilon}{\varepsilon}}\Big\}\Big)$. The second term in the expression stems from the relaxed sub-Gaussianity condition and can be eliminated when reduced to standard sub-Gaussianity, i.e., $\rho = 0$. In this case, our transient time becomes $O\Big(\frac{n^3}{(1-\lambda^2)^8}\Big)$, showing the same dependence on the number of agents and a slightly worse dependence on network connectivity compared to the transient time $\bigO\Big(\frac{n^3}{(1-\lambda^2)^6}\Big)$ of \gtdsgd implied by the MSE rate in \cite{ran-improved}. This gap stems from the fact that the MSE analysis deals directly with the quantities of interest, while the HP analysis deals with their MGFs, significantly complicating the analysis and resulting in slightly looser bounds. The reader is referred to Appendix \ref{sup:trans-time} for detailed derivations of the transient time and further discussion on the gap in transient times stemming from MSE and HP results.
\end{remark}

\begin{remark}\label{rmk:transient-time-dsgd}
    Compared to recent works \cite{dsgd-high-prob,armacki2025dsgd}, which provide HP guarantees for vanilla \dsgd, our results differ in the following. First, we establish convergence under relaxed assumptions on the cost, by removing the bounded gradients/heterogeneity conditions required in \cite{dsgd-high-prob,armacki2025dsgd}, showing that the benefits of bias-correction for decentralized methods carry over in the HP sense. Next, we consider a more general noise condition, which allows the noise MGF to grow with the norm of the gradient, subsuming the standard sub-Gaussian noise used in \cite{dsgd-high-prob,armacki2025dsgd}. Further, the use of GT necessitates a significantly different and more involved analysis to account for the joint dynamics of the state and tracker variables. Finally, we show that \gtdsgd achieves linear speed-up, with a slightly worse transient time than that of \dsgd from \cite{armacki2025dsgd}. However, this is a known downside of \gtdsgd-specific analysis, that can potentially be rectified via a unified analysis framework, which is beyond the scope of the current work.\footnote{For the MSE case, the \gtdsgd-specific analysis, e.g., \cite{ran-improved}, implies the transient time $\bigO\Big(\frac{n^3}{(1-\lambda^2)^6}\Big)$, which is strictly worse than the \dsgd transient time $\bigO\Big(\frac{n^3}{(1-\lambda^2)^4}\Big)$ from \cite{pmlr-v119-koloskova20a}. As mentioned in Table \ref{tab:non-conv}, the transient time of \gtdsgd can be improved to $\mathcal{O}\Big(\max\Big\{\frac{n^3}{(1-\lambda)^2}, \frac{n}{(1-\lambda)^{8/3}}\Big\}\Big)$, by considering a unified framework, e.g., \cite{unified-refined}, which subsumes \gtdsgd as a special case. However, this would require a fundamentally different analysis approach and different recursions, which is beyond the scope of our work and an interesting future direction.}
\end{remark}

\begin{remark}
    Compared to the MSE results for \gtdsgd in \cite{ran-improved}, our rate is of the same order, exhibiting an additional mild dependence on the dimension and a slightly worse transient time, as discussed in previous remarks. On the technical side, our results are derived under different noise conditions, which require fundamentally different analysis approach and working directly with the MGF of the process of interest, as well as different intermediate results, highlighted and discussed further throughout the Appendix (see, e.g., Remarks \ref{rmk:sharp-constants} and \ref{rmk:trans-time-PL-analysis}). 
\end{remark}

\subsection{P\L{} Costs}\label{subsec:PL}

In this section we provide guarantees for P\L{} costs. Throughout this section, we assume that agents have a shared model initialization, i.e., $x^1_i = x_j^1$, for all $i,j \in [n]$, which can be easily achieved, e.g., by deploying a gossip algorithm before the start of training, and implies that $\Delta_x = 0$. Furthermore, we use a time-varying step-size schedule $\{\alpha_t\}_{t \in \N}$, which will be specified later. We start by providing a refined descent inequality for P\L{} costs.

\begin{lemma}\label{lm:descent-inequality-PL}
    Let Assumptions \ref{asmpt:cost} and \ref{asmpt:PL} hold. If $\alpha_t \leq \frac{1}{2L}$, for all $t \geq 1$, we then have
    \begin{align*}
        f(\oxtp) - \fs \leq \underbrace{(1-\alpha_t\mu)\big(f(\oxt) - \fs\big) - \alpha_t\langle \nabla f(\oxt), \ozt \rangle + \alpha_t^2L\|\ozt\|^2}_{\text{centralized}} + \underbrace{\frac{\alpha_t L^2}{2n}\sum_{i \in [n]}\|\xit - \oxt\|^2}_{\text{decentralized}}. 
    \end{align*}
\end{lemma}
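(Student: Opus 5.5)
The plan is to follow the same route as the descent inequality of Lemma \ref{lm:descent-inequality}, now with a time-varying step-size, and then apply Assumption \ref{asmpt:PL} to the global gradient at $\oxt$.

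\textbf{Step 1: smoothness.} Since each $f_i$ has $L$-Lipschitz gradient, so does $f$. The average iterate satisfies $\oxtp = \oxt - \alpha_t\ogt$ with $\ogt = \onft + \ozt$. The descent lemma then gives
\begin{equation*}
f(\oxtp) \leq f(\oxt) - \alpha_t\langle \nabla f(\oxt), \onft\rangle - \alpha_t\langle \nabla f(\oxt), \ozt\rangle + \tfrac{\alpha_t^2 L}{2}\|\onft + \ozt\|^2.
\end{equation*}

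\textbf{Step 2: bound the quadratic and deterministic terms.} Bound the quadratic term by $\|a+b\|^2 \leq 2\|a\|^2 + 2\|b\|^2$, which gives $\alpha_t^2L\|\onft\|^2 + \alpha_t^2L\|\ozt\|^2$. For the deterministic inner product, use the polarization identity
\begin{equation*}
-\langle a,b\rangle = \tfrac12\|a-b\|^2 - \tfrac12\|a\|^2 - \tfrac12\|b\|^2 .
\end{equation*}
This yields
\begin{equation*}
-\alpha_t\langle \nabla f(\oxt),\onft\rangle = -\tfrac{\alpha_t}{2}\|\nabla f(\oxt)\|^2 - \tfrac{\alpha_t}{2}\|\onft\|^2 + \tfrac{\alpha_t}{2}\|\nabla f(\oxt) - \onft\|^2 .
\end{equation*}
The last term is handled by Jensen and $L$-smoothness of each $f_i$:
\begin{equation*}
\|\nabla f(\oxt) - \onft\|^2 \leq \tfrac1n\sum_{i\in[n]}\|\nabla f_i(\oxt) - \nabla f_i(\xit)\|^2 \leq \tfrac{L^2}{n}\sum_{i\in[n]}\|\xit-\oxt\|^2 .
\end{equation*}
This produces exactly the decentralized consensus term in the statement.

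\textbf{Step 3: drop the $\onft$ terms.} Collect the coefficients of $\|\onft\|^2$, namely $-\frac{\alpha_t}{2} + \alpha_t^2L = -\frac{\alpha_t}{2}(1 - 2\alpha_t L)$. Under $\alpha_t \leq \frac{1}{2L}$ this is nonpositive, so the term can be dropped. This is why the condition here is $\frac{1}{2L}$, rather than the $\frac{1}{4L}$ of Lemma \ref{lm:descent-inequality}, where a negative $\|\onft\|^2$ term was retained.

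\textbf{Step 4: apply the PL condition.} Subtract $\fs$ from both sides and use Assumption \ref{asmpt:PL} at $\oxt$:
\begin{equation*}
-\tfrac{\alpha_t}{2}\|\nabla f(\oxt)\|^2 \leq -\alpha_t\mu\big(f(\oxt)-\fs\big).
\end{equation*}
This gives the factor $(1-\alpha_t\mu)$ on $f(\oxt)-\fs$. The remaining terms are the noise inner product $-\alpha_t\langle \nabla f(\oxt),\ozt\rangle$, the term $\alpha_t^2L\|\ozt\|^2$, and the consensus term, which together give the claimed inequality.

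The only place that needs care is the sign bookkeeping in Step 3, i.e., making sure the coefficient of $\|\onft\|^2$ is nonpositive under the stated step-size; everything else is routine.
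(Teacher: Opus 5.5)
Your proposal is correct and follows the paper's proof essentially step for step. The paper likewise reuses the smoothness descent step, the split $\|\ogt\|^2 \leq 2\|\onft\|^2 + 2\|\ozt\|^2$, and the polarization identity with Jensen and $L$-smoothness for the consensus term; it then drops the $\|\onft\|^2$ term because $1-2\alpha_t L \geq 0$, applies the P\L{} condition at $\oxt$, and subtracts $\fs$.
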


Similarly to Lemma \ref{lm:descent-inequality} in the previous section, Lemma \ref{lm:descent-inequality-PL} is a deterministic inequality used as a starting point in our analysis, again consisting of terms that arise from centralized \sgd plus the consensus gap. Our next result bounds the consensus gap, however, due to the additional structure of P\L{} costs, we provide a stronger result than in Lemma \ref{lm:consensus-bound}, by directly bounding the MGF of the consensus gap. The formal result is stated next.

\begin{lemma}\label{lm:MGF-decay-PL}
    Let Assumptions \ref{asmpt:network}-\ref{asmpt:noise} hold. Define $K_{t+1} \coloneqq (t+t_0+2)K$, for some $K > 0$. If the step-size is chosen as $\alpha_t = \frac{a}{\mu(t+t_0)}$, for some $a > 0$ and $t_0 \geq \max\Big\{\frac{16a^2\lambda^2L^2K}{\mu^2(1-\lambda^2)^4},\frac{12}{1-\lambda^2}\Big\}$, we then have, for all $t \geq 1$ and any $\nu \in (0,1]$
    \begin{align*}
        \E\bigg[\exp\bigg(\frac{\nu K_{t+1}}{n}\|\bxtp - \obxtp\|^2\bigg)\bigg] \leq \exp\big(\nu\alpha_{t+1}^2K_{t+1}C  \big),
    \end{align*} where $C \coloneqq \max\Big\{\frac{32\lambda^4L^2(\sigma^2 + 4L\Delta_f)}{(1-\lambda^2)^2}, \frac{3072\sigma^2\lambda^4L^2(2 + \log(2d))}{(1-\lambda^2)^4}, \frac{6\lambda^2L^2\Delta_f}{(1-\lambda^2)^4}, \frac{6\lambda^2L^2(2+\log(2d))}{(1-\lambda^2)^4} \Big\}$.
\end{lemma}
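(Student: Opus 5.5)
We prove the bound by induction on $t$, tracking the consensus gap jointly with the tracker gap. In stacked notation, the model and tracker consensus errors $\bxt - \obxt$ and $\byt - \obyt$ obey the linear recursions
\begin{align*}
\bxtp - \obxtp &= (\bW - \bJ)\big(\bxt - \obxt\big) - \alpha_t(\bW-\bJ)\big(\byt - \obyt\big),\\
\bytp - \obytp &= (\bW-\bJ)\big(\byt - \obyt\big) + (\bW-\bJ)\big(\bg^{t+1} - \bgt\big).
\end{align*}
Both right-hand sides are contracted by the factor $\lambda$. Applying Young's inequality with weights $\frac{1+\lambda^2}{2\lambda^2}$ and $\frac{1+\lambda^2}{1-\lambda^2}$ gives
\begin{equation*}
\|\bxtp - \obxtp\|^2 \leq \tfrac{1+\lambda^2}{2}\|\bxt - \obxt\|^2 + \tfrac{2\alpha_t^2\lambda^2}{1-\lambda^2}\|\byt - \obyt\|^2,
\end{equation*}
and an analogous inequality for the tracker. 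In the tracker inequality, $\bg^{t+1} - \bgt$ splits into three pieces. The gradient difference is at most $L\|\bxtp - \bxt\|$, which is in turn bounded by the consensus gap, the tracker gap and $\alpha_t\|\onbft\|$. The other two pieces are the noises $\mathbf{z}^{t+1}$ and $\bzt$.

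Rather than $\|\bxt-\obxt\|^2$ alone, we would work with a Lyapunov combination
\begin{equation*}
V_t = \|\bxt - \obxt\|^2 + c\,\alpha_t^2\|\byt - \obyt\|^2, \qquad c \asymp \frac{\lambda^2}{(1-\lambda^2)^2}.
\end{equation*}
Because $\alpha_t = \frac{a}{\mu(t+t_0)}$ and $t_0 \geq \frac{12}{1-\lambda^2}$, the ratio $\alpha_{t+1}/\alpha_t$ is close to $1$. The condition $t_0 \geq \frac{16a^2\lambda^2L^2K}{\mu^2(1-\lambda^2)^4}$ makes the $L^2\alpha_t^2$ feedback terms small. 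Together these should give the deterministic recursion
\begin{equation*}
V_{t+1} \leq \Big(1-\tfrac{1-\lambda^2}{4}\Big)V_t + \tfrac{c'\lambda^2\alpha_t^2}{(1-\lambda^2)^2}\Big(\|\mathbf{z}^{t+1}\|^2 + \|\bzt\|^2 + nL\big(f(\oxt)-\fs\big) + n\|\ozt\|^2\Big).
\end{equation*}
Here $\|\onbft\|^2$ is bounded by $2L^2\|\bxt-\obxt\|^2/n + 4L(f(\oxt)-\fs)$ using smoothness.

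Next we pass to MGFs. Multiply the recursion by $\nu K_{t+2}/n$ and exponentiate. Since $K_{t+2}/K_{t+1} \leq 1 + \frac{1}{t+t_0}$ and $t_0$ is large, the product of this ratio with the contraction factor stays below $1$. Conditioning on $\mathcal{F}_{t+1}$ and applying H\"older's inequality separates the factors:
\begin{enumerate}
\item The $V_t$ factor is handled by the induction hypothesis via Jensen's inequality. The exponent is multiplied by a number $<1$, so $\nu$ can be replaced by a smaller $\nu'\in(0,1]$.
\item The local noise factors $\|z_i^{t+1}\|^2$ are controlled by Assumption \ref{asmpt:noise}. We need $\nu K\alpha_t^2\lambda^2/(1-\lambda^2)^2 \lesssim 1/\sigma_i^2$, which holds because $\alpha_t^2 K_{t}\to 0$ under the constraint on $t_0$.
\item The averaged-noise factor uses the second part of Lemma \ref{lm:avg-noise-properties}; this is where $\log(2d)$ enters the constant $C$.
\item The optimality-gap factor $f(\oxt)-\fs$ requires a bound $\E[\exp(\theta(f(\oxt)-\fs))] \lesssim \exp(\theta\Delta_f + \theta\sigma^2\ldots)$. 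We would either prove this simultaneously in the induction using Lemma \ref{lm:descent-inequality-PL} or cite the companion MGF bound (Lemma \ref{lm:bdd-MGF-PL}). The $\Delta_f$ terms in $C$ suggest that the two are handled jointly.
\end{enumerate}
The $\rho$-dependent terms from Assumption \ref{asmpt:noise} carry the factor $\alpha_t^{2+\varepsilon}\|\nabla f(\xit)\|$. We absorb them using $\|\nabla f(\xit)\| \leq 1+\|\nabla f(\xit)\|^2$ together with smoothness, folding them into the consensus and optimality-gap factors.

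Collecting terms should give an exponent of the form $\nu' K_{t+2}\alpha_{t+1}^2 C\cdot(\text{factor}<1) + \nu K_{t+2}\alpha_{t+1}^2(\ldots)$. This is at most $\nu\alpha_{t+2}^2K_{t+2}C$ provided $C$ dominates each source term, which is exactly how the four entries of the max in $C$ arise. The base case uses $\Delta_x=0$ and the fact that $y^1$ depends only on the first-step noise, which gives a bound of the same form.

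The main obstacle is closing the induction. The contraction $1-\frac{1-\lambda^2}{4}$ must beat both the growth $K_{t+2}/K_{t+1}$ and the H\"older exponents, while keeping every exponent below the thresholds $1/\sigma_i^2$ and $n/(96\sigma^2)$ required by the noise MGF bounds. A further difficulty is the coupling with the optimality-gap MGF. We would first try a two-sequence induction, proving the consensus MGF bound and the bound on $\E[\exp(\theta(f(\oxt)-\fs))]$ together, choosing the H\"older weights as fixed fractions such as $\frac{1}{4}$ per factor.
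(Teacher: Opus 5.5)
Your base case fails, and it cannot be fixed within the stated constant. It is not true that $y^1$ depends only on the first-step noise. Since $\by^1=\bW\bg^1$, $\bJ\bW=\bJ$ and $\bW-\bJ$ annihilates consensual vectors, we have $\by^1-\oby^1=(\bW-\bJ)\big(\nbf^1-\onbf^1\big)+(\bW-\bJ)\bz^1$. Under the shared initialization, the deterministic part is $(\bW-\bJ)\,\mathrm{col}\big(\nabla f_i(\ox^1)-\nabla f(\ox^1)\big)$. This is the initial gradient heterogeneity, and it is not controlled by $\Delta_f$, $L$, $\sigma$ or $d$.

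Because the noise is zero-mean, Jensen gives $\E\big[\exp\big(\frac{\nu K_1}{n}V_1\big)\big]\geq\exp\big(\frac{\nu K_1c\alpha_1^2}{n}\|(\bW-\bJ)(\nbf^1-\onbf^1)\|^2\big)$. So your base case needs $C\geq\frac{c}{n}\|(\bW-\bJ)(\nbf^1-\onbf^1)\|^2$, which can be as large as $c\lambda^2\Delta_g\asymp\frac{\lambda^4\Delta_g}{(1-\lambda^2)^2}$. No entry of the stated $C$ supplies this.

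In fact the statement is false as written. Take $n=2$, $d=1$, $f_{1,2}(x)=\frac{L}{2}x^2\pm bx$ (so $\mu=L$), $x^1_i=0$ and a noiseless oracle. Then $\Delta_f=0$ and $C$ does not depend on $b$. On the other hand, $\bx^2-\obx^2=-\alpha_1(\bW-\bJ)\bW\nbf^1$ gives $\frac{1}{2}\|\bx^2-\obx^2\|^2=\alpha_1^2\lambda^4b^2$. The case $t=1$ therefore demands $\alpha_1^2\lambda^4b^2\leq\alpha_2^2C$, which fails for large $b$.

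The paper's proof has the same hole, hidden in the base case of Lemma \ref{lm:bdd-MGF-PL}. There the stacked \emph{local} gradients are bounded as $\|\nbf^1\|^2\leq2L^2\|\bx^1-\obx^1\|^2+2n\|\nabla f(\ox^1)\|^2$. That bound holds for $\onbf^1$ but not for $\nbf^1$, so $a_3$, and with it $C=\frac{8a_3\lambda^2L^2}{(1-\lambda^2)^4}$, lack a $\Delta_g$ term. What can be proved is the lemma with a $\Delta_g$-dependent entry added to $C$.

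With $C$ enlarged this way, your Lyapunov route is a legitimate alternative to the paper's. The paper's proof is much shorter because it never tries to make the tracker decay:
it applies H\"older with $p=\frac{4}{3+\lambda^2}$ to Lemma \ref{lm:consensus-one-step-model};
it uses the induction hypothesis on the consensus factor;
it bounds the tracker factor by the time-uniform estimate $\E[\exp(\nu Y^t)]\leq\exp(\nu a_3)$ from Lemma \ref{lm:bdd-MGF-PL};
it gets the $\alpha_t^2$ decay solely from the $\alpha_t^2$ multiplying the tracker term, closing with $C=\frac{8a_3\lambda^2L^2}{(1-\lambda^2)^4}$ and $t_0\geq\frac{12}{1-\lambda^2}$.

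Your $\alpha_t^2$-weighted $V_t$ builds the same mechanism into a single recursion and treats the noise inside the induction. If you cite Lemma \ref{lm:bdd-MGF-PL} for the optimality gap, it already gives you the tracker bound, and the paper's two-factor H\"older suffices. If you prove the optimality-gap bound jointly, you are essentially redoing Lemma \ref{lm:bdd-MGF-PL} with a decaying consensus component.

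Two smaller points. First, ``$\alpha_t^2K_t\to0$'' does not keep your noise coefficients below $1/\sigma_i^2$. This must already hold at $t=1$, which requires either an extra lower bound on $t_0$ involving $\sigmax^2$ or a cap on $\nu$. The paper's proof likewise silently inherits $\nu\leq\frac{n}{96\sigmax^2\lambda^2}$ and the step-size conditions of Lemma \ref{lm:bdd-MGF-PL}, so ``any $\nu\in(0,1]$'' is not what is proved.

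Second, you will not recover the four entries of $C$ verbatim. Each carries a factor $L^2$ that comes from the paper writing $\frac{2\alpha_t^2\lambda^2L^2}{(1-\lambda^2)^3}Y^t$, whereas Lemma \ref{lm:consensus-one-step-model} gives no $L^2$ there. Your noise forcing yields $L$-free entries, and for small $L$ the stated $C$ is violated even with identical local costs.
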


Lemma \ref{lm:MGF-decay-PL} shows that the MGF of the scaled consensus gap is bounded by $\bigO\big(e^{\alpha_t^2K_t}\big)$. Denote by $\kappa \coloneqq \frac{L}{\mu}$ the condition number. We are now ready to state the main result.

\begin{theorem}\label{thm:main-PL}
    Let Assumptions \ref{asmpt:network}-\ref{asmpt:noise} hold and let the step-size be given by $\alpha_t = \frac{a}{\mu(t+t_0)}$, where $a \geq 6$ and $t_0 \geq \max\Big\{\frac{12}{1-\lambda^2}, \frac{64na^3\kappa^3\lambda^2L}{(1-\lambda^2)^4}, \frac{9216\sigmax^4\lambda^4}{n^2}, \frac{576a^4\kappa^2}{\mu^2}, \frac{384a^2\sigma^2\kappa}{\mu}, \frac{a\sqrt[4+2\varepsilon]{12\rho^2L}}{\mu}, \frac{a\sqrt[5+2\varepsilon]{18\rho^2L^2}}{\mu}, \linebreak 2a\kappa, \frac{6a}{\mu}, \frac{2a\sqrt{L}\max\{4\sigma\sqrt{3},3\sqrt{2L},48\sigma\lambda\sqrt{3L}\}}{\mu\sqrt{n}}, \frac{a\sqrt[2+\varepsilon]{\rho}\max\{\sigma_{\max}^{\nicefrac{2}{(2+\varepsilon)}},1/\sqrt[2+\varepsilon]{32\sigma^2}\}}{\mu}, \frac{a\sqrt[1+\varepsilon]{4n\rho}\max\{1,1/\sqrt[1+\varepsilon]{\mu},\sqrt[1+\varepsilon]{4\kappa}\}}{\mu}, \linebreak 2a\kappa\max\big\{3L,640\lambda^2\big\}, \frac{2a\lambda\kappa\sqrt{3}\max\{\sqrt{\kappa},8\lambda L\sqrt{5}\}}{(1-\lambda^2)^2} \Big\}$. Then for any $\delta \in (0,1)$ and $T \geq 2$, with probability at least $1 - \delta$ 
    \begin{align*}
        \frac{1}{n}\sum_{i \in [n]}\big(f(x_i^T)-\fs\big) = \mathcal{O}\bigg(\frac{\nu^{-1}\log(\nicefrac{2}{\delta}) + \sigma^2\kappa\big(1+\log(2d)\big)/\mu}{n(T+t_0)} + \frac{\kappa C(1 + \kappa)}{\mu(T+t_0)^2} + \frac{(t_0+1)^{a/2}\Delta_f}{(T+t_0)^{a/2}} \bigg),
    \end{align*} where $\nu \coloneqq \min\Big\{1,\frac{n}{96\sigmax^2\lambda^2},\frac{\mu}{24a^2\kappa}\Big\}$ and $C > 0$ is defined in Lemma \ref{lm:MGF-decay-PL}.
\end{theorem}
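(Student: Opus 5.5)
The plan is to bound the moment-generating function of the scaled optimality gap of the network average, $\Gamma_t \coloneqq f(\oxt)-\fs$, by induction, and then use Markov's inequality. Along the way the consensus gap is transferred to individual agents via smoothness. Concretely, I would define $K_t \coloneqq (t+t_0+1)K$ with $K$ proportional to $n$ (so that the final bound has the $1/n$ factor), and aim to show, for all $t \geq 1$ and the $\nu$ in the statement,
\begin{equation*}
\E\big[\exp\big(\nu K_{t}\Gamma_t\big)\big] \leq \exp\Big(\nu K_t\Big(\frac{c_1\sigma^2(1+\log(2d))}{\mu n (t+t_0)} + \frac{c_2\kappa C}{\mu(t+t_0)^2} + \frac{(t_0+1)^{a/2}\Delta_f}{(t+t_0)^{a/2}}\Big)\Big),
\end{equation*}
which is the content one would expect of the announced Lemma \ref{lm:mgf-bound-str-cvx}.

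For the inductive step, I would start from Lemma \ref{lm:descent-inequality-PL}, multiply by $\nu K_{t+1}$ and exponentiate. The resulting product has three random factors. The first is the inner-product factor $\exp(-\nu K_{t+1}\alpha_t\langle\nabla f(\oxt),\ozt\rangle)$. I would handle it with the first part of Lemma \ref{lm:avg-noise-properties}, which contributes $\exp\big(\tfrac{3\sigma^2\nu^2K_{t+1}^2\alpha_t^2}{4n}\|\nabla f(\oxt)\|^2\big)$. Using $\|\nabla f(\oxt)\|^2\le 2L\Gamma_t$, this is absorbed into the contraction factor $(1-\alpha_t\mu)$, provided $\nu K\alpha_t$ is small; this is where $\nu\le \mu/(24a^2\kappa)$ and the $t_0$ conditions enter. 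The same part of the lemma also produces the $\rho$ terms $\alpha_t^{2+\varepsilon}\rho\sum_i\|\nabla f(\xit)\|^2$. I would bound these by $2n\|\nabla f(\oxt)\|^2+2L^2\|\bxt-\obxt\|^2$, so they are again absorbed for $t_0$ large, which explains the $\rho$-dependent lower bounds on $t_0$. The second factor is the noise-norm factor $\exp(\nu K_{t+1}\alpha_t^2L\|\ozt\|^2)$, which I would control with the second part of Lemma \ref{lm:avg-noise-properties} (the $2d$ and $\log(2d)$ terms come from here). The third is the consensus factor, controlled by Lemma \ref{lm:MGF-decay-PL}.

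Since these factors are coupled, I would separate them by Hölder's inequality into three conditional expectations. The Hölder exponents would be absorbed into the constant in $\nu$, e.g. the $n/(96\sigmax^2\lambda^2)$ entry. After conditioning on $\Ft$ and separating, I would get roughly
\begin{equation*}
\E[e^{\nu K_{t+1}\Gamma_{t+1}}]\le \E\big[e^{\nu K_{t+1}(1-\alpha_t\mu/2)\Gamma_t}\big]\,(2d)^{\cdot}e^{c\nu K_{t+1}\alpha_t^2L\sigma^2/n}\,e^{c\nu\alpha_t^{2}K_{t+1}L^2 C}.
\end{equation*}
The leftover MGF of $\Gamma_t$ has exponent $K_{t+1}(1-\alpha_t\mu/2)\le K_t$ when $a\ge 6$. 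So Jensen's inequality (the power $\le1$) together with the induction hypothesis closes the recursion. Unrolling $\prod(1-a/(2(s+t_0)))\le((t_0+1)/(t+t_0))^{a/2}$ then gives the three terms in the statement.

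To conclude, I would apply Markov's inequality to $\exp(\nu K_T\Gamma_T)$ with threshold $\log(2/\delta)/(\nu K_T)$, which yields a bound with probability $1-\delta/2$. Next, using $f(x_i^T)-\fs\le 2\Gamma_T+L\|x_i^T-\oxt\|^2$ (from $L$-smoothness and Young's inequality), I would bound the average consensus term by applying Markov's inequality to Lemma \ref{lm:MGF-decay-PL} with probability $\delta/2$. This contributes $\log(2/\delta)/(\nu K_T)+\alpha_T^2C$, i.e. the $\kappa C/(\mu(T+t_0)^2)$ term, and a union bound finishes the proof. I expect the main obstacle to be the inductive MGF step. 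The Hölder splitting must keep the exponent of $\Gamma_t$ at most $K_t$ while absorbing both the inner-product term (Remark \ref{rmk:noise}) and the $\rho$-growth terms, and the long list of $t_0$ conditions is exactly what makes the constants in that absorption work.
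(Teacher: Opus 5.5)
Your architecture is the paper's. Your $\nu K_t\Gamma_t$ is the paper's $\nu F_t$ with $F_t=n(t+t_0)\big(f(\oxt)-\fs\big)$. Lemma~\ref{lm:mgf-bound-str-cvx} is only the generic unrolling lemma, applied with rate $a/2-1$, which is why $a\geq 6$. Your closing step is identical to the paper's: Markov, then $f(x_i^T)-\fs\le 2\Gamma_T+L\|x_i^T-\overline{x}^T\|^2$, then Markov on Lemma~\ref{lm:MGF-decay-PL}, then a union bound.

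\textbf{The gap.} It sits in the step you flag yourself, decoupling $\Gamma_t$ from the consensus gap, and as written it fails. After conditioning on $\Ft$, both $\Gamma_t$ and $\|\bxt-\obxt\|^2$ are $\Ft$-measurable. They can therefore only be separated by H\"older on the full expectation, $\E[e^{A_t+B_t}]\le(\E e^{pA_t})^{1/p}(\E e^{qB_t})^{1/q}$ with $A_t=\nu K_{t+1}(1-\alpha_t\mu/2)\Gamma_t$. Closing the induction requires $pK_{t+1}(1-\alpha_t\mu/2)\le K_t$, and the slack there is only about $(a/2-1)/(t+t_0)$. A constant $p>1$ therefore breaks the induction for large $t$. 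It also cannot be ``absorbed into $\nu$'': the hypothesis is at a fixed $\nu$, so absorbing $p$ would need the hypothesis at $p\nu$, then $p^2\nu$, and so on. Your displayed bound, with the MGF of $\Gamma_t$ un-inflated and the consensus factor already deterministic, is not attainable.

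\textbf{The missing idea.} You need a time-varying H\"older split that spends exactly this slack. The paper takes $p=1+\alpha_t\mu/4$ and $q=1+4/(\alpha_t\mu)$.
\begin{itemize}
\item $q$ times the consensus coefficient is about $\frac{4aL^2}{\mu}(t+t_0+1)$, which grows linearly in $t$. This is why Lemma~\ref{lm:MGF-decay-PL} is stated at the growing scale $K_t=(t+t_0+1)K$.
\item The lemma is invoked with $K=4na\kappa L$, which is the source of the condition $t_0\geq 64na^3\kappa^3\lambda^2L/(1-\lambda^2)^4$.
\item Raising to the power $1/q\le\alpha_t\mu/4$ then leaves a per-step cost $\nu\mu\alpha_t^3K_tC/4=O\big(\nu n\kappa^2C/(\mu(t+t_0)^2)\big)$.
\end{itemize}

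\textbf{Why your display is off.} Your consensus cost $c\nu\alpha_t^2K_{t+1}L^2C$ is of order $1/(t+t_0)$, a factor $t+t_0$ too large. That is what you get by applying Lemma~\ref{lm:MGF-decay-PL} at its full scale. But the coefficient $\nu K_{t+1}\alpha_tL^2/(2n)$ of the consensus term is essentially constant in $t$, since $K_{t+1}\alpha_t\approx Ka/\mu$. By Jensen, the lemma therefore only costs $\alpha_t^2C$ times that constant coefficient. Unrolled against the contraction $1-\Theta(1/(t+t_0))$, a per-step cost of order $1/(t+t_0)$ sums to an $O(1)$ term in the log-MGF. Dividing by $\nu K_T$ this gives a term of order $\kappa^2C/(T+t_0)$. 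That would put the $(1-\lambda^2)^{-4}$ dependence of $C$ into the leading order, without the $1/n$, contradicting the bound you claim.
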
 

Theorem \ref{thm:main-PL} establishes HP convergence guarantees of the last iterate of \gtdsgd for P\L{} costs, with an improved rate of the order $\bigO\Big(\frac{\log(\nicefrac{d}{\delta})}{nT}\Big)$. Some remarks are now in order.

\begin{remark}
    Theorem \ref{thm:main-PL} utilizes the additional structure of the cost to provide an improved rate compared to Theorem \ref{thm:main-non-cvx}, of the order $\bigO\Big(\frac{1}{nT}\Big)$, which is optimal for P\L{}/strongly convex costs. The bound in Theorem \ref{thm:main-PL} captures the dependence on the same problem parameters as in the MSE result \cite{ran-improved}, with an additional $\log(d)$ factor, which is again a consequence of sub-Gaussianity and the HP analysis. We can also see that the relaxed sub-Gaussainity parameters $\rho$ and $\varepsilon$ affect the constant $t_0$ in a couple manner, i.e., they appear jointly and can be readily eliminated under the standard sub-Gaussian condition, i.e., when $\rho = 0$. 
\end{remark}

\begin{remark}\label{rmk:trans-time-PL}
    The leading term is of the order $\bigO\Big(\frac{1}{nT}\Big)$, showing that linear speed-up is achieved in the HP sense. Moreover, it can be shown that the rate in Theorem \ref{thm:main-PL} results in the transient time $\bigO\Big(\frac{n^{(a+2)/(a-2)}}{(1-\lambda^2)^{4a/(a-2)}}\Big)$ for \gtdsgd in the HP sense. Choosing $a = \frac{4+2\epsilon}{\epsilon}$, for any $\epsilon \in (0,1]$, we get the transient time $\bigO\Big(\frac{n^{1+\epsilon}}{(1-\lambda^2)^{4 + 2\epsilon}}\Big)$ which can be made arbitrarily close to $\bigO\Big(\frac{n}{(1-\lambda^2)^{4}}\Big)$, at the cost of slightly larger multiplicative constants in the rate. Compared to the transient time $\bigO\Big(\frac{n}{(1-\lambda^2)^3}\Big)$ of \gtdsgd implied by the MSE rate in \cite{ran-improved}, our result (almost) matches the dependence on the number of agents, with a slightly worse dependence on the network connectivity, again stemming from differences in noise conditions and the analysis. For a detailed derivation of the transient time, the reader is referred to Appendix \ref{sup:trans-time}.
\end{remark}

\begin{remark}
    Compared to HP guarantees for vanilla \dsgd in \cite{dsgd-high-prob,armacki2025dsgd}, we again relax the assumptions, by removing the bounded gradients/strong convexity of individual costs required in \cite{dsgd-high-prob,armacki2025dsgd}, showing that the benefits of bias-correction are maintained in the HP sense. Further, compared to \cite{dsgd-high-prob}, who analyze HP convergence of \dsgd for P\L{} costs and noise that asymptotically vanishes,\footnote{The authors in \cite{dsgd-high-prob} make two strong assumptions in the case of P\L{} costs, namely that: (i) the local gradients are uniformly bounded path-wise, i.e., $\|\nabla f_i(\xit)\| \leq G$ for every $i \in [n]$ and $t \geq 1$, almost surely; (ii) the noise is $(\alpha_t\sigma)$-sub-Gaussian, i.e., $\E\Big[\exp\Big(\frac{\|\zit\|^2}{\alpha_t^2\sigma^2}\Big)\Big] \leq \exp(1)$, where $\alpha_t = \frac{1}{t+t_0}$ is the step-size. This condition implies (via Markov's inequality) that $\Prob\Big(\|\zit\|^2 \leq \frac{\sigma^2(\log(\nicefrac{1}{\delta) + 1)}}{(t + t_0)^2} \Big) \geq 1 - \delta$, for any $\delta \in (0,1)$.} we significantly relax the noise condition, while compared to \cite{armacki2025dsgd}, who consider strongly convex costs and sub-Gaussian noise, we simultaneously relax the cost and noise condition, by studying P\L{} costs and relaxed sub-Gaussian noise. As discussed in Table \ref{tab:PL} and Remark \ref{rmk:transient-time-dsgd}, while our transient times for \gtdsgd are worse than those for \dsgd in \cite{armacki2025dsgd}, this is a byproduct of the \gtdsgd-specific analysis.  
\end{remark}

\section{Numerical Results}\label{sec:num}

In this section we provide numerical results. Subsection \ref{subsec:num-synth} presents results on synthetic data, while Subsection \ref{subsec:num-real} presents results on real data.

\subsection{Synthetic Data}\label{subsec:num-synth}

\paragraph{Methodology.} We consider an illustrative case of local quadratic costs, given by $f_i(x) = \frac{1}{2}x^\top A_ix + b_i^\top x$, where $A_i \in \R^{d \times d}$ is positive definite, making each $f_i$ strongly convex. The weight matrix $W \in \R^{n \times n}$ is computed from an undirected graph $G = (V,E)$, using the Metropolis-Hastings weight scheme, e.g., \cite{XIAO200465}. When queried by user $i \in [n]$ in iteration $t \geq 1$, the \sfo returns a noisy gradient of $f_i$ evaluated at $\xit$, i.e., $\git = A_i\xit + b_i + \zit$, where $\zit \in \R^d$ is a zero-mean Gaussian random vector. We compare the performance of \gtdsgd and \dsgd, with both using a time-varying step-size $\alpha_t = \frac{1}{t+1}$ and shared initialization $x^1_i = 0$, for all $i \in [n]$, with the aim of testing the following facets of our theory.
\begin{enumerate}
    \item \emph{Exponentially decaying tails} - we aim to verify that the tail probability decays at an exponential scale.

    \item \emph{Linear speed-up} - we aim to verify that the tail probability decays faster as the number of agents increases.  
\end{enumerate}

\begin{figure*}[t]
\centering
\begin{tabular}{ccc}
\includegraphics[scale=0.3]{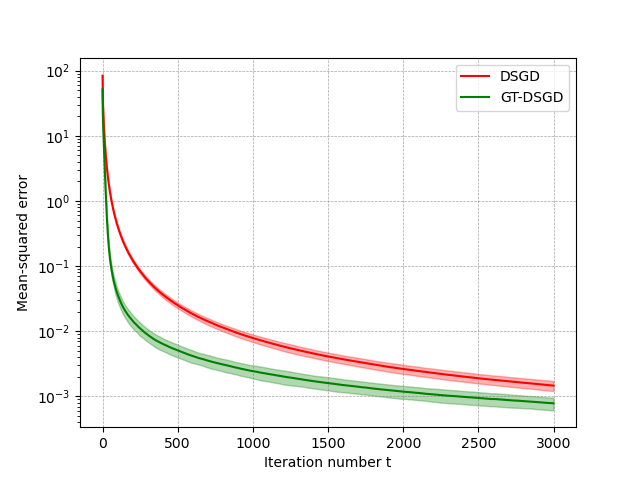}
&
\includegraphics[scale=0.3]{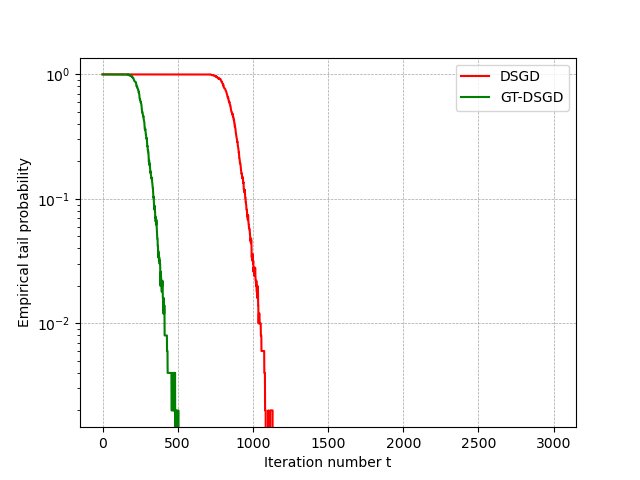}
&
\includegraphics[scale=0.3]{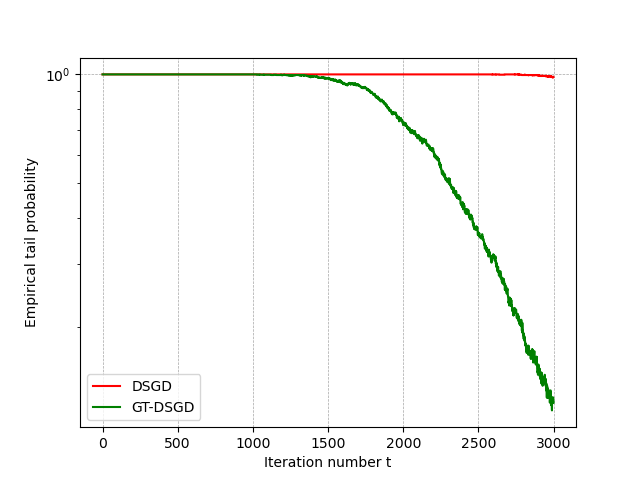}
\end{tabular}
\caption{Exponential tail decay on synthetic data. Left to right: MSE performance and tail decay with threshold $\epsilon = \big\{0.01,0.001\big\}$. We can see that \gtdsgd achieves faster tail decay, across all values of threshold $\epsilon$, illustrating the benefit of bias-correction in the HP sense.}
\label{fig:synth-exp}
\end{figure*}

To verify the two facets, we use the \emph{empirical tail probability} $\Prob^t_{n,\epsilon}$, computed as follows. We first run each method for $T$ iterations and repeat it over $R$ runs. Next, for any $\epsilon > 0$ and each $t \in [T]$, we compute $\Prob^t_{n,\epsilon} = \frac{1}{R}\sum_{r \in [R]}\mathbb{I}\Big(\frac{1}{n}\sum_{i \in [n]}\|x_i^{t,r} - x^\star\|^2 > \epsilon\Big)$, where $x_i^{t,r} \in \R^d$ is the model of user $i$ in iteration $t$ and run $r$, with $x^\star = \argmin_{x \in \R^d}f(x)$ and $\mathbb{I}(A)$ being the indicator of event $A$. The empirical tail probability is a proxy to the true tail probability and is the main metric in our experiments. To further illustrate our results, we also compute the \emph{empirical mean-squared error} $\E^t_n$, which is the optimality gap at time $t$ averaged across all agents and runs, i.e., $\E^t_n = \frac{1}{nR}\sum_{i \in [n]}\sum_{r \in [R]}\|x_i^{t,r} - x^\star\|^2$. We next present the results.

\paragraph{Exponentially decaying tails.} To verify that the (empirical) tail probability decays at an exponential rate, we consider a fixed network of $n = 10$ agents, communicating over a ring graph. For local costs, each matrix $A_i$ is generated using python's \texttt{sklearn} library function \texttt{make\_sparse\_spd\_matrix}, with dimension $d = 50$ and value $\texttt{alpha} = 0.9$, while vectors $b_i$ are drawn from a multivariate normal distribution $\mathcal{N}(\mathbf{0}_d,\sigma_i^2 I_d)$, where $\sigma_i^2 = i$ for $i \in \{1,\ldots,10\}$, ensuring that the data is heterogeneous across agents. We run both algoriths for $T = 3000$ iterations and repeat across $R = 500$ runs. To compute the empirical probability, we consider threshold values $\epsilon = \big\{0.01, 0.001\big\}$. The results are presented in Figure \ref{fig:synth-exp}, where the plots show the MSE and tail behaviour for different values of $\epsilon$, left to right. We can see that the empirical tail probability of \gtdsgd decays exponentially fast for all values of $\epsilon$, as predicted by our theory. Moreover, the tails of \gtdsgd decay faster than those of \dsgd, showing the benefit of bias-correction is maintained in the HP sense.   

\paragraph{Linear speed-up.} To verify that \gtdsgd achieves linear speed-up in the number of agents, we consider three networks, with $n = \{10, 25,50\}$ agents. In all three cases, agents communicate over Erd\H{o}s--R\'enyi graphs with Metropolis-Hastings weights that satisfy $\lambda = \|W - J\| \approx 0.9$. The local quadratic costs are generated as follows: the matrices $A_i$ are fixed across agents, i.e., $A_i \equiv A = S + I_d$, where $S \in \R^{d \times d}$ is generated using \texttt{sklearn}'s \texttt{make\_sparse\_spd\_matrix}, with $d = 50$ and $\texttt{alpha} = 0.9$, while $b_i$'s are given by $b_i = \beta_i\mathbf{1}_d$, where $\beta_i \in \{-2,-1,0,1,3\}$ are selected uniformly at random, in equal proportion across agents (i.e., one fifth of agents has $\beta_i = -2$, one fifth has $\beta_i = -1$, etc). Generating the network and cost in this manner ensures that network connectivity and data heterogeneity are constant across all three experiments, allowing us to properly capture the effect of linear speed-up. We run \gtdsgd for $T = 1500$ iterations and repeat across $R = 250$ runs. To compute the empirical probability, we again use thresholds $\epsilon = \big\{0.01, 0.001\big\}$. The results are presented in Figure \ref{fig:synth-speed}, where the plots left to right show the MSE and tail probability for different values of $\epsilon$. We can again see that the empirical tail probability decays exponentially fast for all values of $\epsilon$, with the decay being consistently faster for larger number of agents, demonstrating linear speed-up.

\begin{figure*}[t]
\centering
\begin{tabular}{ccc}
\includegraphics[scale=0.3]{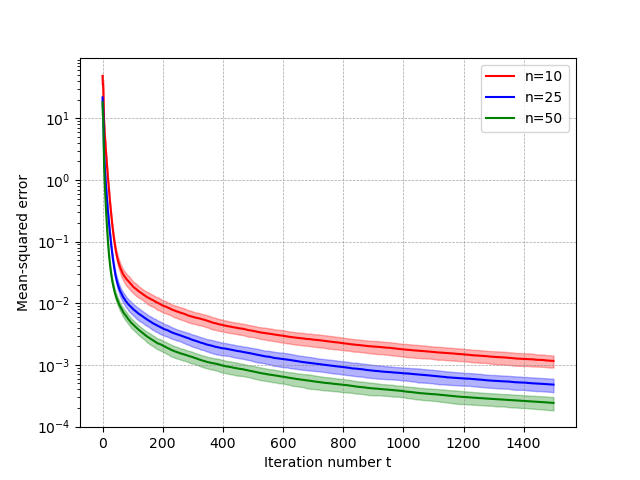}
&
\includegraphics[scale=0.3]{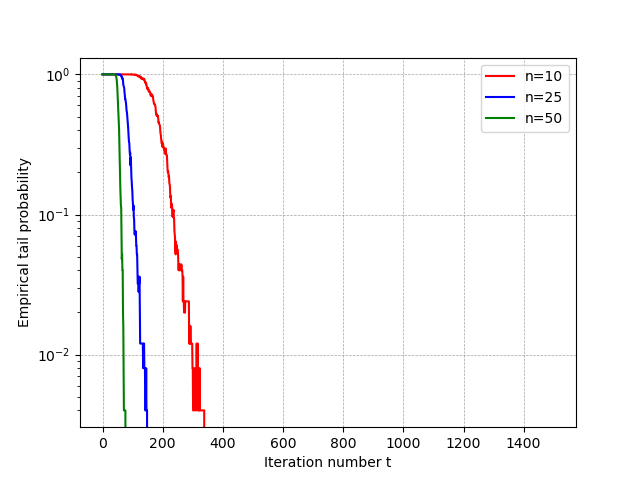}
&
\includegraphics[scale=0.3]{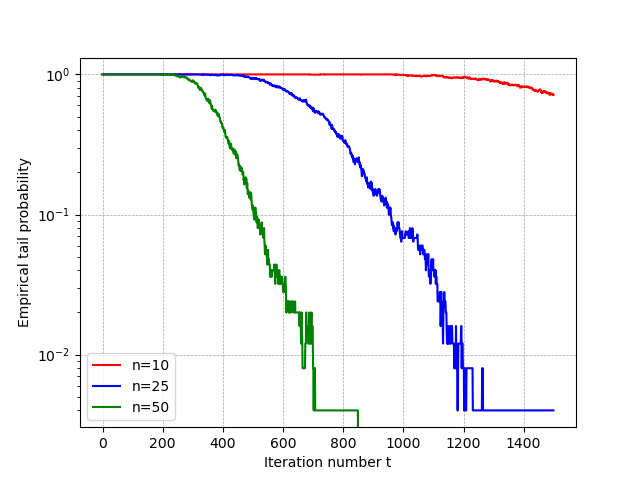}
\end{tabular}
\caption{Linear speed-up on synthetic data. Left to right: MSE performance and tail decay with threshold $\epsilon = \big\{0.01,0.001\big\}$. We can see that the tails of \gtdsgd decay faster for larger networks, across all values of threshold $\epsilon$, verifying that linear speed-up is achieved.}
\label{fig:synth-speed}
\end{figure*}

\subsection{Real Data}\label{subsec:num-real}

\paragraph{Methodology.} We next validate our theory on a non-convex problem with real data. In particular, we consider a binary classification task using logistic regression with a non-convex regularizer, e.g., \cite{antoniadis2011penalized}, with agents' local costs given by
\begin{equation*}
    f_i(x) = \frac{1}{m_i}\sum_{r \in [m_i]} \log\big(1+\exp(-y_{i,r}\langle h_{i,r}, x\rangle)\big) + \eta \sum_{k \in [d]} \frac{[x]_k^2}{1+[x]_k^2},     
\end{equation*} where $h_{i,r} \in \mathbb{R}^{d}$ and $y_{i,r} \in \{+1, - 1\}$ are respectively the feature vector and the associated label, $\eta > 0$ is a user-specified penalty parameter, while $[x]_k$ denotes the $k$-th component of the model vector $x$. To evaluate the performance, we use the ``mushroom'', ``a9a'' and ``ijcnn1'' datasets from the LIBSVM library \cite{chang2011libsvm}, each providing a varying degree of data heterogeneity. We split the data uniformly across agents, so that all agents have an equal-sized local dataset before training, i.e., $m_i = \frac{m}{n}$, where $m$ is the size of the original dataset. Similarly to synthetic data experiments, we use Erd\H{o}s-R\'enyi graphs with Metropolis-Hastings weights. For each experiment, we fix the following parameters: step-size $\alpha = 0.1$ and $\eta = 0.1$, giving both algorithms a large learning rate and a non-trivial effect of the non-convex regularizer. Since the MSE is not computable in the non-convex case, we evaluate the performance using the average gradient norm-squared, i.e., $G^{t,r}_n = \frac{1}{nt}\sum_{\tau \in [t]}\sum_{i \in [n]}\|\nabla f(x_i^{\tau,r})\|^2$, so that the empirical tail probability is computed as $\Prob^t_{n,\epsilon} = \frac{1}{R}\sum_{r \in [R]}\mathbb{I}\big(G^{t,r}_n > \epsilon\big)$. Similarly to the previous section, we also compute and visualize the average performance across all runs, i.e., $\E^t_n = \frac{1}{R}\sum_{r \in [R]}G^{t,r}_n$. We note that in our experiments on real data, the stochastic noise comes from the mini-batch choice, hence the resulting noise is \emph{not necessarily sub-Gaussian}. As such, our relaxed sub-Gaussian condition provides a much more appropriate noise model.

\begin{figure*}[t]
\centering
\begin{tabular}{ccc}
\includegraphics[scale=0.3]{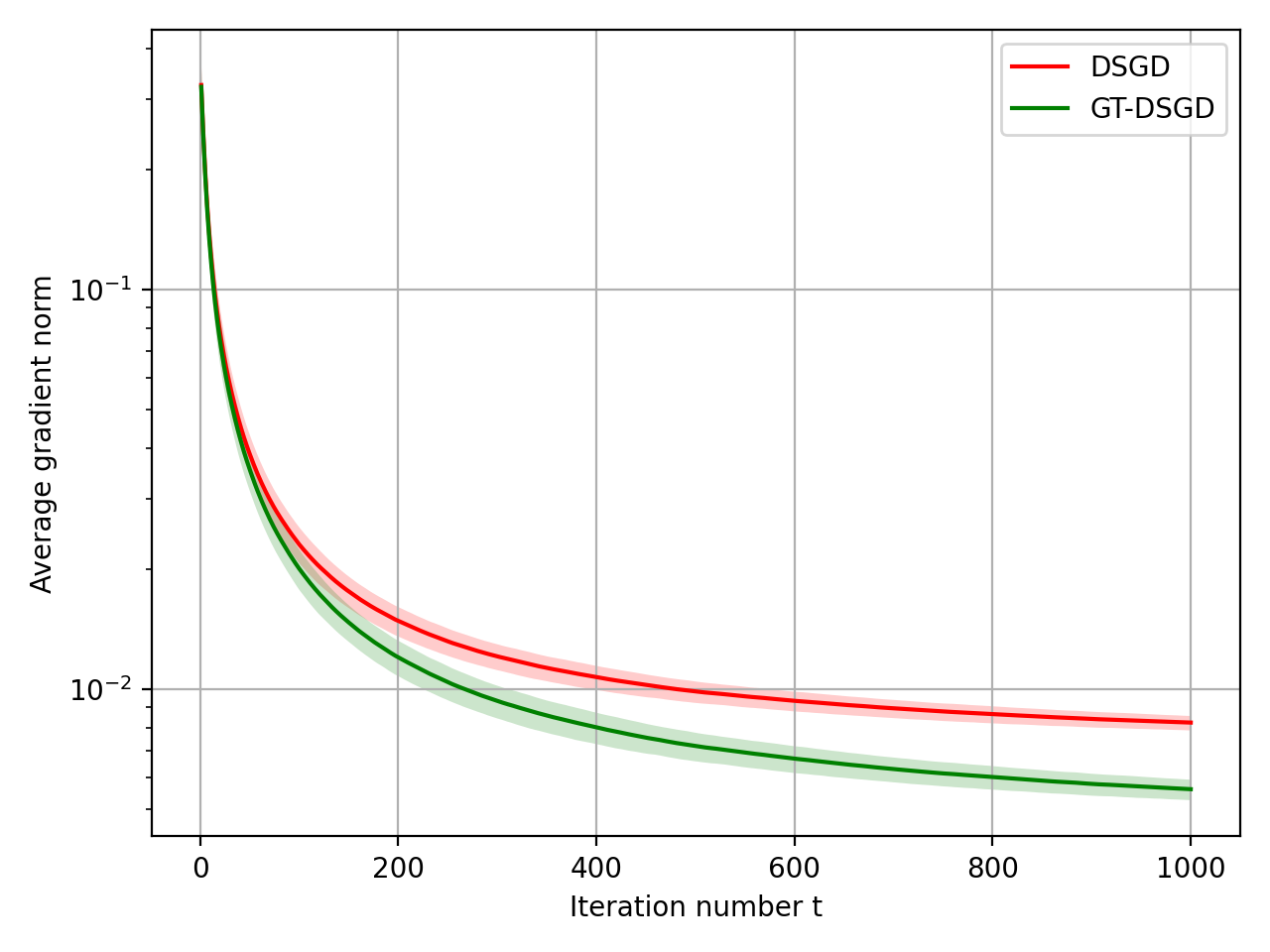}
&
\includegraphics[scale=0.3]{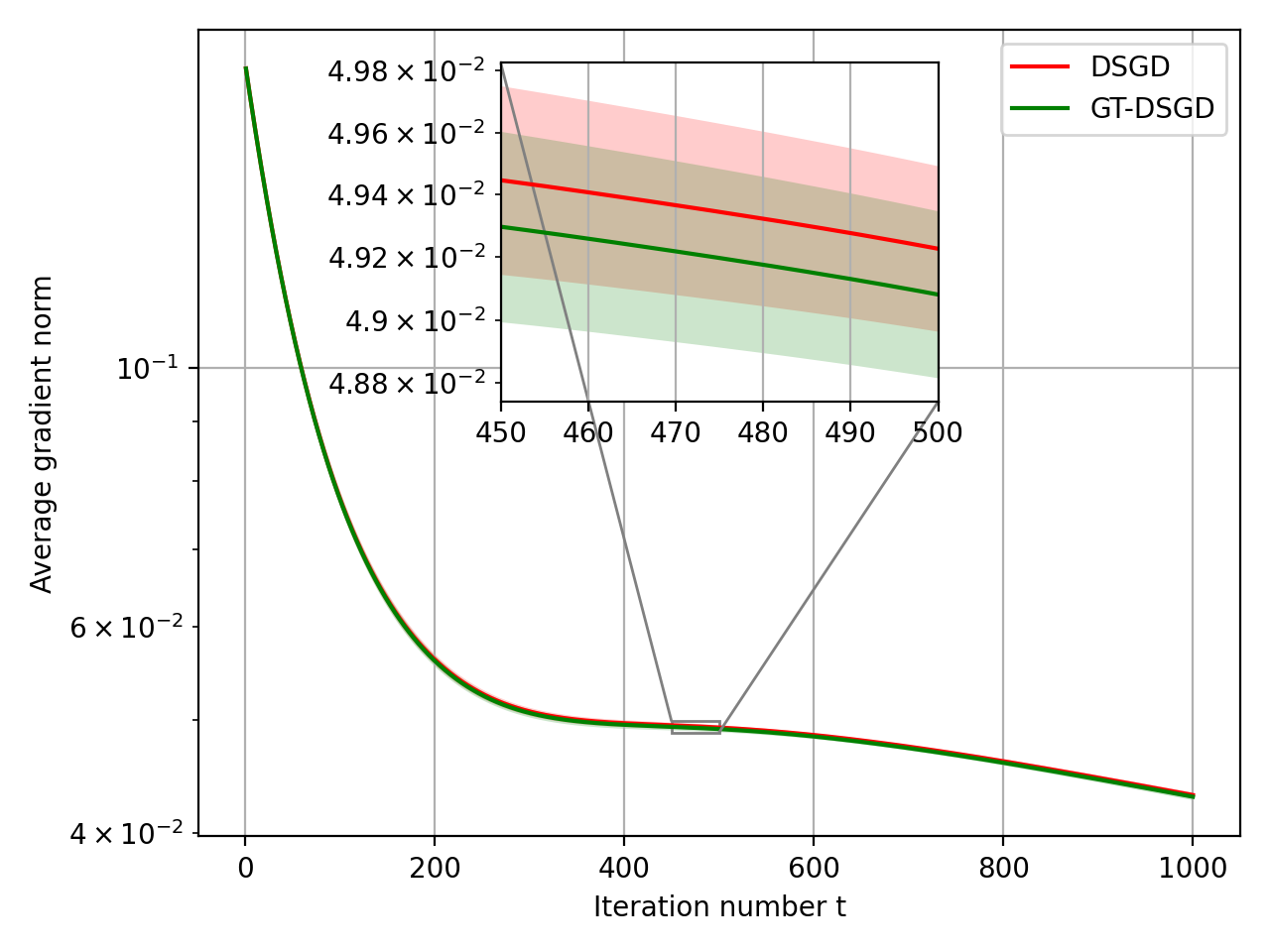}
&
\includegraphics[scale=0.3]{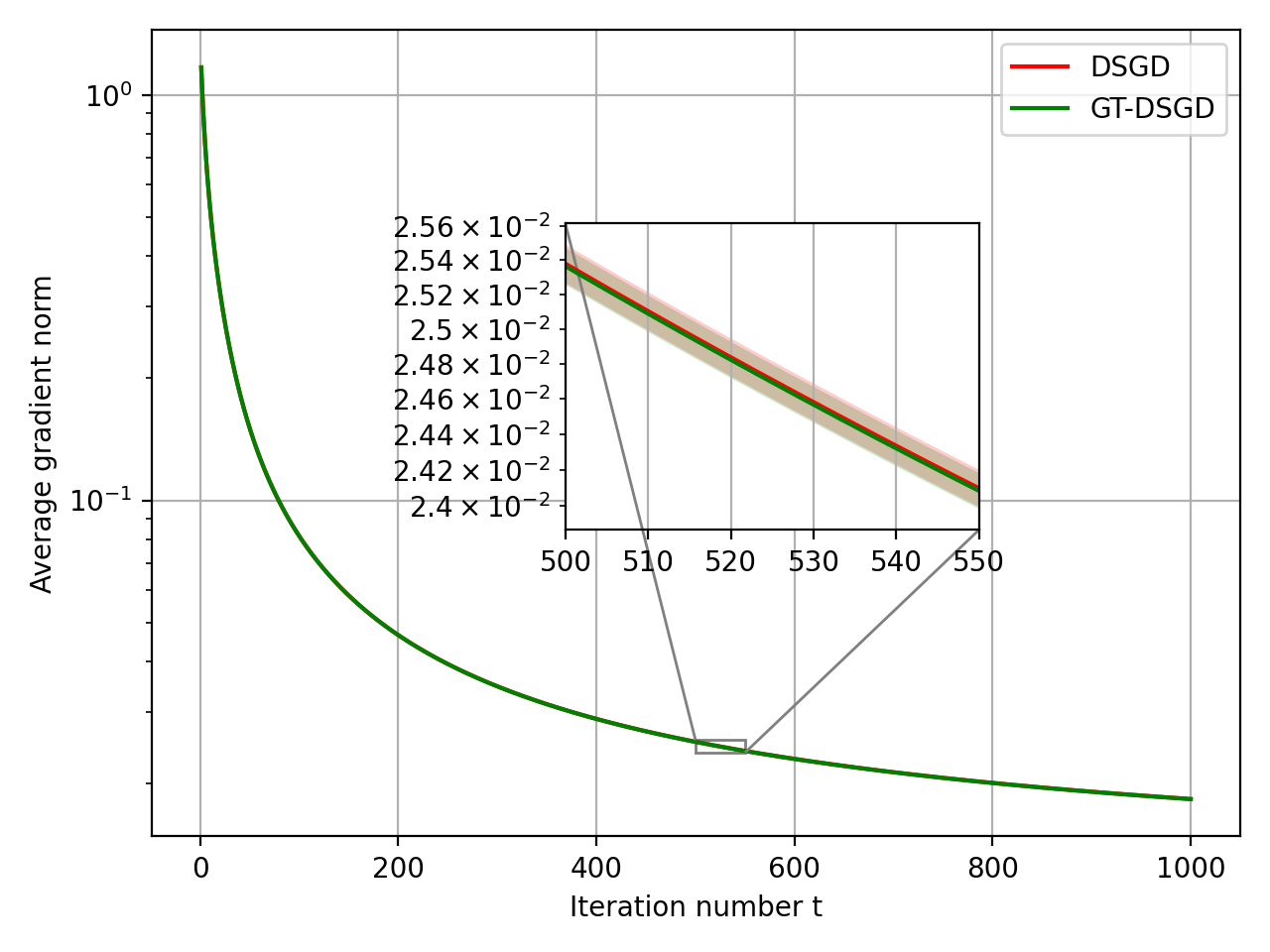}
\end{tabular}
\begin{tabular}{ccc}
\includegraphics[scale=0.3]{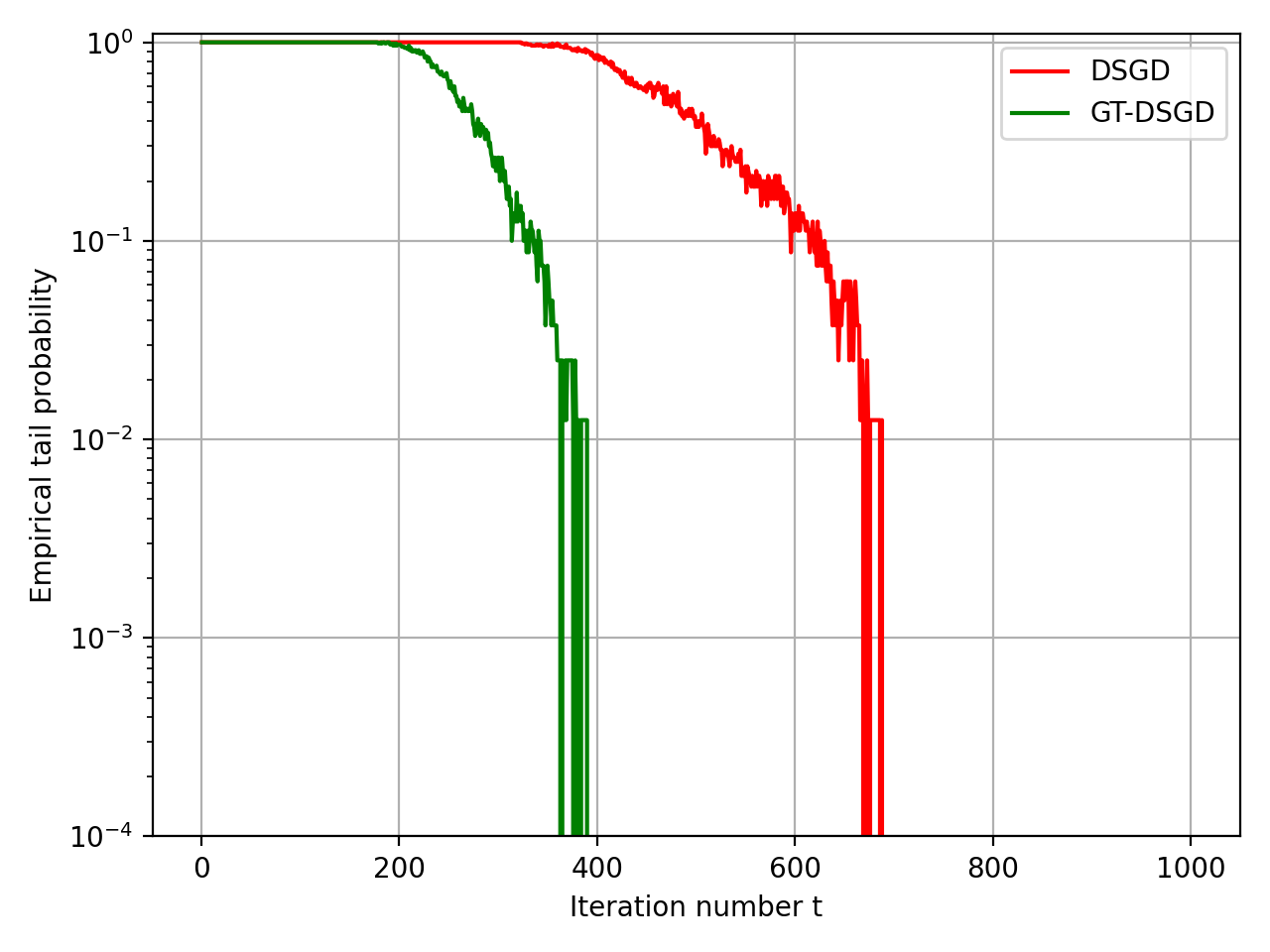}
&
\includegraphics[scale=0.3]{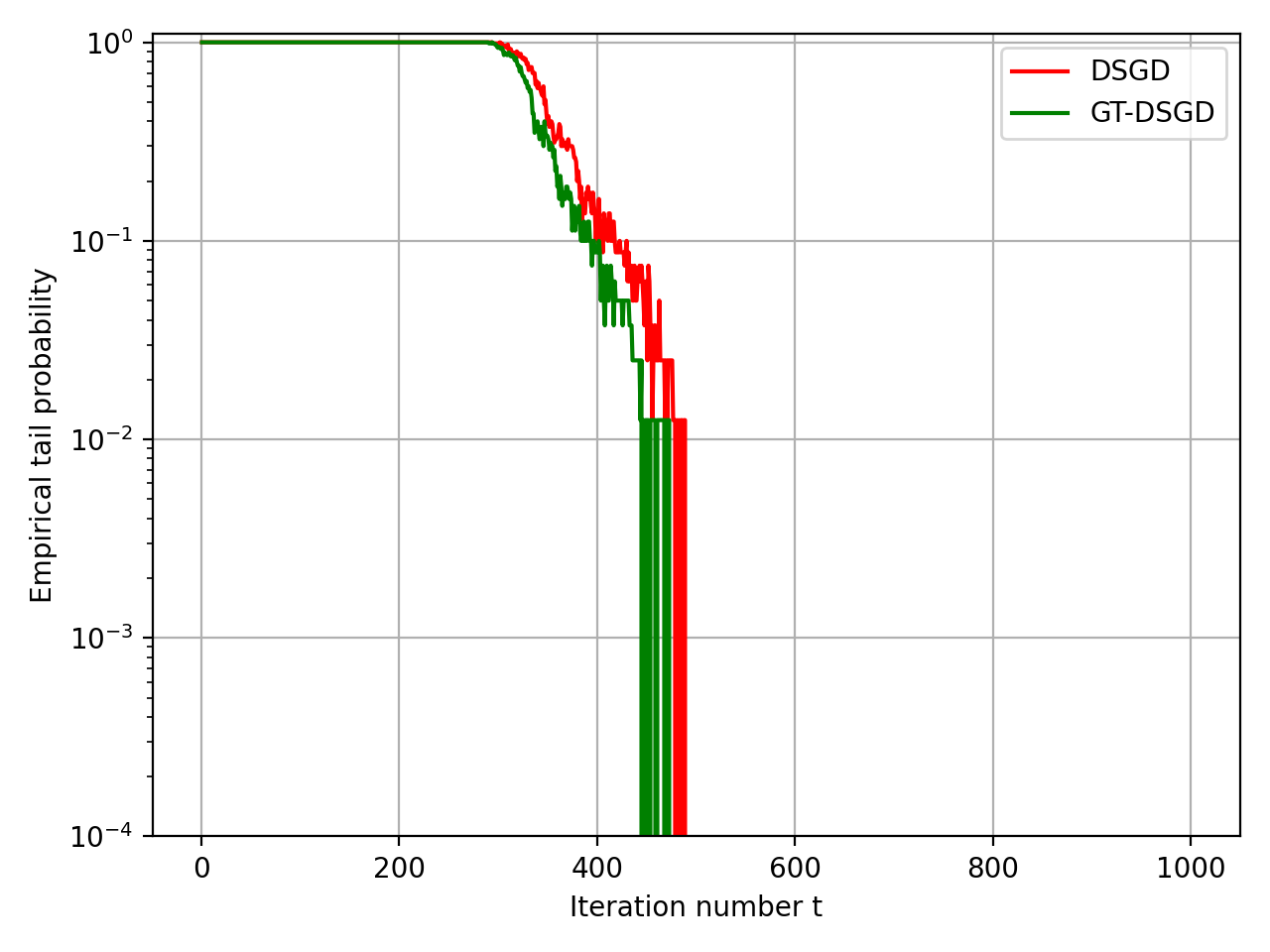}
&
\includegraphics[scale=0.3]{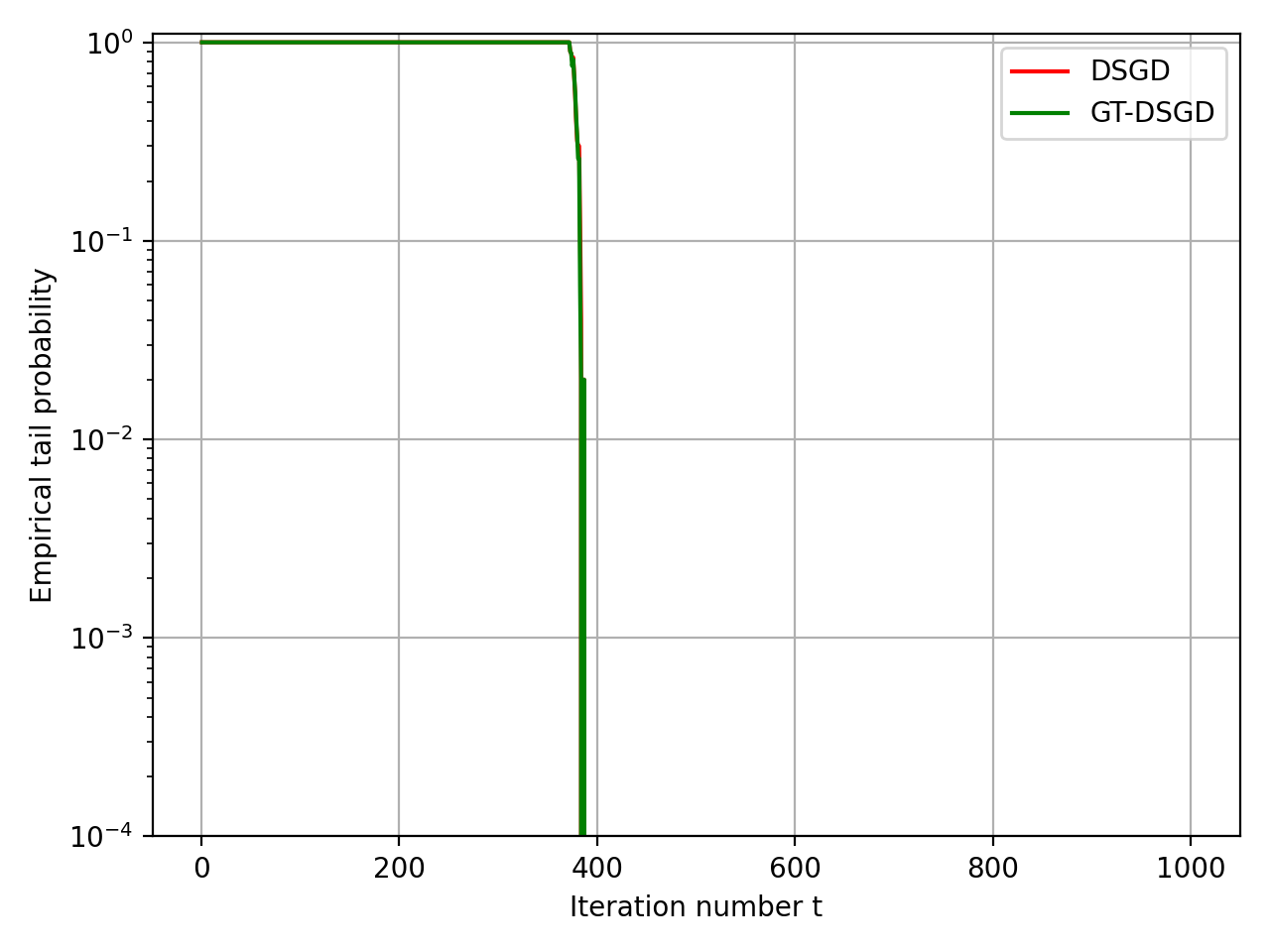}
\end{tabular}
\caption{Exponential tail decay on real data. Top to bottom: average gradient norm across all runs and its empirical tail probability. Left to right: performance on datasets ``a9a'', ``ijcnn1'' and ``mushroom''. For the empirical tail probability, we use threshold values $\epsilon = \{0.01,0.05,0.01\}$ for the respective datasets. We can see that \gtdsgd performs on par or better than \dsgd, with exponentially decaying tails across all three datasets.}
\label{fig:real-exp}
\end{figure*}

\paragraph{Exponentially decaying tails.} We start by testing the decay rate of the empirical tail probability, comparing the performance of \gtdsgd and \dsgd. We fix a network of $n = 30$ agents communicating over an Erd\H{o}s-R\'enyi graph. We run both methods for $T = 1000$ iterations, repeated across $R=100$ runs. The results are presented in Figure \ref{fig:real-exp}, where the top row visualizes the average performance across all runs $\E^t_n$, the bottom row visualizes the empirical tail probability $\Prob^t_{n,\epsilon}$, while figures left to right correspond to datasets ``a9a'', ``ijcnn1'' and ``mushroom'', respectively. The threshold values are chosen as $\epsilon = \{0.01, 0.05, 0.01 \}$, for the respective datasets, based on the achieved average values in the plots above. As can be seen from the figure, \gtdsgd clearly outperforms \dsgd on the ``a9a'' dataset, while performing similarly to \dsgd on ``ijcnn1'' and ``mushroom'' datasets, exhibiting exponential tail decay across all three datasets. The smaller gap between \gtdsgd and \dsgd on ``ijcnn1'' and ``mushroom'' datasets stems from the fact that these datasets exhibit a much smaller degree of heterogeneity, whilst \gtdsgd is known to be more robust to data heterogeneity. 

\paragraph{Linear speedup.}  To verify that \gtdsgd achieves linear speed-up, we again consider three networks with $n = \{10, 30, 50\}$ agents, communicating over Erd\H{o}s--R\'enyi graphs. To ensure that the network connectivity is consistent, we enforce the condition $\lambda = \|W-J\| \approx 0.6$ on the resulting weight matrices, with $\alpha, T, R$ remaining unchanged. The results are presented in Figure \ref{fig:real-speed}, with the outline of figures the same as in the previous set of experiments. We can again see that the average gradient norm and empirical tail probability decay faster with larger $n$, with the speed-up effect more evident on the more heterogeneous ``a9a'' dataset.

\begin{figure*}[t]
\centering
\begin{tabular}{ccc}
\includegraphics[scale=0.3]{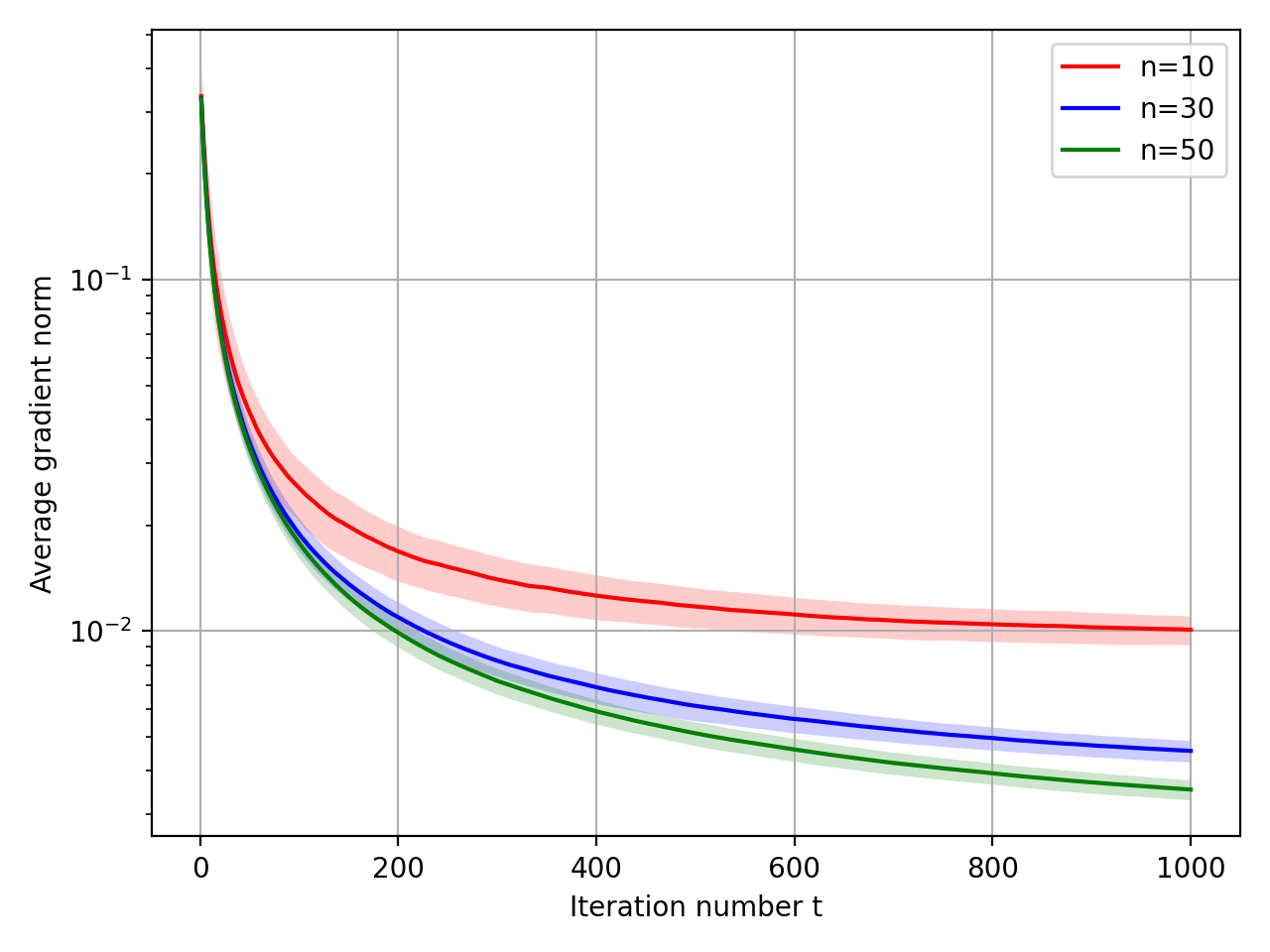}
&
\includegraphics[scale=0.3]{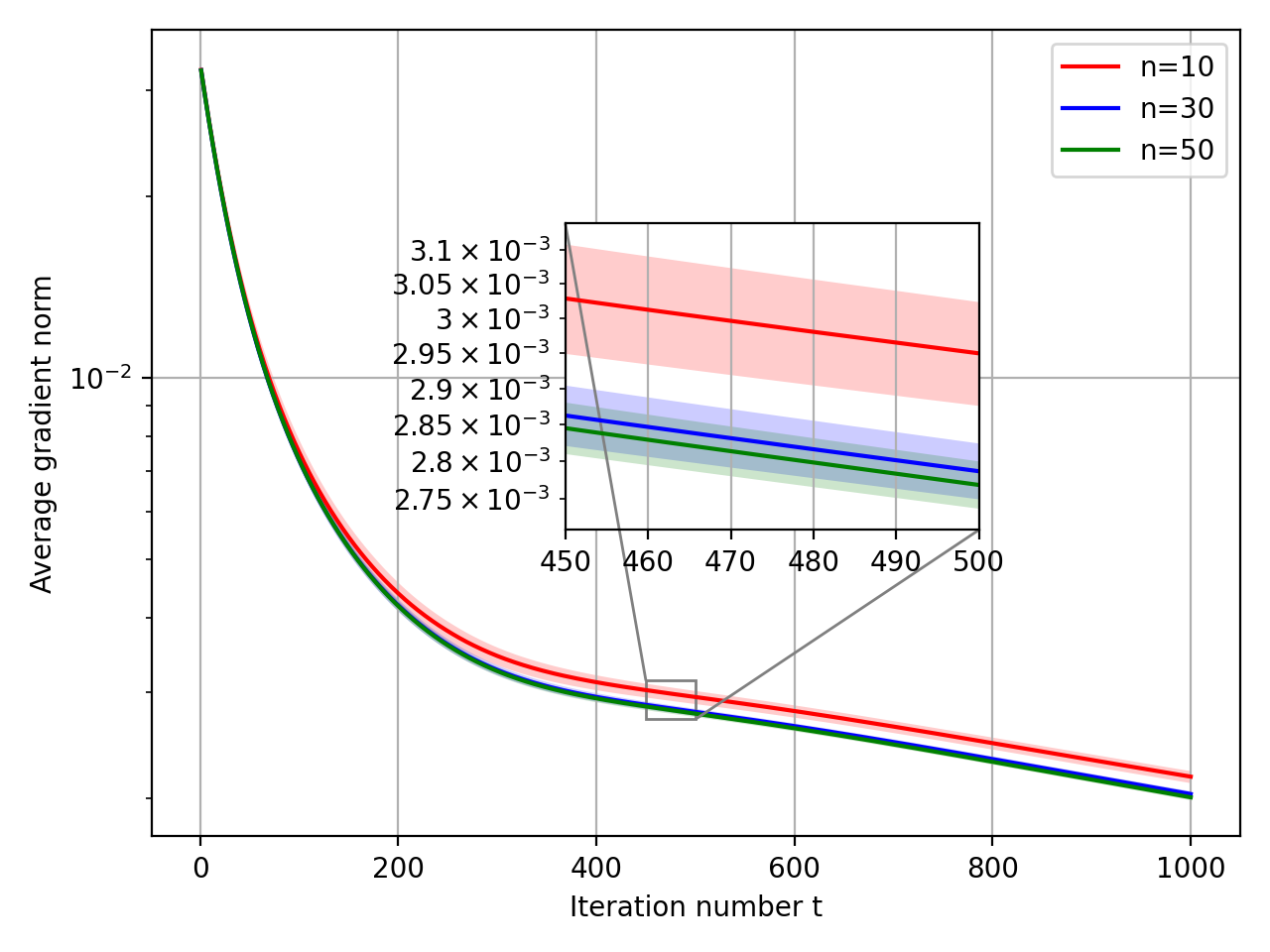}
&
\includegraphics[scale=0.3]{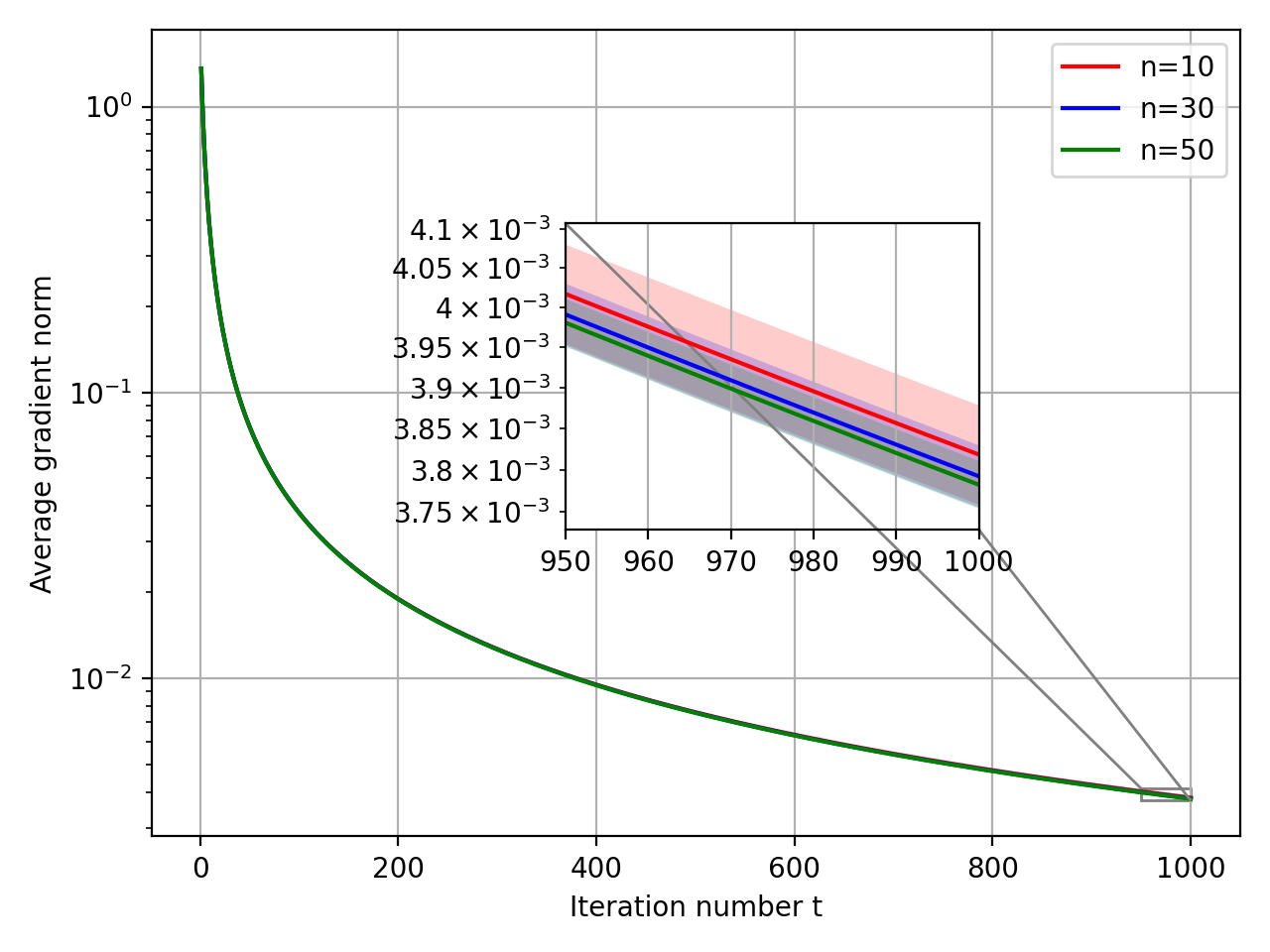}
\end{tabular}
\begin{tabular}{ccc}
\includegraphics[scale=0.3]{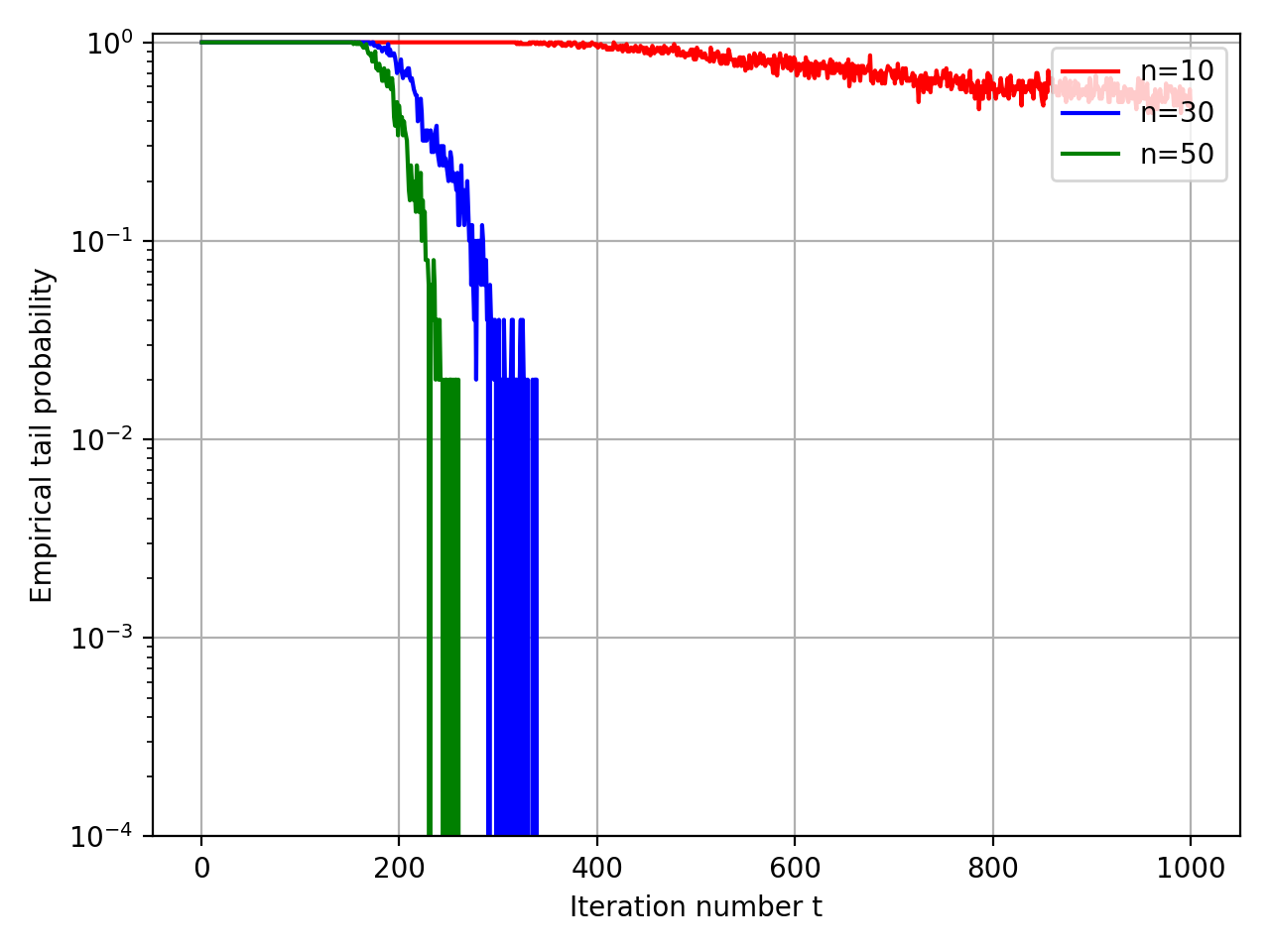}
&
\includegraphics[scale=0.3]{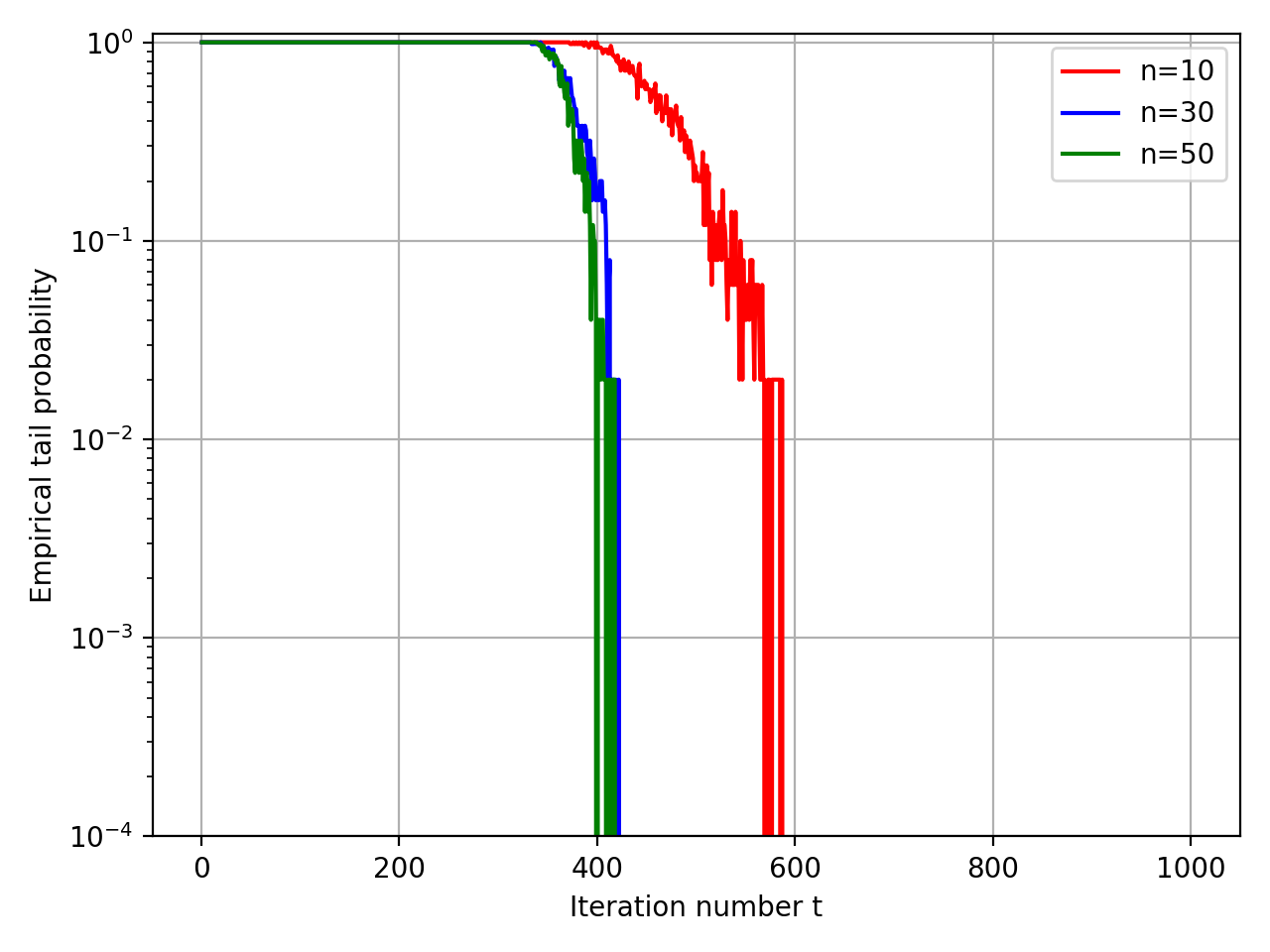}
&
\includegraphics[scale=0.3]{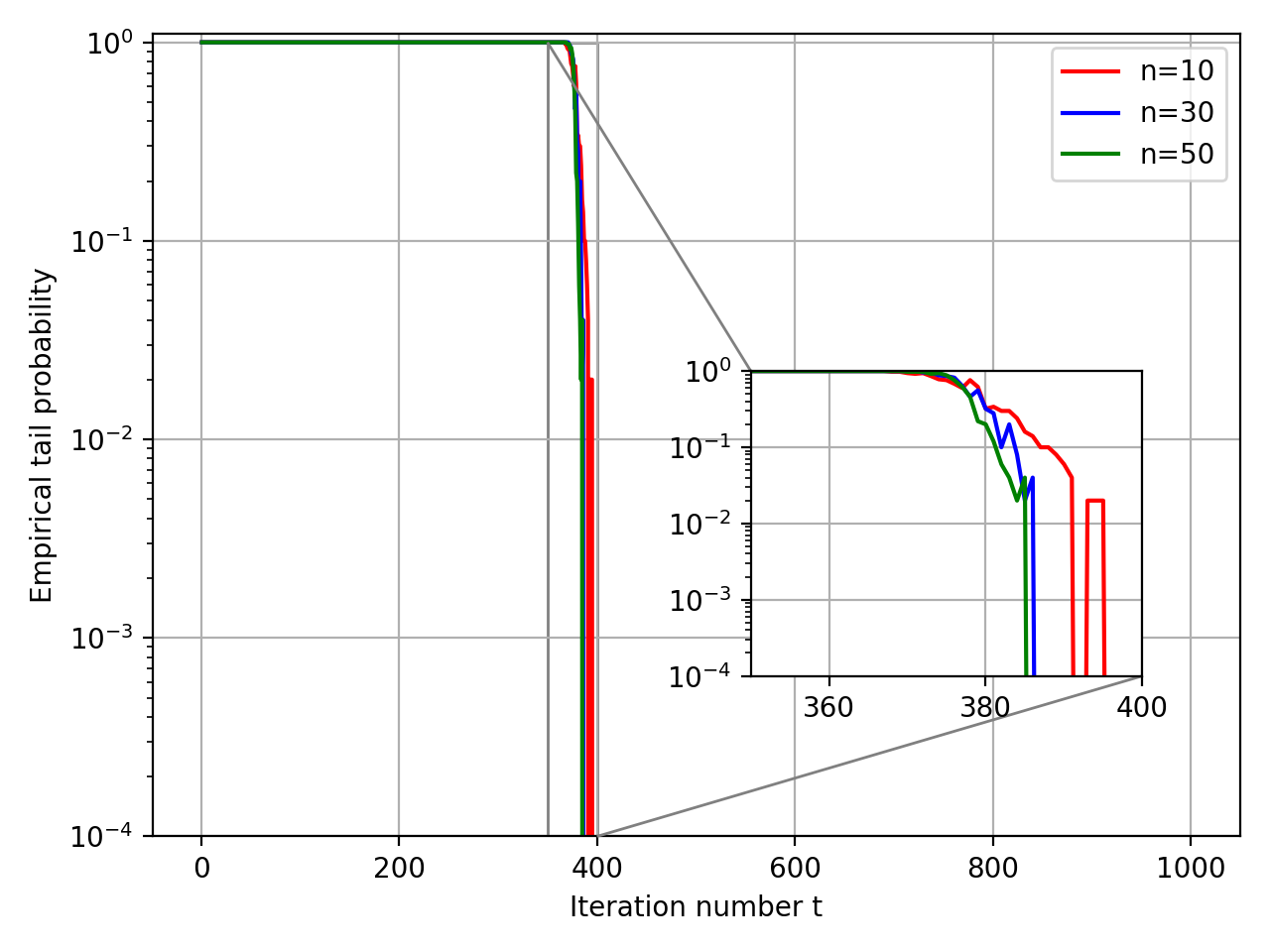}
\end{tabular}
\caption{Linear speed-up on real data. Top to bottom: average gradient norm across all runs and its empirical tail probability. Left to right: performance on datasets ``a9a'', ``ijcnn1'' and ``mushroom''. For the empirical tail probability, we use threshold values $\epsilon = \{0.01,0.003,0.01\}$ for the respective datasets. We can see that the tails of \gtdsgd consistently decay faster for larger networks, verifying the effect of linear speed-up.}
\label{fig:real-speed}
\end{figure*}

\section{Conclusion}\label{sec:conc}

In this work we provided the first HP convergence guarantees of decentralized methods incorporating bias-correction, by studying \gtdsgd in the presence of noise satisfying a relaxed sub-Gaussian condition. We show that \gtdsgd converges at order-optimal rates for both non-convex and P\L{} costs, while relaxing restrictive assumptions like bounded (gradient) heterogeneity, required by vanilla \dsgd. Moreover, our rates achieve linear speed-up in the number of agents, with transient times comparable to the ones stemming from MSE results. Numerical experiments on both real and synthetic data confirm our findings, showing that \gtdsgd exhibits exponentially decaying tails and linear speed-up in the HP sense. Future work includes studying HP guarantees in decentralized settings under heavy-tailed noise and the use of nonlinearities like clipping and normalization, e.g., \cite{nguyen2023improved,hubler2025normalization,armacki2025optimal,yu2026decentralized}, as well as establishing HP guarantees via a unified bias-correction framework, e.g., \cite{unified-refined}, which subsumes both ED and GT, while potentially yielding sharper transient times.  

\bibliographystyle{ieeetr}
\bibliography{bibliography}

\appendix

\section{Introduction}

The Appendix contains results omitted from the main body. Appendix \ref{sup:useful-results} collects some important facts used in our proofs, Appendix \ref{sup:technical} provides useful technical results, Appendix \ref{sup:analysis-setup} defines some notions used in the analysis, Appendix \ref{sup:non-conv} and \ref{sup:PL} contain proofs for non-convex and P\L{} costs, while Appendix \ref{sup:trans-time} provides further details on deriving transient times.

\section{Useful Inequalities}\label{sup:useful-results}

In this section we outline some well-known inequalities and results used in our proofs.

\begin{proposition}[Jensen's inequality]\label{prop:Jensen}
    Let $X \in \R$ be an integrable random variable. Then, for any convex function $h: \R \mapsto \R$, we have $h(\E[X]) \leq \E[h(X)]$. Moreover, if $h$ is concave, the reverse inequality holds, i.e., we then have $\E[h(X)] \leq h(\E[X])$.
\end{proposition}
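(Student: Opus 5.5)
The plan is the classical supporting-line argument. Set $m \coloneqq \E[X]$, which is a finite real number because $X$ is integrable. The key step is to produce an affine minorant of $h$ that touches $h$ at $m$: a constant $c \in \R$ such that
\begin{equation*}
    h(x) \geq h(m) + c\,(x - m), \quad \text{for all } x \in \R .
\end{equation*}
Once this is in hand, the rest is routine. Substitute $x = X$ and take expectations. Since the right-hand side is an integrable affine function of $X$, the negative part of $h(X)$ is integrable, so $\E[h(X)]$ is well defined in $(-\infty, +\infty]$. Linearity of expectation together with $\E[X - m] = 0$ then gives $\E[h(X)] \geq h(m) = h(\E[X])$.

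To construct $c$, I will use the monotonicity of difference quotients of a convex function. For $u < v < w$, convexity (writing $v$ as a convex combination of $u$ and $w$) yields
\begin{equation*}
    \frac{h(v)-h(u)}{v-u} \leq \frac{h(w)-h(u)}{w-u} \leq \frac{h(w)-h(v)}{w-v}.
\end{equation*}
It follows that the slope $s(x) \coloneqq \frac{h(x)-h(m)}{x-m}$ is nondecreasing in $x$ on $\R\setminus\{m\}$. Every left slope ($x<m$) is therefore bounded above by every right slope ($x>m$). Hence the left derivative $h'_-(m) = \sup_{x<m} s(x)$ and the right derivative $h'_+(m) = \inf_{x>m} s(x)$ both exist and are finite, with $h'_-(m) \leq h'_+(m)$. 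Choosing any $c \in [h'_-(m), h'_+(m)]$ gives $s(x) \geq c$ for $x>m$ and $s(x) \leq c$ for $x<m$. Multiplying by $x-m$, this is exactly the supporting-line inequality in both cases.

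For the concave case, I will apply the convex result to $-h$, which is convex. This gives $-h(\E[X]) \leq \E[-h(X)]$, and rearranging yields $\E[h(X)] \leq h(\E[X])$.

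The only step needing care is the existence of a finite subgradient at $m$. Here it is immediate because $h$ is finite on all of $\R$, so $m$ is an interior point of the domain and both one-sided slopes are bounded as shown above. A secondary technical point is that $\E[h(X)]$ might be $+\infty$. This does not break the argument in the convex case, since the inequality then holds trivially. In the concave case the symmetric remark applies: $\E[h(X)]$ is either $-\infty$ or finite.
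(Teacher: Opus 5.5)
Your proof is correct. The monotone difference-quotient argument gives a finite $c \in [h'_-(m), h'_+(m)]$ because $m$ is interior to the domain $\R$. The affine minorant then makes $\E[h(X)]$ well defined in $(-\infty,+\infty]$, and the concave case follows by passing to $-h$.

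The paper gives no proof of this proposition; it is listed in the appendix among well-known facts. So there is no competing route to compare, and yours is the standard supporting-line proof.

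The only detail you leave implicit is that $h(X)$ is a random variable. This holds because a convex function on $\R$ is continuous, hence Borel.

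It is also worth knowing how the paper actually uses the result. It applies it conditionally (under $\E_X$ and $\E_t$), and in the ``reverse Jensen'' steps to concave maps such as $y \mapsto y^{b}$, $b \in (0,1]$, defined only on $[0,\infty)$. The statement with $h:\R\to\R$ does not literally cover either use, but your argument adapts to both:
\begin{itemize}
\item On $[0,\infty)$, a nonnegative variable with zero mean vanishes almost surely, so only interior means matter, and there your supporting-line construction applies.
\item For the conditional version, take $c = h'_+(M)$, where $M$ is the conditional mean; this is a measurable function of $M$. Localize on events where $|h(M)|$ and $|h'_+(M)|$ are bounded to handle integrability.
\end{itemize}
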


\begin{proposition}[Cauchy-Schwartz inequality]\label{prop:Cauchy-Schwartz}
    For any $a,b \in \R^d$, we have $|\langle a,b \rangle| \leq \|a\|\|b\|$.
\end{proposition}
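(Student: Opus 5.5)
The statement is the Cauchy--Schwarz inequality $|\langle a,b\rangle| \le \|a\|\|b\|$ for $a,b\in\R^d$. I will use the standard quadratic-discriminant argument.

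First I dispose of the degenerate case. If $b=0$, then both sides are zero and there is nothing to prove. So assume $b\neq 0$.

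For any $s\in\R$, consider the function
\begin{equation*}
q(s) \coloneqq \|a - s b\|^2 = \|a\|^2 - 2s\langle a,b\rangle + s^2\|b\|^2 .
\end{equation*}
The expansion uses only bilinearity and symmetry of the Euclidean inner product. Since $q$ is a squared norm, $q(s)\ge 0$ for every $s$.

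I then choose the minimizing value $s = \langle a,b\rangle/\|b\|^2$. Substituting it gives
\begin{equation*}
0 \le \|a\|^2 - \frac{\langle a,b\rangle^2}{\|b\|^2}.
\end{equation*}
Equivalently, the discriminant of the nonnegative quadratic $q$ must be nonpositive. Multiplying through by $\|b\|^2>0$ yields $\langle a,b\rangle^2 \le \|a\|^2\|b\|^2$. Taking square roots gives $|\langle a,b\rangle|\le \|a\|\|b\|$.

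There is no real obstacle here. The only points needing care are the separate treatment of $b=0$, which avoids dividing by zero, and checking that the expansion of $q(s)$ is correct.
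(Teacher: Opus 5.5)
Your proof is correct and complete: you handle $b=0$ separately, the expansion of $\|a-sb\|^2$ and the choice $s=\langle a,b\rangle/\|b\|^2$ are right, and taking square roots is valid because both sides of $\langle a,b\rangle^2\le\|a\|^2\|b\|^2$ are nonnegative. The paper gives no proof of this statement; it lists Cauchy--Schwarz among ``well-known inequalities'' used as tools, so there is no argument of the paper's to compare yours against.
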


As a consequence of the Cauchy-Schwartz inequality, we have the following result.

\begin{proposition}[Young's inequality]\label{prop:Young}
    For any $a,b \in \R$ and any $\epsilon > 0$, we have $ab \leq \frac{\epsilon a^2}{2} + \frac{b^2}{2\epsilon}$. As a consequence, for any $\theta > 0$, we have $(a+b)^2 \leq (1+\theta)a^2 + (1+\theta^{-1})b^2$.
\end{proposition}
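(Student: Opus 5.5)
The plan is to derive both claims from the nonnegativity of a single square, and then to specialize the first claim to obtain the second. No earlier result is strictly needed. However, since the paper states the proposition as a consequence of the Cauchy--Schwartz inequality (Proposition \ref{prop:Cauchy-Schwartz}), I will also note how the same argument covers vectors, which is the form used in the analysis.

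For the first claim, fix $a,b \in \R$ and $\epsilon > 0$. Since $\epsilon > 0$, the quantities $\sqrt{\epsilon}$ and $1/\sqrt{\epsilon}$ are well defined. I will expand
\begin{equation*}
    0 \leq \Big(\sqrt{\epsilon}\,a - \frac{b}{\sqrt{\epsilon}}\Big)^2 = \epsilon a^2 - 2ab + \frac{b^2}{\epsilon}.
\end{equation*}
Rearranging and dividing by $2$ gives $ab \leq \frac{\epsilon a^2}{2} + \frac{b^2}{2\epsilon}$.

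For vectors $a,b \in \R^d$, I will first apply Proposition \ref{prop:Cauchy-Schwartz} to get $\langle a,b\rangle \leq \|a\|\|b\|$. Applying the scalar inequality to the nonnegative reals $\|a\|$ and $\|b\|$ then yields $\langle a,b\rangle \leq \frac{\epsilon\|a\|^2}{2} + \frac{\|b\|^2}{2\epsilon}$.

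For the second claim, I will expand $(a+b)^2 = a^2 + 2ab + b^2$. I then apply the first claim with $\epsilon = \theta$, which gives $2ab \leq \theta a^2 + \theta^{-1}b^2$. Substituting this bound yields $(a+b)^2 \leq (1+\theta)a^2 + (1+\theta^{-1})b^2$. The vector version $\|a+b\|^2 \leq (1+\theta)\|a\|^2 + (1+\theta^{-1})\|b\|^2$ follows identically, by expanding $\|a+b\|^2 = \|a\|^2 + 2\langle a,b\rangle + \|b\|^2$ and using the vector bound above.

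There is no genuine obstacle here. The only point requiring care is that $\epsilon,\theta > 0$, which is needed for the square roots and the division to be legitimate.
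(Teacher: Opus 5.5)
Your proof is correct. The paper gives no argument of its own and only lists the result as a standard consequence of the Cauchy--Schwartz inequality. Your route matches what that remark points to: complete the square $0 \leq (\sqrt{\epsilon}\,a - b/\sqrt{\epsilon})^2$ for scalars, use Cauchy--Schwartz to reduce $\langle a,b\rangle$ to $\|a\|\|b\|$ for vectors, and take $\epsilon = \theta$ for the second claim. Keep the vector extension, since the paper actually uses the inequality in the form $\|a+b\|^2 \leq (1+\theta)\|a\|^2 + (1+\theta^{-1})\|b\|^2$, for instance in the one-step tracker consensus bound.
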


\begin{proposition}[H\"{o}lder's inequality]\label{prop:Holder}
    For any random variables $X,Y \in \R$ and any $p,q \in [1,\infty]$ such that $\frac{1}{p} + \frac{1}{q} = 1$, we have $\E|XY| \leq \sqrt[p]{\E|X|^p}\sqrt[q]{\E|Y|^q}$.
\end{proposition}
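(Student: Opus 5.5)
We prove Proposition~\ref{prop:Holder} by normalization combined with the weighted form of Young's inequality. Write $\|X\|_p \coloneqq \sqrt[p]{\E|X|^p}$, with $\|Y\|_\infty$ the essential supremum of $|Y|$. We dispose of the degenerate cases first.

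If either norm on the right-hand side is infinite, there is nothing to prove. If $\|X\|_p = 0$, then $X = 0$ almost surely, so $XY = 0$ almost surely and both sides vanish; the same holds if $\|Y\|_q = 0$. For the endpoint case $p = 1$, $q = \infty$ (and symmetrically $p=\infty$, $q=1$), we use that $|XY| \leq |X|\,\|Y\|_\infty$ almost surely, by definition of the essential supremum. Taking expectations gives the claim directly. Hence we may assume $1 < p,q < \infty$ and $0 < \|X\|_p, \|Y\|_q < \infty$.

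The key step is the scalar inequality
\begin{equation*}
ab \leq \frac{a^p}{p} + \frac{b^q}{q}, \qquad a,b \geq 0.
\end{equation*}
Note that Proposition~\ref{prop:Young} as stated only covers the case $p=q=2$, so we derive the general version. For $a,b>0$, write $ab = \exp\big(\tfrac{1}{p}\log a^p + \tfrac{1}{q}\log b^q\big)$. Since $\exp$ is convex and $\tfrac1p + \tfrac1q = 1$, this is at most $\tfrac1p a^p + \tfrac1q b^q$. The case where $a$ or $b$ is zero is trivial.

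We then apply this pointwise with $a = |X|/\|X\|_p$ and $b = |Y|/\|Y\|_q$ and take expectations, using linearity and monotonicity of $\E$. This gives
\begin{equation*}
\frac{\E|XY|}{\|X\|_p\|Y\|_q} \leq \frac{\E|X|^p}{p\,\|X\|_p^p} + \frac{\E|Y|^q}{q\,\|Y\|_q^q} = \frac1p + \frac1q = 1.
\end{equation*}
Multiplying through by $\|X\|_p\|Y\|_q$ yields the result.

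There is no real obstacle in this argument. The only points requiring care are the bookkeeping of the degenerate and endpoint cases handled above, and the fact that the general-exponent version of Young's inequality must be derived rather than quoted from Proposition~\ref{prop:Young}.
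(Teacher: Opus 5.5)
Your proof is correct. It is the standard argument: you derive the weighted Young inequality $ab \le a^p/p + b^q/q$ from convexity of $\exp$, apply it to the normalized variables $|X|/\|X\|_p$ and $|Y|/\|Y\|_q$, and treat the endpoint and degenerate cases separately. The paper does not prove this proposition at all; it lists it among ``well-known inequalities'' and quotes it without proof, so there is no argument of the paper's to compare with.

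There is one small bookkeeping point. Your claim that an infinite norm leaves ``nothing to prove'' fails when the other norm is zero, because under the usual convention $\infty\cdot 0 = 0$ the right-hand side then vanishes. Your zero case already covers this, since $XY = 0$ almost surely there, so just handle the zero case before the infinite one.
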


We also have the following important consequence of H\"{o}lder's inequality.

\begin{proposition}\label{prop:gen-Holder}
    For any random variables $\{X_i\}_{i \in [k]}$, we have $\E\Big[\prod_{i \in [k]}|X_i|\Big] \leq \prod_{i \in [k]}\sqrt[k]{\E|X_i|^k}$.
\end{proposition}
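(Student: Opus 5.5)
This is the generalized H\"older inequality, and I plan to prove it by induction on $k$ using Proposition \ref{prop:Holder}.

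If some $\E|X_i|^k = +\infty$, the right-hand side is infinite and there is nothing to prove. If some $\E|X_i|^k = 0$, then $X_i = 0$ almost surely, so the left-hand side is zero. Hence we may assume every $\E|X_i|^k \in (0,\infty)$.

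The base case $k=1$ is trivial, and $k=2$ is exactly Proposition \ref{prop:Holder} with $p=q=2$.

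For the inductive step, assume the claim holds for $k-1$ factors. Apply Proposition \ref{prop:Holder} with $X = X_k$, $Y = \prod_{i \in [k-1]}|X_i|$, $p = k$ and $q = \frac{k}{k-1}$, which are conjugate. This gives
\begin{equation*}
\E\Big[\prod_{i \in [k]}|X_i|\Big] \leq \sqrt[k]{\E|X_k|^k}\,\Big(\E\Big[\prod_{i \in [k-1]}|X_i|^{\frac{k}{k-1}}\Big]\Big)^{\frac{k-1}{k}}.
\end{equation*}

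Next, apply the induction hypothesis with $k-1$ factors to the random variables $Y_i \coloneqq |X_i|^{\frac{k}{k-1}}$. This yields
\begin{equation*}
\E\Big[\prod_{i \in [k-1]}Y_i\Big] \leq \prod_{i \in [k-1]}\big(\E Y_i^{k-1}\big)^{\frac{1}{k-1}} = \prod_{i \in [k-1]}\big(\E|X_i|^k\big)^{\frac{1}{k-1}}.
\end{equation*}

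Raising this to the power $\frac{k-1}{k}$ gives $\prod_{i \in [k-1]}\sqrt[k]{\E|X_i|^k}$. Combining it with the factor $\sqrt[k]{\E|X_k|^k}$ from the first display completes the induction.

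The only step needing care is the exponent bookkeeping in the inductive step: after the $(p,q)=(k,\frac{k}{k-1})$ split, the remaining factors must be rescaled to power $\frac{k}{k-1}$ so that the $(k-1)$-factor hypothesis applies with the correct moments.

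As an alternative, the same bound follows in one line from the weighted AM--GM inequality $\prod_i a_i \leq \frac{1}{k}\sum_i a_i^k$. Apply it to $a_i = |X_i|/\sqrt[k]{\E|X_i|^k}$ and take expectations; each term on the right has expectation $1$, so $\E\prod_i a_i \leq 1$, which is the claim. I would present the induction and mention this route as a remark.
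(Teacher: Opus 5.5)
Your proposal is correct and follows the paper's route. The paper records this only as a direct consequence of H\"{o}lder's inequality (Proposition \ref{prop:Holder}) without a written proof, and your induction with the conjugate pair $(k,\frac{k}{k-1})$ and the rescaled variables $|X_i|^{\frac{k}{k-1}}$ is the standard way to carry that out. One small point on the degenerate cases: if one moment is $+\infty$ while another is $0$, the right-hand side is $0\cdot\infty$ rather than $+\infty$, so dispose of the zero case first, which your argument for it already covers.
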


Next, we state a useful result from \cite{ran-improved}.

\begin{proposition}\label{prop:ran}
    For any $c,t_0 > 0$ and $0 \leq a \leq b$, we have $\prod_{k = a}^b\Big(1 - \frac{c}{k+t_0} \Big) \leq \frac{(a+t_0)^c}{(b+1+t_0)^c}$.
\end{proposition}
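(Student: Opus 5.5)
We prove Proposition \ref{prop:ran} by taking logarithms and comparing the resulting sum with an integral. We may assume every factor is positive, i.e.\ $k+t_0 > c$ for $a \leq k \leq b$. If some factor is nonpositive, the proposition is either trivial or outside its intended regime; in the paper it is applied with $t_0$ large enough that all factors lie in $(0,1)$, so we restrict to that case.

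\textbf{Step 1: pass to logarithms.} Apply the elementary inequality $\log(1-x) \leq -x$, valid for $x<1$, to each factor. This gives
\begin{equation*}
\log \prod_{k=a}^b\Big(1-\frac{c}{k+t_0}\Big) \leq -c\sum_{k=a}^b \frac{1}{k+t_0}.
\end{equation*}

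\textbf{Step 2: lower-bound the harmonic-type sum.} The function $s \mapsto \frac{1}{s+t_0}$ is decreasing on $[k,k+1]$, so
\begin{equation*}
\frac{1}{k+t_0} \geq \int_k^{k+1}\frac{ds}{s+t_0}.
\end{equation*}
Summing over $k=a,\dots,b$ telescopes the integrals:
\begin{equation*}
\sum_{k=a}^b\frac{1}{k+t_0} \geq \int_a^{b+1}\frac{ds}{s+t_0} = \log\frac{b+1+t_0}{a+t_0}.
\end{equation*}

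\textbf{Step 3: exponentiate.} Combining Steps 1 and 2,
\begin{equation*}
\prod_{k=a}^b\Big(1-\frac{c}{k+t_0}\Big) \leq \exp\Big(-c\log\frac{b+1+t_0}{a+t_0}\Big) = \frac{(a+t_0)^c}{(b+1+t_0)^c}.
\end{equation*}

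The only point needing care is the positivity of the factors: taking logarithms in Step 1 requires each factor to be positive. The remaining steps are routine.
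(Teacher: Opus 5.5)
Your proof is correct: apply $1-x\le e^{-x}$ to each factor, then use the integral bound $\sum_{k=a}^{b}\frac{1}{k+t_0}\ge\log\frac{b+1+t_0}{a+t_0}$. The paper gives no proof of its own, importing the result from \cite{ran-improved}, and your argument is the standard one for it.

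Your positivity restriction is necessary, not merely a convenience. The statement as literally written fails: for $c=3$, $t_0=\tfrac12$, $a=0$, $b=1$, the product is $(1-6)(1-2)=5>5^{-3}$. The natural hypothesis is $a+t_0\ge c$, and this holds where the paper uses the proposition, via the condition $t_0\ge a$ in Lemma~\ref{lm:mgf-bound-str-cvx} (there the exponent $c$ is called $a$).
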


Finally, we state some important consequences of Assumption \ref{asmpt:cost}, see, e.g., \cite{bertsekas-gradient,nesterov-lectures_on_cvxopt}.

\begin{proposition}\label{prop:descent-ineq}
    Under Assumption \ref{asmpt:cost}, for any $i \in [n]$ and $x,y \in \R^d$, the following hold.
    \begin{enumerate}
         \item $f_i(x) \leq f_i(y) + \langle \nabla f_i(y), x-y\rangle + \frac{L}{2}\|x-y\|^2.$

         \item $f$ has $L$-Lipschitz continuous gradients.

         \item $f(x) \leq f(y) + \langle \nabla f(y), x-y\rangle + \frac{L}{2}\|x-y\|^2.$
    
        \item $\|\nabla f(x)\|^2 \leq 2L(f(x)-\fs)$.
    \end{enumerate}
\end{proposition}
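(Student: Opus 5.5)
This is the standard list of consequences of $L$-smoothness, and I will derive each item directly from Assumption \ref{asmpt:cost}, in the order stated.

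For item 1, fix $i$ and $x,y$, and write the fundamental theorem of calculus along the segment from $y$ to $x$:
\[
f_i(x)-f_i(y)=\int_0^1\langle \nabla f_i(y+s(x-y)),x-y\rangle\,ds .
\]
Subtract $\langle \nabla f_i(y),x-y\rangle$ from both sides. Bound the remaining integrand by Cauchy--Schwarz (Proposition \ref{prop:Cauchy-Schwartz}) and the Lipschitz property, which gives the bound $Ls\|x-y\|^2$. Integrating over $s\in[0,1]$ yields $\frac{L}{2}\|x-y\|^2$.

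For item 2, use $\nabla f=\frac1n\sum_i\nabla f_i$. The triangle inequality then gives
\[
\|\nabla f(x)-\nabla f(y)\|\le \frac1n\sum_{i\in[n]}\|\nabla f_i(x)-\nabla f_i(y)\|\le L\|x-y\|.
\]
Item 3 follows from item 2 by exactly the same integral argument as in item 1, applied to $f$. Alternatively, it follows by averaging the inequalities of item 1 over $i$.

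For item 4, apply item 3 with $y=x$ replaced by the gradient step $x-\frac1L\nabla f(x)$, i.e.,
\[
f\Big(x-\tfrac1L\nabla f(x)\Big)\le f(x)-\tfrac1L\|\nabla f(x)\|^2+\tfrac{1}{2L}\|\nabla f(x)\|^2=f(x)-\tfrac{1}{2L}\|\nabla f(x)\|^2 .
\]
The left-hand side is bounded below by $\fs$, which is finite by Assumption \ref{asmpt:cost}. Rearranging gives $\|\nabla f(x)\|^2\le 2L(f(x)-\fs)$.

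None of these steps is a real obstacle. The only care points are the two uses of Assumption \ref{asmpt:cost} itself: item 4 needs the finiteness of $\fs$, and items 2 and 3 need each local gradient $\nabla f_i$ to be $L$-Lipschitz (no convexity is required anywhere).
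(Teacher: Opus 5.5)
Your proof is correct and follows the standard derivation: the integral form of the descent lemma for items 1 and 3, averaging the local gradients for item 2, and evaluating item 3 at $x-\frac{1}{L}\nabla f(x)$ with base point $y=x$ for item 4. The paper does not prove this proposition itself but cites standard textbook references where the same steps appear. One small fix: the phrase ``with $y=x$ replaced by the gradient step'' in item 4 should read ``with base point $y=x$ and evaluation point $x-\frac{1}{L}\nabla f(x)$,'' which your displayed inequality already shows.
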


\section{Technical Results}\label{sup:technical}

In this section we prove Lemmas \ref{lm:noise-properties} and \ref{lm:avg-noise-properties} and provide two important technical result, namely Lemmas \ref{lm:bdd-integral} and \ref{lm:mgf-bound-str-cvx}. We start by proving Lemma \ref{lm:noise-properties}.

\begin{proof}[Proof of Lemma \ref{lm:noise-properties}]
    \begin{enumerate}[leftmargin=*]
        \item Let $v \in \R^d$ be $\Fx$-measurable and define $Y = \frac{Z}{\sigma}$. We follow a similar idea to the one in \cite[Lemma 1]{li2020high} and consider two cases. First, if $\|v\| \leq \frac{4}{3}$, we then have
        \begin{align}\label{eq:mgf-bound-1}
            \E_X\lbr\exp\lp\langle v, Y \rangle\rp \rbr &\stackrel{(a)}{\leq} \E_X\lbr \langle v, Y\rangle + \exp\lp\frac{9}{16}\langle v, Y \rangle^2\rp \rbr \stackrel{(b)}{\leq} \E_X\lbr \exp\lp\frac{9}{16}\|v\|^2\| Y \|^2\rp \rbr \nonumber \\
            &\stackrel{(c)}{\leq} \bigg(\E_X\lbr \exp\lp\| Y \|^2\rp \rbr\bigg)^{9\|v\|^2/16} \stackrel{(d)}{\leq} \exp\bigg(1 + \rho\|\mathcal{T}(X)\| \bigg)^{9\|v\|^2/16} \nonumber \\
            &\stackrel{(e)}{\leq} \exp\bigg(\frac{3\|v\|^2}{4} + \rho\|\mathcal{T}(X)\| \bigg), 
        \end{align} where $(a)$ follows from the inequality $\exp(x) \leq x + \exp\Big(\frac{9x^2}{16}\Big)$, in $(b)$ we used Cauchy-Schwartz inequality and the fact that $Z$ is conditionally zero-mean, $(c)$ follows from the fact that $\frac{9\|v\|^2}{16} \leq 1$ and the reverse Jensen's inequality, in $(d)$ we used Definition \ref{def:relax-sub-Gauss}, while $(e)$ follows from the fact that $\frac{9\|v\|^2}{16} \leq \frac{3\|v\|^2}{4}$. Alternatively, if $\|v\| > \frac{4}{3}$, we have
        \begin{align}\label{eq:mgf-bound-2}
            &\E_X\lbr\exp\lp\langle v, Y \rangle\rp \rbr \leq \E_X\lbr\exp\lp \frac{3\|v\|^2}{8} + \frac{2\|Y\|^2}{3} \rp \rbr \nonumber \\
            &\leq \exp\bigg( \frac{3\|v\|^2}{8} + \frac{2}{3} + \rho\|\mathcal{T}(X)\| \bigg) \leq \exp\bigg(\frac{3\|v\|^2}{4} + \rho\|\mathcal{T}(X)\|\bigg),
        \end{align} where the first inequality follows from Young's inequality with $\epsilon = \frac{3}{4}$, the second follows from the reverse Jensen's inequality and Definition \ref{def:relax-sub-Gauss}, while the third follows since $\|v\| > \frac{4}{3}$ implies $\frac{2}{3} \leq \frac{3\|v\|^2}{8}$. Combining \eqref{eq:mgf-bound-1}-\eqref{eq:mgf-bound-2}, we get $\E_X[\exp(\langle v, Y\rangle)] \leq \exp\Big(\frac
        {3\|v\|^2}{4} + \rho\|\mathcal{T}(X)\|\Big)$, for any $\Fx$-measurable vector $v \in \R^d$. Noting that $\langle v,Z\rangle = \langle \sigma v,Y\rangle$, the claim follows.

        \item Fix an $\epsilon > 0$. Using the exponential Markov inequality, it follows that
        \begin{align*}
            \Prob_{X_i}\big(\|Z_i\| &> \epsilon \big) = \Prob_{X_i}\bigg(\frac{\|Z_i\|^2}{2\sigma_i^2} > \frac{\epsilon^2}{2\sigma_i^2}\bigg) \leq \exp\bigg(-\frac{\epsilon^2}{2\sigma_i^2} \bigg)\E_{X_i}\bigg[\exp\bigg(\frac{\|Z_i\|^2}{2\sigma_i^2}\bigg) \bigg] \\ 
            &\leq \exp\bigg(-\frac{\epsilon^2}{2\sigma_i^2} \bigg)\Bigg(\E_{X_i}\bigg[\exp\bigg(\frac{\|Z_i\|^2}{\sigma_i^2}\bigg) \bigg]\Bigg)^{1/2} \leq \exp\bigg(-\frac{\epsilon^2}{2\sigma_i^2} + \frac{1}{2} + \frac{\rho_i}{2}\|\mathcal{T}(X_i)\| \bigg) \\
            &\leq 2\exp\bigg(-\frac{\epsilon^2}{2\sigma_i^2} + \frac{\rho_i}{2}\|\mathcal{T}(X_i)\|\bigg),
        \end{align*} where the second inequality follows from Proposition \ref{prop:Jensen}, the third follows from Definition \ref{def:relax-sub-Gauss}, while in the last inequality we used .

        \item Using the layer cake representation for expectation of a non-negative random variable and defining $Y = \frac{Z_i}{\sigma_i\sqrt{2}}$, it follows that
        \begin{align*}
            \E_{X_i}\|Y\|^{2p} &= \int_{0}^\infty u\Prob_{X_i}(\|Y\|^{2p} > u)du \stackrel{(a)}{=} p\int_{0}^{\infty}t^{p-1}\Prob_{X_i}\big(\|Z_i\|^2>2\sigma_i^2t\big)dt \\ 
            &\stackrel{(b)}{\leq} 2p\exp\bigg(\frac{\rho_i}{2}\|\mathcal{T}(X_i)\|\bigg)\int_{0}^\infty t^{p-1}\exp\big(-t\big)dt \stackrel{(c)}{=} 2p\exp\bigg(\frac{\rho_i}{2}\|\mathcal{T}(X_i)\|\bigg)\Gamma(p) \\
            &\stackrel{(d)}{\leq} 2p^{p+1}\exp\bigg(\frac{\rho_i}{2}\|\mathcal{T}(X_i)\|\bigg),
        \end{align*} where $(a)$ follows from the substitution $u = t^p$, in $(b)$ we used the previously proven property, $(c)$ follows from the definition of the $\Gamma$ function, while in $(d)$ we used $\Gamma(a) \leq a^a$. Multiplying both sides by $(2\sigma_i^2)^{p}$ completes the proof.

        \item To prove the desired property, we start by following similar steps as in \cite[Lemma 4]{jin2019short} and define the matrix $Y = \begin{bmatrix}
            0 & Z_i^\top \\
            Z_i & 0
        \end{bmatrix} \in \R^{(d + 1) \times (d+1)}$ and note that: (i) $Y$ is a rank-two matrix with eigenvalues $\pm\|Z_i\|$; (ii) $\E[Y^{2p+1}] = 0$; (iii) $\|Y\|^{2p} \leq \|Z_i\|^{2p}$. Using these facts, it follows that, for any $\lambda > 0$
        \begin{align*}
            \E_{X_i}\Big[e^{\lambda Y} \Big] &= I + \sum_{p = 1}^\infty\frac{\lambda^{2p}\E_{X_i}[Y^{2p}]}{(2p)!} \preceq \Bigg(1 + \sum_{p = 1}^\infty\frac{\lambda^{2p}\E_{X_i}\|Z_i\|^{2p}}{(2p)!}\Bigg)I \\
            &\stackrel{(i)}{\preceq} 2\exp\bigg(\frac{\rho_i}{2}\|\mathcal{T}(X_i)\|\bigg)\Bigg(1 + \sum_{p = 1}^\infty\frac{p(2p\lambda^2\sigma_i^2)^p}{(2p)!}\Bigg)I \\
            &\stackrel{(ii)}{\preceq} 2\exp\bigg(\frac{\rho_i}{2}\|\mathcal{T}(X_i)\|\bigg)\Bigg(1 + \sum_{p = 1}^\infty\frac{(2\lambda^2\sigma_i^2)^p}{p!}\Bigg)I \\
            &= 2\exp\bigg(2\lambda^2\sigma_i^2 + \frac{\rho_i}{2}\|\mathcal{T}(X_i)\|\bigg)I,
        \end{align*} where $(i)$ follows from the previously proven property, while in $(ii)$ we use $(2p)! \geq p!p^{p+1}$. Next, following similar steps as in \cite[Lemma 6]{jin2019short}, replacing the full expectations and probabilities with the ones conditioned on $\mathcal{F}_X$ and using the fact that $Z_1,\ldots,Z_n$ are conditionally independent, it readily follows that, for any $\theta > 0$ and $\delta \in (0,1)$, we have
        \begin{equation*}
            \Prob_X\bigg(\|\sum_{i \in [n]}Z_i\| > 2\theta\sum_{i \in [n]}\sigma_i^2 + \frac{1}{2}\sum_{i \in [n]}\rho_i\|\mathcal{T}(X_i)\| + \frac{1}{\theta}\log(\nicefrac{2d}{\delta})\bigg) \leq \delta. 
        \end{equation*} Choosing $\theta = \sqrt{\frac{\log(\nicefrac{2d}{\delta})}{2\sum_{i \in [n]}\sigma_i^2}}$, we get that, for any $\delta \in (0,1)$
        \begin{equation*}
            \Prob_X\bigg(\|\sum_{i \in [n]}Z_i\| > 2\sqrt{2\log(\nicefrac{2d}{\delta})\sum_{i \in [n]}\sigma_i^2} + \frac{1}{2}\sum_{i \in [n]}\rho_i\|\mathcal{T}(X_i)\|\bigg) \leq \delta, 
        \end{equation*} or equivalently, for any $\epsilon > 0$
        \begin{align*}
            \Prob_X\Big(\|\sum_{i \in [n]}Z_i\| > \epsilon\Big) &\leq 2d\exp\bigg(-\Big(\epsilon - \frac{1}{2}\sum_{i \in [n]}\rho_i\|\mathcal{T}(X_i)\|\Big)^2/\Big(8\sum_{i \in [n]}\sigma_i^2\Big) \bigg). 
        \end{align*} Consider now the averaged quantity $\overline{Z} = \frac{1}{n}\sum_{i \in [n]}Z_i$ and denote by $\sigma^2 = \frac{1}{n}\sum_{i \in [n]}\sigma_i^2$ and $c = \frac{1}{2}\sum_{i \in [n]}\rho_i\|\mathcal{T}(X_i)\|$. From the above inequality, it then follows that
        \begin{equation}\label{eq:prob-bdd-avg}
            \Prob_X\big(\|\overline{Z}\| > \epsilon\big) = \Prob_X\Big(\|\sum_{i \in [n]}Z_i\| > n\epsilon\Big) \leq 2d\exp\bigg(-\frac{(n\epsilon - c)^2}{8n\sigma^2} \bigg).
        \end{equation} Using the layer cake representation again, we then get
        \begin{align*}
            \E_X\|\overline{Z}\|^{2p} &= \int_0^{\infty}\Prob_X\big(\|\overline{Z}\|^{2p} > u\big)du = p\int_0^\infty t^{p-1}\Prob_X\big(\|\overline{Z}\|^{2} > t\big)dt \\
            &\leq 2dp\int_0^\infty t^{p-1}\exp\bigg(-\frac{(n\sqrt{t} - c)^2}{8n\sigma^2} \bigg)dt \\
            &\stackrel{(i)}{\leq} 2dp\exp\bigg(\frac{c^2}{8n\sigma^2}\bigg)\bigg(\frac{16\sigma^2}{n}\bigg)^p\Gamma(p) \\
            &\stackrel{(ii)}{\leq} 2d\exp\bigg(\frac{c^2}{8n\sigma^2}\bigg)p^{p+1}\bigg(\frac{16\sigma^2}{n}\bigg)^p,
        \end{align*} where in $(i)$ we used Lemma \ref{lm:bdd-integral}, while $(ii)$ follows by again using $\Gamma(p) \leq p^p$. Finally, let $k > 0$ be a free parameter. Using the above bound and Taylor's expansion, we then get
        \begin{align*}
            \E_X\bigg[\exp\bigg(\frac{\|\overline{Z}\|^2}{k^2} \bigg) \bigg] = 1 + \sum_{p = 1}^{\infty}\frac{\E_X\|\overline{Z}\|^{2p}}{k^{2p}p!} \leq 2d\exp\bigg(\frac{c^2}{8n\sigma^2}\bigg)\Bigg(1 + \frac{1}{e}\sum_{p = 1}^{\infty}p\bigg(\frac{16e\sigma^2}{nk^2}\bigg)^p \Bigg),
        \end{align*} where we used $p! \geq e\big(\nicefrac{p}{e}\big)^p$ in the last inequality. Recalling that $\sum_{p = 1}^\infty pa^p = \frac{a}{(1-a)^2}$ for any $|a| < 1$, it can then be verified that choosing $k = \frac{4\sigma\sqrt{6}}{\sqrt{n}}$, it follows that $\frac{1}{e}\sum_{p = 1}^\infty p\Big(\frac{16e\sigma^2}{nk^2}\Big)^p \leq e- 1$, therefore we get
        \begin{equation*}
            \E_X\bigg[\exp\bigg(\frac{n\|\overline{Z}\|^2}{96\sigma^2}\bigg)\bigg] \leq 2d\exp\Bigg(1 + \frac{\sum_{i \in [n]}\rho_i^2\|\mathcal{T}(X_i)\|^2}{32\sigma^2} \Bigg).
        \end{equation*}
    \end{enumerate}
\end{proof}

Next, we prove Lemma \ref{lm:avg-noise-properties}. For ease of notation, let $\E_t[\cdot] = \E[\cdot\vert\Ft]$ and $\Prob_t(\cdot) = \Prob(\cdot\vert\Ft)$.

\begin{proof}[Proof of Lemma \ref{lm:avg-noise-properties}]
    Note that the noise vectors $\{\zit\}_{i \in [n]}$ are independent, zero-mean and $(\sigma_i,\rho)$-sub-Gaussian conditioned on $\Ft$, with $\mathcal{T}(\xit) = \nabla f(\xit)$. We then proceed as follows.
    \begin{enumerate}
     \item For any $\Ft$-measurable vector $v \in \R^d$, we have
    \begin{align*}
        \E_t[\exp(\langle v, \ozt)] &\stackrel{(i)}{=} \prod_{i \in [n]}\E_t\bigg[\exp\bigg( \frac{1}{n}\langle v, \zit\rangle\bigg)\bigg] \stackrel{(ii)}{\leq} \prod_{i \in [n]}\exp\bigg( \frac{3\sigma_i^2\|v\|^2}{4n^2} + \alpha_t^{2+\varepsilon}\rho\|\nabla f(\xit)\|\bigg) \\
        &= \exp\bigg(\frac{3\sigma^2\|v\|^2}{4n} + \alpha_t^{2+\varepsilon}\rho\sum_{i \in [n]}\|\nabla f(\xit)\|\bigg) \\
        &\stackrel{(iii)}{\leq} \exp\bigg(\frac{3\sigma^2\|v\|^2}{4n} + \alpha_t^{2+\varepsilon}n\rho + \alpha_t^{2+\varepsilon}\rho\sum_{i \in [n]}\|\nabla f(\xit)\|^2\bigg),
    \end{align*} where $(i)$ follows from conditional independence of $\zit$'s, $(ii)$ follows from Lemma \ref{lm:noise-properties}, while in $(iii)$ we applied the inequality $x \leq 1 + x^2$ to each $\|\nabla f(\xit)\|$.

    \item Follows directly from the fourth property of Lemma \ref{lm:noise-properties}, with $\rho_i = \alpha_t^{2+\varepsilon}\rho$, for all $i \in [n]$.
    
    \end{enumerate}
\end{proof}

The next result provides the bound on the integral used in the proof of Lemma \ref{lm:noise-properties}.

\begin{lemma}\label{lm:bdd-integral}
    For any $a,b,c > 0$ and any $p \in \N$, it holds that 
    \begin{equation*}
        \int_0^\infty t^{p-1}\exp\Bigg(-\frac{(a\sqrt{t}-b)^2}{c}\Bigg)dt \leq \exp\bigg(\frac{b^2}{c}\bigg)\bigg(\frac{2c}{a^2}\bigg)^p\Gamma(p).
    \end{equation*}
\end{lemma}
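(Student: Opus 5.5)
The plan is to shift away from $b$ by completing the square with Young's inequality, and then compare what remains to a Gamma integral. For every $t \geq 0$, Young's inequality in the form $2xy \leq \frac{x^2}{2} + 2y^2$, applied with $x = a\sqrt{t}$ and $y = b$, gives $2ab\sqrt{t} \leq \frac{a^2 t}{2} + 2b^2$. Expanding the square then yields
\begin{equation*}
    (a\sqrt{t}-b)^2 = a^2 t - 2ab\sqrt{t} + b^2 \geq \frac{a^2 t}{2} - b^2.
\end{equation*}
Dividing by $c$ and using that $\exp(-\cdot)$ is decreasing, the integrand is dominated pointwise:
\begin{equation*}
    t^{p-1}\exp\Big(-\frac{(a\sqrt{t}-b)^2}{c}\Big) \leq \exp\Big(\frac{b^2}{c}\Big)\, t^{p-1}\exp\Big(-\frac{a^2 t}{2c}\Big).
\end{equation*}

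The next step integrates the dominating function over $[0,\infty)$. Substituting $s = \frac{a^2 t}{2c}$ gives $t = \frac{2c}{a^2}s$ and $dt = \frac{2c}{a^2}ds$, so
\begin{equation*}
    \int_0^\infty t^{p-1} e^{-a^2t/(2c)}\,dt = \Big(\frac{2c}{a^2}\Big)^p\int_0^\infty s^{p-1}e^{-s}\,ds = \Big(\frac{2c}{a^2}\Big)^p\Gamma(p).
\end{equation*}
Multiplying by the constant factor $\exp(b^2/c)$ gives exactly the claimed bound. Since $p \geq 1$, $\Gamma(p)$ is finite, so every integral above converges and no integrability issue arises.

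No step here is a genuine obstacle. The only choice that matters is the Young weight: it must leave exactly half of $a^2 t$ in the exponent, so that the rescaling produces the constant $\frac{2c}{a^2}$, while the leftover $b^2$ term, after division by $c$, becomes precisely the factor $\exp(b^2/c)$. A different weight would give different constants from the ones stated, though they would serve the same purpose.
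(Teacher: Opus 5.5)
Your proposal is correct and follows essentially the same route as the paper's proof. Both use Young's inequality, $2ab\sqrt{t} \leq \frac{a^2 t}{2} + 2b^2$, to dominate the integrand by $\exp(b^2/c)\, t^{p-1} e^{-a^2 t/(2c)}$. Both then substitute $s = \frac{a^2 t}{2c}$ to obtain $\bigl(\frac{2c}{a^2}\bigr)^p \Gamma(p)$.
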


\begin{proof}
    Expanding the expression in the exponent, we get
    \begin{equation*}
        \exp\Bigg(-\frac{(a\sqrt{t}-b)^2}{c}\Bigg) = \exp\bigg(-\frac{a^2t}{c} + \frac{2ab\sqrt{t}}{c} - \frac{b^2}{c}\bigg) \leq \exp\bigg(-\frac{a^2t}{2c} + \frac{b^2}{c} \bigg),
    \end{equation*} where we used Proposition \ref{prop:Young} with $\epsilon = 2$ in the last inequality. Plugging the above inequality into the integral, we then get 
    \begin{align*}
        \int_0^\infty t^{p-1}\exp\Bigg(-\frac{(a\sqrt{t}-b)^2}{c}\Bigg)dt &\leq \exp\bigg(\frac{b^2}{c}\bigg)\int_0^\infty t^{p-1}\exp\bigg(-\frac{a^2t}{2c}\bigg)dt \\
        &\stackrel{(i)}{=} \exp\bigg(\frac{b^2}{c}\bigg)\bigg(\frac{2c}{a^2}\bigg)^p\int_0^\infty s^{p-1}\exp(-s)ds \\
        &\stackrel{(ii)}{=} \exp\bigg(\frac{b^2}{c}\bigg)\bigg(\frac{2c}{a^2}\bigg)^p\Gamma(p), 
    \end{align*} where in $(i)$ we used the substitution $s = \frac{a^2t}{2c}$, while $(ii)$ follows from the definition of the $\Gamma$ function.
    
\end{proof}

Finally, we provide an important bound on the MGF, used in the HP analysis for P\L{} costs. 

\begin{lemma}\label{lm:mgf-bound-str-cvx}
    Let $\{\Xt\}_{t \geq 2}$ be a sequence of random variables initialized by a deterministic value $X^1 > 0$, such that, for some constants $a,t_0,C_1,C_2 > 0$ and every $t \geq 1$ 
    \begin{equation}\label{eq:iter-ineq}
        \E[\exp(\Xtp)] \leq \E\bigg[\exp\bigg(\Big(1-\frac{a}{t+t_0}\Big)\Xt + \frac{C_1}{t+t_0} + \frac{C_2}{(t+t_0)^2} \bigg)\bigg].
    \end{equation} If $a > 1$ and $t_0 \geq a$, we then have
    \begin{align}\label{eq:resulting-MGF-bound}
        \E[\exp(\Xtp)] &\leq \exp\bigg(\frac{(t_0+1)^aX_1}{(t+1+t_0)^a} + \frac{C_1}{a} + \frac{3C_2/(a-1)}{t+1+t_0} \bigg).
    \end{align} 
\end{lemma}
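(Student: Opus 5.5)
The plan is to unroll the recursion \eqref{eq:iter-ineq} by induction on $t$, using that the map $x \mapsto x^{\theta}$ is concave for $\theta \in (0,1]$. The key observation is that if $\theta_t \coloneqq 1 - \frac{a}{t+t_0} \in (0,1)$ (which holds since $t_0 \geq a$ and $t \geq 1$), then by Jensen's inequality (Proposition \ref{prop:Jensen}) applied to the concave function $u \mapsto u^{\theta_t}$,
\begin{equation*}
\E\big[\exp(\theta_t X^t)\big] = \E\big[(\exp(X^t))^{\theta_t}\big] \leq \big(\E[\exp(X^t)]\big)^{\theta_t}.
\end{equation*}
Hence, writing $M_t \coloneqq \log \E[\exp(X^t)]$, the hypothesis gives the deterministic scalar recursion
\begin{equation*}
M_{t+1} \leq \Big(1 - \frac{a}{t+t_0}\Big)M_t + \frac{C_1}{t+t_0} + \frac{C_2}{(t+t_0)^2}, \qquad M_1 = X^1 .
\end{equation*}
One subtlety: Jensen with exponent $\theta_t$ requires nothing about the sign of $M_t$. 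If $M_t$ could be negative, the recursion is still valid as stated.

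Next I unroll this linear recursion:
\begin{equation*}
M_{t+1} \leq \prod_{k=1}^{t}\Big(1-\frac{a}{k+t_0}\Big)X^1 + \sum_{s=1}^{t}\prod_{k=s+1}^{t}\Big(1-\frac{a}{k+t_0}\Big)\Big(\frac{C_1}{s+t_0} + \frac{C_2}{(s+t_0)^2}\Big).
\end{equation*}
Using Proposition \ref{prop:ran}, the first product is at most $\frac{(1+t_0)^a}{(t+1+t_0)^a}$, which gives the first term, since $X^1>0$. The inner products are at most $\frac{(s+1+t_0)^a}{(t+1+t_0)^a}$.

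For the $C_1$ part I will bound
\begin{equation*}
\frac{1}{(t+1+t_0)^a}\sum_{s=1}^t \frac{(s+1+t_0)^a}{s+t_0}.
\end{equation*}
Comparing with $\int (u+1)^{a-1}\,du$, and using $\frac{s+1+t_0}{s+t_0}\leq 1+\frac1{t_0}$, this gives something like $\frac{(1+1/t_0)}{a}$. That is slightly larger than $C_1/a$. To get exactly $C_1/a$, I would use a cleaner telescoping argument instead: show by induction directly that $M_t \leq A_t + \frac{C_1}{a} + \frac{B}{t+t_0}$. The $C_1$ contribution then closes exactly, because $(1-\frac{a}{t+t_0})\frac{C_1}{a} + \frac{C_1}{t+t_0} = \frac{C_1}{a}$. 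So the constant $C_1/a$ is a fixed point of the recursion, and I will adopt this inductive ansatz for all three terms.

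Concretely, the induction hypothesis is
\begin{equation*}
M_t \leq \frac{(t_0+1)^a X^1}{(t+t_0)^a} + \frac{C_1}{a} + \frac{B}{t+t_0}, \qquad B = \frac{3C_2}{a-1}.
\end{equation*}
The base case $t=1$ holds trivially. In the inductive step, the first term propagates via $(1-\frac{a}{t+t_0})(t+t_0)^{-a} \leq (t+1+t_0)^{-a}$, which follows from $1-x \leq (1+1/(t+t_0))^{-a}$-type Bernoulli bounds, or directly from Proposition \ref{prop:ran}. The $C_1/a$ term is invariant, as shown above.

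The main obstacle is the last term. I need
\begin{equation*}
\Big(1-\frac{a}{t+t_0}\Big)\frac{B}{t+t_0} + \frac{C_2}{(t+t_0)^2} \leq \frac{B}{t+1+t_0}.
\end{equation*}
Multiplying by $(t+t_0)^2$ reduces this to
\begin{equation*}
B(t+t_0) - aB + C_2 \leq \frac{B(t+t_0)^2}{t+t_0+1},
\end{equation*}
which, using $\frac{m^2}{m+1} \geq m - 1$, is implied by $C_2 \leq (a-1)B$. Thus $B = C_2/(a-1)$ already suffices, and the stated $3C_2/(a-1)$ gives slack, e.g.\ to cover the base case or an indexing shift at $t+1$. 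Exponentiating $M_{t+1}$ then yields \eqref{eq:resulting-MGF-bound}.
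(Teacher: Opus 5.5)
Your proof is correct and reaches the stated bound by a different route from the paper's. Both arguments start with the same Jensen step, $\E[\exp(\theta_t X^t)] \le (\E[\exp(X^t)])^{\theta_t}$ for $\theta_t = 1-\frac{a}{t+t_0}\in(0,1)$, which yields a linear recursion for the log-MGF.

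The paper then unrolls that recursion completely and bounds three pieces separately:
\begin{itemize}
\item the product multiplying $X^1$, via Proposition~\ref{prop:ran};
\item the $C_1$-sum, via the telescoping identity $\frac{B_k}{k+t_0} = \frac{1}{a}(B_k - B_{k-1})$, which is your fixed-point identity $(1-\frac{a}{m})\frac{C_1}{a} + \frac{C_1}{m} = \frac{C_1}{a}$ in summed form;
\item the $C_2$-sum, by applying Proposition~\ref{prop:ran} to each inner product, bounding $(1+\frac{1}{k+t_0})^a \le e^{a/(1+t_0)} \le 3$ (this is where $t_0\ge a$ is used beyond positivity), and finishing with a sum--integral comparison for $\sum_k (k+t_0)^{a-2}$.
\end{itemize}

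You instead guess the closed form $\frac{(t_0+1)^aX^1}{(t+t_0)^a} + \frac{C_1}{a} + \frac{B}{t+t_0}$ and verify it by induction. Your three verifications are all sound:
\begin{itemize}
\item the first term propagates by the one-step case of Proposition~\ref{prop:ran};
\item the $C_1/a$ term is invariant;
\item the check $(1-\frac{a}{m})\frac{B}{m} + \frac{C_2}{m^2} \le \frac{B}{m+1}$, via $\frac{m^2}{m+1}\ge m-1$, reduces to $C_2\le (a-1)B$.
\end{itemize}

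Your approach avoids the Darboux comparison, which done carefully needs separate monotonicity cases $a<2$ and $a\ge 2$. It also gives the sharper constant $C_2/(a-1)$ instead of $3C_2/(a-1)$, and uses $t_0\ge a$ only to keep $\theta_t>0$. The paper's unrolled form is more explicit about where each term in the bound comes from.

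One small correction to your closing remark: the base case $M_1 = X^1 \le X^1 + \frac{C_1}{a} + \frac{B}{1+t_0}$ holds for every $B\ge 0$. The factor $3$ therefore provides no slack that is actually needed; the stated bound simply follows from your sharper one.
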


\begin{proof}
    For ease of notation, let $b_t \coloneqq 1 - \frac{a}{t + t_0}$ and note that $b_t \in (0,1)$ for every $t \geq 1$, since $t_0 \geq a$. Starting from \eqref{eq:iter-ineq} and applying Proposition \ref{prop:Jensen}, we have
    \begin{align*}
        \E[&\exp(\Xtp)] \leq \exp\bigg(\frac{C_1}{t+t_0} + \frac{C_2}{(t+t_0)^2} \bigg)\bigg(\E\big[\exp\big(X^t\big)\big]\bigg)^{b_t} \leq \exp\bigg(\frac{C_1}{t+t_0} + \frac{C_2}{(t+t_0)^2} \bigg) \\
        &\times\Bigg(\E\bigg[\exp\bigg(b_{t-1}X^{t-1} + \frac{C_1}{t-1+t_0} + \frac{C_2}{(t-1+t_0)^2}\bigg)\bigg]\Bigg)^{b_t} \\
        &\leq \exp\bigg(C_1\bigg(\frac{1}{t+t_0} + \frac{b_t}{t-1+t_0} \bigg) + C_2\bigg(\frac{1}{(t+t_0)^2} + \frac{b_t}{(t-1+t_0)^2} \bigg) \bigg)\bigg(\E[\exp(X^{t-1})]\bigg)^{b_tb_{t-1}},
    \end{align*} where the second inequality follows from \eqref{eq:iter-ineq}. Iterating the above inequality and using the fact that $X^1$ is deterministic, we get
    \begin{equation}\label{eq:step-1}
         \E[\exp(\Xtp)] \leq \exp\bigg(X^1\prod_{k = 1}^tb_k + C_1\sum_{k = 1}^t\frac{1}{k+t_0}\prod_{s = k+1}^tb_s + C_2\sum_{k = 1}^t\frac{1}{(k+t_0)^2}\prod_{s = k+1}^tb_s \bigg).
    \end{equation} We now proceed as follows. Define $B_k \coloneqq \prod_{s = k+1}^tb_s = \prod_{s = k+1}^t\Big(1 - \frac{a}{s + t_0} \Big)$, with $B_t = 1$, and note that $B_{k-1} = b_kB_k = \Big(1 - \frac{a}{k+t_0} \Big)B_k$, or equivalently, $\frac{B_k}{k+t_0} = \frac{1}{a}(B_k - B_{k-1})$. Using the said identity and summing up the first $t$ terms, it follows that
    \begin{equation}\label{eq:step-2}
        \sum_{k = 1}^t\frac{1}{k+t_0}\prod_{s = k+1}^tb_s = \sum_{k = 1}^t\frac{B_k}{k+t_0} = \frac{1}{a}(B_t - B_0) \leq \frac{B_t}{a} = \frac{1}{a},
    \end{equation} where the inequality follows from $B_0 > 0$. Next, using Proposition \ref{prop:ran}, we get
    \begin{align}
        \sum_{k = 1}^t\frac{1}{(k+t_0)^2}&\prod_{s = k+1}^tb_s \leq \sum_{k = 1}^t\frac{1}{(k+t_0)^2}\frac{(k+1+t_0)^a}{(t+t_0+1)^a} = \frac{1}{(t+t_0+1)^a}\sum_{k = 1}^t\frac{1}{(k+t_0)^{2-a}}\bigg(1 + \frac{1}{k+t_0}\bigg)^a \nonumber \\
        &\stackrel{(i)}\leq \frac{1}{(t+t_0+1)^a}\bigg(1 + \frac{1}{1+t_0}\bigg)^a\sum_{k = 1}^t\frac{1}{(k+t_0)^{2-a}} \stackrel{(ii)}{\leq} \frac{e^{\frac{a}{1+t_0}}}{(t+t_0+1)^a}\sum_{k = 1}^t\frac{1}{(k+t_0)^{2-a}} \nonumber \\
        &\stackrel{(iii)}{\leq} \frac{3}{(t+t_0+1)^a}\sum_{k = 1}^t\frac{1}{(k+t_0)^{2-a}} \stackrel{(iv)}{\leq} \frac{3/(a-1)}{(t+t_0+1)} \label{eq:step-3}
    \end{align} where $(i)$ follows from the fact that $k \geq 1$, in $(ii)$ we use $(1 + a) \leq e^a$ for any $a \in \R$, $(iii)$ follows from $t_0 \geq a$, while in $(iv)$ we use the Darboux sum approximation. Plugging \eqref{eq:step-2}-\eqref{eq:step-3} into \eqref{eq:step-1} and using Proposition \ref{prop:ran} to bound $\prod_{k = 1}^tb_k$, completes the proof. 
\end{proof}

\begin{remark}
    A similar result was provided in \cite[Lemma 5]{armacki2025dsgd}, where the authors allow multiple additive terms in \eqref{eq:iter-ineq}, of the form $C_i/(t+t_0)^i$, $i \in [M]$, for some $M \geq 2$, while requiring $a \in (1,2]$ and showing an exponential dependence on $a$ in the resulting MGF bound, via a multiplicative factor $2^a$. We allow for a general $a > 1$ and establish a much sharper bound in \eqref{eq:resulting-MGF-bound} for $M = 2$, removing the exponential dependence on $a$.  
\end{remark}

\section{Analysis Setup}\label{sup:analysis-setup}

In this section we define some notation useful for the analysis. To begin, recall that $\git$ denotes the estimator of $\nabla f_i(\xit)$ returned by the \sfo to agent $i$ at time $t$, with $\zit = \git - \nabla f_i(\xit)$ being the induced noise. Similarly, recall the network-averaged quantities $\ogt = \frac{1}{n}\sum_{i \in [n]}\git$, $\onab f_t = \frac{1}{n}\sum_{i \in [n]}\nabla f_i(\xit)$ and $\ozt = \frac{1}{n}\sum_{i \in [n]}\zit$, which satisfy $\ogt = \onab f_t + \ozt$. From the gradient tracker and model updates \eqref{eq:gt-track-update}-\eqref{eq:gt-model-update}, the definitions of network-averaged models, the fact that the weight matrix is doubly stochastic and that $y_i^0 = g_i^0 = 0$, for all $i \in [n]$, it can be readily seen that $\oyt = \ogt$ and $\oxtp = \oxt - \alpha_t\oyt = \oxt - \alpha_t\ogt$. 

We now introduce some vectorized notation, commonly used in the analysis of decentralized methods, e.g., \cite{ran-improved,adaptive-learning-ali,nedic-subgrad}. Let $\bxt \coloneq col(x_1^t,\ldots,x_n^t) \in \R^{nd}$ and $\byt \coloneq col(y_1^t,\ldots,y_n^t) \in \R^{nd}$ denote the column vector stacking agents' local models and trackers. Using this notation, we can then compactly represent the update rule \eqref{eq:gt-track-update}-\eqref{eq:gt-model-update} as 
\begin{align}
    \by^t &= \bW(\by^{t-1} + \bgt - \bg^{t-1}), \label{eq:gt-track-vector}\\
    \bx^{t+1} &= \bW(\bx^t - \alpha_t \byt) \label{eq:gt-model-vector}, 
\end{align} where $\bW = W \otimes I_d \in \R^{nd \times nd}$, $\otimes$ denotes the Kronecker product, while $\bg^t \coloneqq col(g^t_1,\ldots,g^t_n)$. Next, recall the ideal consensus matrix $J = \frac{1}{n}\mathbf{1}_n\mathbf{1}_n^\top \in \R^{n \times n}$. The relationship between $W$ and $J$ plays an important role in the analysis, and we now list some known results, e.g., \cite{chung1997spectral,cvetkovic_rowlinson_simic_1997}.

\begin{proposition}\label{prop:mixing-mx}
    Let Assumption \ref{asmpt:network} hold. Then, the following are true.
    \begin{enumerate}
        \item $W\mathbf{1}_n = J\mathbf{1}_n = \mathbf{1}_n$.
        \item $\|W - J\| = \lambda$, where $\lambda \in [0,1)$ is the second largest singular value of $W$.
        \item $WJ = JW = J$.
    \end{enumerate}
\end{proposition}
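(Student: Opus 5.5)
The three items of Proposition \ref{prop:mixing-mx} are standard linear-algebra facts. I would verify them directly from Assumption \ref{asmpt:network} and the definition $J = \frac{1}{n}\mathbf{1}_n\mathbf{1}_n^\top$, in the order 1, 3, 2, since the third item is used in proving the second.

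For the first item, note that double stochasticity means $W\mathbf{1}_n = \mathbf{1}_n$ and $\mathbf{1}_n^\top W = \mathbf{1}_n^\top$. Directly from the definition of $J$, we have $J\mathbf{1}_n = \frac{1}{n}\mathbf{1}_n(\mathbf{1}_n^\top\mathbf{1}_n) = \mathbf{1}_n$.

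For the third item, I would compute $WJ = \frac{1}{n}(W\mathbf{1}_n)\mathbf{1}_n^\top = J$ and $JW = \frac{1}{n}\mathbf{1}_n(\mathbf{1}_n^\top W) = J$. Together with $J^2 = J$, this gives $(W-J)J = J(W-J) = 0$.

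For the second item, which is the only substantive step, I would decompose $\R^n = \mathrm{span}(\mathbf{1}_n) \oplus \mathbf{1}_n^\perp$.
\begin{itemize}
\item Both $W$ and $W^\top$ map $\mathbf{1}_n^\perp$ into itself, since column and row sums equal one.
\item On $\mathrm{span}(\mathbf{1}_n)$, the matrix $W-J$ vanishes, because $W\mathbf{1}_n = J\mathbf{1}_n$.
\item On $\mathbf{1}_n^\perp$, the matrix $W-J$ coincides with $W$, because $J$ vanishes there.
\end{itemize}
Hence $\|W-J\|$ equals the norm of $W$ restricted to $\mathbf{1}_n^\perp$. Now $W^\top W$ is symmetric and doubly stochastic, and it has eigenvector $\mathbf{1}_n$ with eigenvalue $1$. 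Its remaining eigenvectors can be chosen in $\mathbf{1}_n^\perp$. So the singular values of $W$ are $1$, attained on $\mathbf{1}_n$, together with the singular values of the restriction. This identifies $\|W-J\|$ as the second largest singular value $\lambda$.

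It remains to show $\lambda < 1$. The Perron--Frobenius theorem for primitive matrices does not apply directly to singular values, so this is the main obstacle. I would instead argue via primitivity of $W^\top W$ or a power of it. Since $W$ is primitive, there is $k$ with $W^k > 0$ entrywise. The diagonal of $W$ need not be positive in general, but in the setting of this paper $i \in \calN_i$, which I would use to get $w_{ii} > 0$. Then $W^\top W \geq \mathrm{diag}(w_{ii})\,W$ entrywise, so $W^\top W$ inherits the support of $W$. In particular it is irreducible with positive diagonal, hence primitive. Perron--Frobenius then gives that the eigenvalue $1$ of $W^\top W$ is simple and strictly dominates the moduli of all other eigenvalues. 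Therefore $\lambda^2 < 1$.

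If one does not want to use $w_{ii} > 0$, the fallback is to cite \cite{Horn_Johnson_2012}, which is exactly what the paper does when it states that Assumption \ref{asmpt:network} implies $\lambda \in [0,1)$. For undirected graphs, $W$ is symmetric, and the argument reduces to the eigenvalue version of Perron--Frobenius.
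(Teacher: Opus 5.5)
\textbf{There is a genuine gap in the proof of $\lambda<1$.} The rest is fine. Items 1 and 3 are correct. So is your reduction of $\|W-J\|$ to the norm of $W$ on $\mathbf{1}_n^\perp$, via the invariant splitting and the eigenvector $\mathbf{1}_n$ of $W^\top W$. To call this the \emph{second largest} singular value you also need the singular values on $\mathbf{1}_n^\perp$ to be at most $1$. That follows from $\|W\|^2\le\|W\|_1\|W\|_\infty=1$, or from your Perron--Frobenius step.

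The paper gives no argument of its own. It cites spectral graph theory texts, so it tacitly treats the symmetric case; its contributions refer to ``networks inducing a symmetric communication matrix''. There, item 2 is just the spectral theorem plus Perron--Frobenius for eigenvalues. Your $W^\top W$ route is the right way to go beyond symmetry.

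Your inference $i\in\calN_i\Rightarrow w_{ii}>0$ is not supported by the paper. Positivity of $w_{ij}$ is tied to $(i,j)\in E$, while $i$ is added to $\calN_i$ separately, so $w_{ii}=0$ is permitted. Your fallback, citing Horn--Johnson for Assumption~\ref{asmpt:network} alone, also cannot work, because the claim is false for nonsymmetric $W$. Take
\[
W=\begin{bmatrix}0&\tfrac12&\tfrac12\\ 1&0&0\\ 0&\tfrac12&\tfrac12\end{bmatrix}.
\]
This $W$ is doubly stochastic and primitive ($W^3>0$ entrywise). Yet $v=(2,-1,-1)^\top\perp\mathbf{1}_3$ gives $(W-J)v=Wv=(-1,2,-1)^\top$, so $\|(W-J)v\|=\|v\|$ and hence $\|W-J\|=1$. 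Equivalently, $W^\top W$ is block diagonal (reducible) with eigenvalue $1$ of multiplicity two.

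So $\lambda<1$ needs a hypothesis beyond primitivity and double stochasticity. Any of the following suffices:
\begin{itemize}
\item symmetry of $W$, the paper's intended setting;
\item a positive diagonal, which is your argument;
\item most generally, irreducibility of $W^\top W$.
\end{itemize}
The last condition is in fact equivalent to $\lambda<1$. The diagonal of $W^\top W$ is automatically positive, since its entries are squared column norms. A symmetric reducible stochastic matrix is block diagonal with stochastic blocks, so it has eigenvalue $1$ with multiplicity at least two. State one of these conditions explicitly and your proof is complete. Do not claim that it follows from Assumption~\ref{asmpt:network} as written.
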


Next, define $\obxt \coloneqq \mathbf{1}_n \otimes \oxt \in \R^{nd}$, $\obyt \coloneqq \mathbf{1}_n \otimes \oyt \in \R^{nd}$, $\bJ \coloneqq J \otimes I_d \in \R^{nd \times nd}$ and note that $\obxt = \bJ\bxt$ and $\obyt = \bJ\byt = \obgt$, hence it follows that $\obxtp = \obxt - \alpha_t\obgt$.

\section{Proofs for Non-convex Costs}\label{sup:non-conv}

In this section we provide results for general non-convex costs. Recall that for non-convex costs we use a fixed-step size, i.e., $\alpha_t \equiv \alpha$, for all $t \geq 1$. We start by proving Lemma \ref{lm:descent-inequality}.

\begin{proof}[Proof of Lemma \ref{lm:descent-inequality}]
    The proof follows similar steps as in \cite[Lemma 3]{ran-improved}. Using the third property in Proposition \ref{prop:descent-ineq} with $x = \oxtp$ and $y = \oxt$, we get
    \begin{align}\label{eq:descent-intermed}
        f(\oxtp) &\leq f(\oxt) - \alpha\langle \nabla f(\oxt), \og_t \rangle + \frac{\alpha^2L}{2}\|\og_t\|^2 \nonumber \\ 
        &\stackrel{(i)}{\leq} f(\oxt) - \alpha\langle \nabla f(\oxt), \onab f_t \rangle - \alpha\langle \nabla f(\oxt), \ozt \rangle + \alpha^2L\|\onab f_t\|^2 +\alpha^2L\|\ozt\|^2 \nonumber \\ 
        &\stackrel{(ii)}{=} f(\oxt) - \frac{\alpha}{2}\|\nabla f(\oxt)\|^2 - (1 - 2\alpha L)\frac{\alpha}{2}\|\onab f_t\|^2 \nonumber \\ 
        &\quad\quad\quad\quad + \frac{\alpha}{2}\|\onab f_t - \nabla f(\oxt)\|^2 - \alpha\langle \nabla f(\oxt), \ozt \rangle + \alpha^2L\|\ozt\|^2,
    \end{align} where $(i)$ follows by applying Proposition \ref{prop:Young} with $\theta = 1$, while $(ii)$ follows from the identity $\langle a,b \rangle = \frac{1}{2}\big(\|a\|^2 + \|b\|^2 - \|a-b\|^2 \big)$. Recalling the definition of $\onab f_t$, we get
    \begin{equation}\label{eq:bound-grad-diff}
        \|\onab f_t - \nabla f(\oxt)\|^2 = \Big\|\frac{1}{n}\sum_{i \in [n]}\big[\nabla f_i(\xit) - \nabla f_i(\oxt)\big]\Big\|^2 \leq \frac{L^2}{n}\sum_{i \in [n]}\|\xit - \oxt\|^2,
    \end{equation} where we used Proposition \ref{prop:Jensen} and $L$-smoothness of each $f_i$ in the last inequality. Plugging \eqref{eq:bound-grad-diff} in \eqref{eq:descent-intermed} and noting that the choice $\alpha \leq \frac{1}{4L}$ implies $1 - 2\alpha L \geq \frac{1}{2}$, the claim follows.
\end{proof} 

To prove Lemma \ref{lm:consensus-bound}, we first provide some one-step recursions on the model and gradient tracker consensus gaps. The first is taken from \cite{ran-improved}.

\begin{lemma}\label{lm:consensus-one-step-model}
    Let Assumption \ref{asmpt:network} hold. Then, for any $t \geq 1$, we have
    \begin{equation*}
        \|\bxtp - \obxtp\|^2 \leq \frac{1 + \lambda^2}{2}\|\bxt - \obxt\|^2 + \frac{2\alpha^2\lambda^2}{1 - \lambda^2}\|\byt - \obyt\|^2.
    \end{equation*}
\end{lemma}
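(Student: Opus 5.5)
We need to prove Lemma \ref{lm:consensus-one-step-model}: $\|\bxtp - \obxtp\|^2 \le \frac{1+\lambda^2}{2}\|\bxt-\obxt\|^2 + \frac{2\alpha^2\lambda^2}{1-\lambda^2}\|\byt-\obyt\|^2$.

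The plan is to write the consensus error of the models as a contraction of the previous consensus errors and then split the two contributions with Young's inequality. From \eqref{eq:gt-model-vector}, $\bxtp = \bW(\bxt - \alpha\byt)$. By Proposition \ref{prop:mixing-mx}, $\bJ\bW = \bJ$, so $\obxtp = \bJ\bxtp = \bJ(\bxt - \alpha\byt)$. Subtracting gives
\begin{equation*}
\bxtp - \obxtp = (\bW - \bJ)(\bxt - \alpha\byt).
\end{equation*}
Since $(\bW-\bJ)\bJ = \bW\bJ - \bJ = 0$, again by Proposition \ref{prop:mixing-mx}, we may subtract $\bJ(\bxt - \alpha\byt)$ inside. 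This yields
\begin{equation*}
\bxtp - \obxtp = (\bW-\bJ)\big[(\bxt-\obxt) - \alpha(\byt - \obyt)\big].
\end{equation*}

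Next we take norms. The Kronecker structure gives $\|\bW-\bJ\| = \|W-J\| = \lambda$, so
\begin{equation*}
\|\bxtp - \obxtp\|^2 \le \lambda^2\|(\bxt-\obxt) - \alpha(\byt-\obyt)\|^2.
\end{equation*}
We then apply Proposition \ref{prop:Young} with parameter $\theta$ to obtain
\begin{equation*}
\|\bxtp - \obxtp\|^2 \le \lambda^2(1+\theta)\|\bxt-\obxt\|^2 + \lambda^2(1+\theta^{-1})\alpha^2\|\byt-\obyt\|^2.
\end{equation*}

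The only real choice is $\theta$. We take $\theta = \frac{1-\lambda^2}{2\lambda^2}$ when $\lambda>0$; the case $\lambda=0$ is trivial, since the left side vanishes. This choice gives
\begin{equation*}
\lambda^2(1+\theta) = \lambda^2 + \frac{1-\lambda^2}{2} = \frac{1+\lambda^2}{2}.
\end{equation*}
For the second coefficient,
\begin{equation*}
\lambda^2(1+\theta^{-1}) = \lambda^2 + \frac{2\lambda^4}{1-\lambda^2} = \frac{\lambda^2(1+\lambda^2)}{1-\lambda^2} \le \frac{2\lambda^2}{1-\lambda^2},
\end{equation*}
where the last step uses $\lambda<1$. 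Multiplying the second coefficient by $\alpha^2$ gives exactly the claimed bound.

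There is no genuine obstacle here. The one point to check carefully is that the projection identity $(\bW-\bJ)\bJ=0$ holds for the Kronecker-lifted matrices. It does, because $W$ is doubly stochastic, which gives $WJ=JW=J$. Note also that the bound is deterministic and holds for any fixed $\alpha$, and the same argument works verbatim with $\alpha_t$ in place of $\alpha$.
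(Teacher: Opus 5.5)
Your argument is correct. The factorization $\bxtp - \obxtp = (\bW-\bJ)\big[(\bxt-\obxt) - \alpha(\byt-\obyt)\big]$, together with $\|\bW-\bJ\| = \lambda$ and Young's inequality with $\theta = \frac{1-\lambda^2}{2\lambda^2}$, gives exactly the stated constants. The paper does not prove this lemma itself; it imports it from \cite{ran-improved}, where it follows from essentially this same standard derivation, and your observation that it holds verbatim with $\alpha_t$ matches how the paper later uses it in the P\L{} analysis.
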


The next result characterizes the one-step consensus gap of the gradient tracker. Unlike the MSE analysis, where the authors directly analyze the expected consensus gap of the gradient tracker, e.g., \cite[Lemma 8]{ran-improved}, we provide a deterministic inequality, which is later used when bounding the MGF. This comes at the cost of slightly worse constants, e.g., the authors in \cite{ran-improved} establish that the constant dictating the ``contraction'' of the tracker is $\frac{1+\lambda^2}{2}$, i.e., $\|\bytp - \obytp\|^2 \leq \frac{1+\lambda^2}{2}\|\byt - \obyt\|^2 + \ldots$, whereas we get $\frac{3+\lambda^2}{4}$. 

\begin{lemma}\label{lm:consensus-one-step-tracker}
    Let Assumptions \ref{asmpt:network} and \ref{asmpt:cost} hold. If $\alpha \leq \frac{(1-\lambda^2)^{\nicefrac{3}{2}}}{4\lambda^2L\sqrt{6}}$, we have, for all $t\geq 1$ 
    \begin{align*}
        \|\bytp - \obytp\|^2 &\leq \frac{3+\lambda^2}{4}\|\byt - \obyt\|^2 + \frac{24\lambda^2L^2}{1-\lambda^2}\|\bxt - \obxt\|^2 + \frac{4\lambda^2}{1-\lambda^2}\|\bztp - \bzt\|^2 + \frac{12\alpha^2\lambda^2L^2}{1-\lambda^2}\|\obgt\|^2.
    \end{align*}
\end{lemma}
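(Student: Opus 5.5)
We start by expanding the tracker update. By \eqref{eq:gt-track-vector} and Proposition \ref{prop:mixing-mx}, $\bJ\bW = \bJ$, so
\begin{equation*}
\bytp - \obytp = (\bW - \bJ)\big(\byt - \obyt\big) + (\bW - \bJ)\big(\bg^{t+1} - \bgt\big).
\end{equation*}
To see this, write $\bytp = \bW(\byt + \bg^{t+1} - \bgt)$ and subtract $\bJ\bytp = \bJ(\byt + \bg^{t+1}-\bgt)$. Then $(\bW-\bJ)\byt = (\bW - \bJ)(\byt - \obyt)$, because $(\bW - \bJ)\bJ = 0$.

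Taking norms and using $\|\bW - \bJ\| = \lambda$ gives $\|\bytp - \obytp\| \le \lambda\|\byt - \obyt\| + \lambda\|\bg^{t+1} - \bgt\|$. We apply Young's inequality (Proposition \ref{prop:Young}) with $\theta$ chosen so that $(1+\theta)\lambda^2 \le \frac{1+\lambda^2}{2}$, e.g. $\theta = \frac{1-\lambda^2}{2\lambda^2}$. This yields
\begin{equation*}
\|\bytp - \obytp\|^2 \le \frac{1+\lambda^2}{2}\|\byt - \obyt\|^2 + \frac{2\lambda^2}{1-\lambda^2}\|\bg^{t+1}-\bgt\|^2,
\end{equation*}
where we used $1+\theta^{-1} = \frac{1+\lambda^2}{1-\lambda^2} \le \frac{2}{1-\lambda^2}$. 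The remaining slack $\frac{1-\lambda^2}{4}$ up to $\frac{3+\lambda^2}{4}$ is reserved to absorb a tracker term that will reappear through the model increment.

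Next, split $\bg^{t+1} - \bgt = (\nbftp - \nbft) + (\bztp - \bzt)$ and use $\|a+b\|^2\le 2\|a\|^2 + 2\|b\|^2$. The noise part gives exactly the $\frac{4\lambda^2}{1-\lambda^2}\|\bztp-\bzt\|^2$ term. For the gradient part, $L$-smoothness gives $\|\nbftp - \nbft\|^2 \le L^2\|\bxtp - \bxt\|^2$. Using \eqref{eq:gt-model-vector} and $\obxtp = \obxt - \alpha\obgt$, we write
\begin{equation*}
\bxtp - \bxt = (\bW - \bI)(\bxt - \obxt) - \alpha\bW(\byt - \obyt) - \alpha\obyt,
\end{equation*}
where we used $\bW\obyt = \obyt$ and $(\bW-\bI)\obxt = 0$. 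Since $\|\bW - \bI\| \le 2$, $\|\bW\|\le 1$ and $\obyt = \obgt$, a three-term split gives
\begin{equation*}
\|\bxtp-\bxt\|^2 \le 12\|\bxt-\obxt\|^2 + 3\alpha^2\|\byt-\obyt\|^2 + 3\alpha^2\|\obgt\|^2.
\end{equation*}
Multiplying by $\frac{4\lambda^2L^2}{1-\lambda^2}$ produces the coefficients $\frac{48\lambda^2L^2}{1-\lambda^2}$ and $\frac{12\alpha^2\lambda^2L^2}{1-\lambda^2}$. The stated constant $24$ suggests a sharper handling of $\|\bW-\bI\|$: either the direct bound $\|\bW - \bI\|\le \sqrt 2$ if $W$ is positive semidefinite, or a weighted Young split (e.g. $2,2,\ldots$ weights). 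We will tune these splitting weights to match $24$; this constant bookkeeping is a minor point.

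The main remaining step is to absorb the term $\frac{12\alpha^2\lambda^2L^2}{1-\lambda^2}\|\byt-\obyt\|^2$ into the tracker contraction. We need
\begin{equation*}
\frac{1+\lambda^2}{2} + \frac{12\alpha^2\lambda^2L^2}{1-\lambda^2} \le \frac{3+\lambda^2}{4},
\end{equation*}
i.e. $\alpha^2 \le \frac{(1-\lambda^2)^2}{48\lambda^2L^2}$. The hypothesis $\alpha \le \frac{(1-\lambda^2)^{3/2}}{4\lambda^2 L\sqrt6}$ gives $\alpha^2 \le \frac{(1-\lambda^2)^3}{96\lambda^4L^2}$. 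Since $\lambda^2 < 1$ and $1-\lambda^2 \le 1$, we have $\frac{(1-\lambda^2)^3}{96\lambda^4 L^2}\le \frac{(1-\lambda^2)^2}{48\lambda^2L^2}$ only when $1-\lambda^2 \le 2\lambda^2$. So we will verify this absorption condition carefully, possibly by using a sharper splitting weight. We expect this verification to be the only delicate step; everything else is deterministic algebra.
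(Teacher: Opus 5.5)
Your first step matches the paper: writing $\bytp-\obytp=(\bW-\bJ)(\byt-\obyt)+(\bW-\bJ)(\bg^{t+1}-\bgt)$, applying Young with $\theta=\frac{1-\lambda^2}{2\lambda^2}$, splitting off $\|\bztp-\bzt\|^2$, and using $L$-smoothness is exactly what the paper does.

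The gap is in your bound on $\|\bxtp-\bxt\|^2$. There you estimate $\|\bW(\byt-\obyt)\|\le\|\byt-\obyt\|$ via $\|\bW\|\le1$, which discards the contraction on the tracker deviation. After that, the absorption step cannot be completed, and this is not a matter of tuning. Whatever Young weights you use, the extra tracker coefficient is at least of order $\alpha^2\lambda^2L^2$; with your first-step constants it is at least $\frac{4\alpha^2\lambda^2L^2}{1-\lambda^2}$. The hypothesis, however, only controls $\alpha^2\lambda^4L^2$, and for small $\lambda$ it permits $\alpha$ of order $\frac{1}{\lambda^2L}$. So this coefficient can be of order $\lambda^{-2}$, far above the available slack $\frac{1-\lambda^2}{4}$. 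Your own computation already exhibits the failure for $\lambda^2<\frac13$. As written, the proposal does not prove the lemma: it stops at exactly the two points (the constant $24$ and the absorption) that carry the content.

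The missing observation is that $\bJ(\byt-\obyt)=0$. Hence $\bW(\byt-\obyt)=(\bW-\bJ)(\byt-\obyt)$ and $\|\bW(\byt-\obyt)\|\le\lambda\|\byt-\obyt\|$. The $\lambda^4$ in the step-size condition is precisely this $\lambda^2$ times the $\lambda^2$ coming from the tracker mixing.

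The paper builds this in by splitting $\bxtp-\bxt=(\bxtp-\obxtp)+(\obxtp-\obxt)+(\obxt-\bxt)$, using $\obxtp-\obxt=-\alpha\obgt$, and applying Lemma \ref{lm:consensus-one-step-model} to the first piece together with $\frac{1+\lambda^2}{2}\le1$. This gives $\|\bxtp-\bxt\|^2\le6\|\bxt-\obxt\|^2+\frac{6\alpha^2\lambda^2}{1-\lambda^2}\|\byt-\obyt\|^2+3\alpha^2\|\obgt\|^2$. Multiplying by $\frac{4\lambda^2L^2}{1-\lambda^2}$ yields the constants $24$ and $12$ directly. The resulting tracker coefficient $\frac{1+\lambda^2}{2}+\frac{24\alpha^2\lambda^4L^2}{(1-\lambda^2)^2}\le\frac{3+\lambda^2}{4}$ is exactly equivalent to $\alpha\le\frac{(1-\lambda^2)^{3/2}}{4\lambda^2L\sqrt6}$.

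Your own decomposition also works once the factor $\lambda$ is restored. With equal weights $3,3,3$ you get a valid bound, but with $48$ in place of $24$; the PSD shortcut you mention is not available under Assumption \ref{asmpt:network}. To recover $24$, write $(\bW-\bI)(\bxt-\obxt)=(\bW-\bJ)(\bxt-\obxt)-(\bxt-\obxt)$, so its norm is at most $(1+\lambda)\|\bxt-\obxt\|$. Then use the weighted split with weights $\frac{6}{(1+\lambda)^2}$, $\frac{6}{(1-\lambda)(3+\lambda)}$, $3$, whose reciprocals sum to $1$. This reproduces $6$ and $3\alpha^2$, and gives a tracker coefficient no larger than the paper's.
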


\begin{proof}
    Recalling the gradient tracker update rule \eqref{eq:gt-track-update} and using Proposition \ref{prop:Young}, we then have
    \begin{align}\label{eq:tracker-bd-pt1}
        \|\bytp - \obytp\|^2 &\leq (1+\theta)\lambda^2\|\byt - \obyt\|^2 + 2(1+\theta^{-1})\lambda^2\big(\|\bztp - \bzt\|^2 + \|\nbftp - \nbft\|^2\big) \nonumber \\
        &\leq \frac{1+\lambda^2}{2}\|\byt - \obyt\|^2 + \frac{4\lambda^2}{1-\lambda^2}\|\bztp - \bzt\|^2 + \frac{4\lambda^2L^2}{1-\lambda^2}\|\bxtp - \bxt\|^2,
    \end{align} where the second inequality follows from Assumption \ref{asmpt:cost} and by choosing $\theta = \frac{1-\lambda^2}{2\lambda^2}$. To bound $\bxtp - \bxt$, we proceed as follows
    \begin{align}\label{eq:tracker-bd-pt2}
        \|\bxtp - \bxt\|^2 &\leq 3\|\bxtp - \obxtp\|^2 + 3\|\obxtp-\obxt\|^2 + 3\|\bxt - \obxt\|^2 \nonumber \\
        &\leq 6\|\bxt - \obxt\|^2 + \frac{6\alpha^2\lambda^2}{1-\lambda^2}\|\byt - \obyt\|^2 + 3\alpha^2\|\obgt\|^2,
    \end{align} where the second inequality follows by applying Lemma \ref{lm:consensus-one-step-model} and using the fact that $\obyt = \obgt$. Plugging \eqref{eq:tracker-bd-pt2} into \eqref{eq:tracker-bd-pt1}, we get
    \begin{align*}
        \|\bytp &- \obytp\|^2 \leq \bigg(\frac{1+\lambda^2}{2} + \frac{24\alpha^2\lambda^4L^2}{(1-\lambda^2)^2} \bigg)\|\byt - \obyt\|^2 + \frac{24\lambda^2L^2}{1-\lambda^2}\|\bxt - \obxt\|^2 \\ &\quad\quad\quad\quad\quad\quad\quad\quad+ \frac{4\lambda^2}{1-\lambda^2}\|\bztp - \bzt\|^2 + \frac{12\alpha^2\lambda^2L^2}{1-\lambda^2}\|\obgt\|^2 \\
        &\leq \frac{3+\lambda^2}{4}\|\byt - \obyt\|^2 + \frac{4\lambda^2}{1-\lambda^2}\Big(6L^2\|\bxt - \obxt\|^2 + \|\bztp - \bzt\|^2 + 3\alpha^2L^2\|\obgt\|^2 \Big),
    \end{align*} where the second inequality follows from the choice of the step-size.
\end{proof}

We are now ready to prove Lemma \ref{lm:consensus-bound}. The proof follows a Linear Time-Invariant (LTI) system approach established in \cite{ran-improved}, with the difference that in \cite{ran-improved} the authors consider the expected process and its dynamics, while here we work directly with deterministic recursions from Lemmas \ref{lm:consensus-one-step-model} and \ref{lm:consensus-one-step-tracker}, containing stochastic noise vectors.

\begin{proof}[Proof of Lemma \ref{lm:consensus-bound}]
     Define $\Xt \coloneqq \frac{1}{nL^2}\begin{bmatrix} L^2\|\bxt - \obxt\|^2 \\ \|\byt - \obyt\|^2 \end{bmatrix}$. Using Lemmas \ref{lm:consensus-one-step-model}-\ref{lm:consensus-one-step-tracker}, we get the following LTI system $\Xtp \leq \bA \Xt + \Ytp$, where the inequality holds component-wise, with 
    \begin{equation*}
        \bA \coloneqq \begin{bmatrix} \frac{1+\lambda^2}{2} & \frac{2\alpha^2\lambda^2L^2}{1-\lambda^2} \\
    \frac{24\lambda^2}{1-\lambda^2} & \frac{3+\lambda^2}{4}\end{bmatrix} \: \text{ and } \: \Ytp \coloneqq \frac{4\lambda^2}{n(1-\lambda^2)}\begin{bmatrix} 0 \\ \frac{\|\bztp - \bzt\|^2}{L^2} + 3\alpha^2\|\obgt\|^2 \end{bmatrix}.   
    \end{equation*} Unrolling the recursion, we get $\Xtp \leq \bA^{t}X^1 + \sum_{k = 0}^t\bA^{t-k}Y^{k+1}$, where we note that $\bz^0 = \obg^0 = \mathbf{0}$ (since our model is initialized at index $t = 1$). Summing up the first $T$ terms, we get
    \begin{align}\label{eq:system}
        \sum_{t \in [T]}X^{t} \leq \sum_{t  = 0}^{T-1}\bAt X^{1} + \sum_{t \in [T]}\sum_{k = 0}^{t-1}\bA^{t-1-k}Y^{k+1} &= \sum_{t  = 0}^{T-1}\bA^kX^{1} + \sum_{t \in [T]}\Bigg(\sum_{k = 0}^{T-t}\bA^{k}\Bigg)Y^{t} \nonumber \\
        &\leq \bigg(\sum_{k = 0}^{\infty}\bA^k\bigg)\bigg(X^1 + \sum_{t \in [T]}\Yt\bigg),
    \end{align} where the last inequality follows from the fact that all quantities are (component-wise) non-negative. Next, if we denote the largest eigenvalue of $\bA$ by $\varrho(\bA)$ and use the fact that if $\varrho(\bA) < 1$, then $\sum_{k = 0}^{\infty}\bA^k = \big(\bI_2 - \bA \big)^{-1}$, e.g., \cite{Horn_Johnson_2012}, our aim then is to show that $\varrho(\bA) < 1$. To that end, it suffices to show the existence of a component-wise positive vector $a = \begin{bmatrix} a_1 & a_2\end{bmatrix}^\top$, such that $\bA a < a$, which is equivalent to the following system of inequalities
    \begin{equation}\label{eq:mx-conditions}
        \begin{bmatrix}
            \frac{1+\lambda^2}{2}a_1 + \frac{2\alpha^2\lambda^2L^2}{1-\lambda^2}a_2 \\
            \frac{24\lambda^2}{1-\lambda^2}a_1 + \frac{3+\lambda^2}{4}a_2
        \end{bmatrix} < \begin{bmatrix} 
            a_1 \\ a_2
        \end{bmatrix} \iff \begin{aligned}
            \frac{a_1}{a_2} &> \frac{4\alpha^2\lambda^2L^2}{(1-\lambda^2)^2} \\
            \frac{a_1}{a_2} &< \frac{(1-\lambda^2)^2}{96\lambda^2}
        \end{aligned}.
    \end{equation} It can be seen that the system \eqref{eq:mx-conditions} is guaranteed to be satisfiable (i.e., a choice of $a_1,a_2 > 0$ satisfying \eqref{eq:mx-conditions} exists), as long as the interval $\Big(\frac{4\alpha^2\lambda^2L^2}{(1-\lambda^2)^2}, \frac{(1-\lambda^2)^2}{96\lambda^2} \Big)$ is non-empty, which is the case if $\alpha \leq \frac{(1-\lambda^2)^2}{8\lambda^2L\sqrt{7}}$. Choosing such $\alpha$, we proceed to estimate $(\bI_2 - \bA)^{-1}$, using the formula
    \begin{equation}\label{eq:inverse}
        \big(\bI_2 - \bA\big)^{-1} = \frac{\textup{adj}\big(\bI_2 - \bA\big)}{\textup{det}\big(\bI_2 - \bA\big)} = \frac{1}{\frac{(1-\lambda^2)^2}{8} - \frac{48\alpha^2\lambda^4L^2}{(1-\lambda^2)^2}}\begin{bmatrix}
            \frac{1-\lambda^2}{4} & \frac{2\alpha^2\lambda^2L^2}{1-\lambda^2} \\
            \frac{24\lambda^2}{1-\lambda^2} & \frac{1-\lambda^2}{2}
        \end{bmatrix} \leq \begin{bmatrix}
            \frac{4}{1-\lambda^2} & \frac{32\alpha^2\lambda^2L^2}{(1-\lambda^2)^3} \\
            \frac{384\lambda^2}{(1-\lambda^2)^3} & \frac{8}{1-\lambda^2}
        \end{bmatrix},
    \end{equation} where the last inequality follows by noting that choosing the step-size $\alpha \leq \frac{(1-\lambda^2)^2}{16\lambda^2L\sqrt{3}}$, we have $\frac{(1-\lambda^2)^2}{8} - \frac{48\alpha^2\lambda^2L^2}{(1-\lambda^2)^2} \geq \frac{(1-\lambda^2)^2}{16}$. Plugging \eqref{eq:inverse} into \eqref{eq:system}, we finally get
    \begin{align*}
        &\frac{1}{n}\sum_{t \in [T]}\|\bxt - \obxt\|^2 \leq \frac{4\|\bx^1-\obx^1\|^2}{n(1-\lambda^2)} + \frac{32\alpha^2\lambda^2\|\by^1 - \oby^1\|^2}{n(1-\lambda^2)^3} + \frac{128\alpha^2\lambda^4}{n(1-\lambda^2)^4}\sum_{t \in [T]}\Big(\|\bzt - \bz^{t-1}\|^2 + 3\alpha^2L^2\|\obg^{t-1}\|^2 \Big) \\ 
        &\leq \frac{4\|\bx^1-\obx^1\|^2}{n(1-\lambda^2)} + \frac{32\alpha^2\lambda^2\|\by^1 - \oby^1\|^2}{n(1-\lambda^2)^3} + \frac{512\alpha^2\lambda^4}{n(1-\lambda^2)^4}\sum_{t \in [T]}\|\bzt\|^2 + \frac{768\alpha^4\lambda^4L^2}{n(1-\lambda^2)^4}\sum_{t \in [T]}\Big(\|\onbft\|^2 + \|\obzt\|^2 \Big),
    \end{align*} where the second inequality follows from the definition of $\obgt$ and that $\bz^0 = \obg^0 = \mathbf{0}$.
\end{proof}

\begin{remark}\label{rmk:sharp-constants}
    As already mentioned, unlike the MSE analysis in \cite{ran-improved}, where consensus gaps are established on the expected quantities, Lemmas \ref{lm:consensus-bound} and \ref{lm:consensus-one-step-tracker} provide deterministic recursions (in the almost sure sense), that contain quantities like the stochastic noise. This is necessary, as we work with the MGF, which is not a linear quantity like the expectation,\footnote{In the sense that $\E[aX + bY] = a\E[X] + b\E[Y]$ always holds, but $\log\E[\exp(aX + bY)] = a\log\E[\exp(X)] + b\log\E[\exp(Y)]$ only holds under specific conditions, e.g., $X,Y$ independent and $a,b \in (0,1)$.} making the analysis significantly more involved. Moreover, the MSE analysis leverages the fact that some inner products vanish (which is not the case in HP analysis, recall Remark \ref{rmk:noise}), facilitating sharper bounds. For example, \cite[Lemma 11]{ran-improved} shows that
    \begin{equation*}
        \frac{1}{n}\sum_{t \in [T]}\E\big[\|\bxt - \obxt\|^2\big] = \bigO\bigg(\underbrace{\frac{\alpha^4}{(1-\lambda^2)^4}\sum_{t \in [T]}\E\big[\|\onbft\|^2\big]}_{\text{gradient offset}} + \underbrace{\frac{\alpha^2\sigma^2T}{(1-\lambda^2)^3}}_{\text{noise}} + \underbrace{\frac{\alpha^2\|\nbf^1\|^2}{(1-\lambda^2)^3}}_{\text{heterogeneity}} \bigg), 
    \end{equation*} while the (deterministic) bound from Lemma \ref{lm:consensus-bound}, ignoring some terms for simplicity, gives
    \begin{equation*}
        \frac{1}{n}\sum_{t \in [T]}\|\bxt - \obxt\|^2 = \bigO\bigg(\underbrace{\frac{\alpha^4}{(1-\lambda^2)^4}\sum_{t \in [T]}\|\onbft\|^2}_{\text{gradient offset}} + \underbrace{\frac{\alpha^2}{(1-\lambda^2)^4}\sum_{t \in [T]}\|\bzt\|^2}_{\text{noise}} + \underbrace{\frac{\alpha^2\|\nbf^1\|^2}{(1-\lambda^2)^3}}_{\text{heterogeneity}} \bigg). 
    \end{equation*} We can see that the noise related term has a worse dependence on the network connectivity, by a factor of $(1-\lambda^2)^{-1}$, which leads to a slightly worse transient time in the final rate.
\end{remark}

We are now ready to prove the main theorem. 

\begin{proof}[Proof of Theorem \ref{thm:main-non-cvx}]
    Start from Lemma \ref{lm:descent-inequality}, rearrange and sum up the first $T$ terms, to get
    \begin{equation}\label{eq:start}
        \frac{\alpha}{2}\sum_{t \in [T]}\|\nabla f(\oxt)\|^2 \leq \Delta_f - \frac{\alpha}{4}\sum_{t \in [T]}\|\onft\|^2 - \alpha\sum_{t \in [T]}\langle \nabla f(\oxt),\ozt\rangle + \alpha^2L\sum_{t \in [T]}\|\ozt\|^2 + \frac{\alpha L^2}{2n}\sum_{t \in [T]}\|\bxt - \obxt\|^2.
    \end{equation} Next, define for all $t \in [T]$ the random variables 
    \begin{align*}
        \Ct &\coloneqq c_1\langle \nabla f(\oxt),\ozt \rangle - \frac{3c_1^2\sigma^2\|\nabla f(\oxt)\|^2}{4n} - \alpha^{2+\varepsilon}\rho\sum_{i \in [n]}\|\nabla f(\xit)\|^2, \\
        \Dt &\coloneqq c_2\|\ozt\|^2 - \frac{\alpha^{4+2\varepsilon}\rho^2}{32\sigma^2}\sum_{i \in [n]}\|\nabla f(\xit)\|^2, \\
        \Et &\coloneqq c_3\|\bzt\|^2 - \alpha^{2+\varepsilon}\rho\sum_{i \in [n]}\|\nabla f(\xit)\|^2,
    \end{align*} with deterministic initialization $C^0 = D^0 = E^0 = 0$, for some $c_1,c_2,c_3 > 0$, where $c_2 \leq \frac{n}{96\sigma^2}$ and $c_3 \leq \frac{1}{\sigmax^2}$. Let $\oCT \coloneqq \sum_{t = 0}^T\Ct$, $\oDT \coloneqq \sum_{t = 0}^T\Dt$ and $\oET \coloneqq \sum_{t  = 0}^T\Et$ and consider first the MGF of $\Ct$ conditioned on $\Ft$. From Lemma \ref{lm:avg-noise-properties} it can be seen that
    $\E_t\big[\exp\big(\Ct\big)\big] \leq \exp\big(\alpha^{2+\varepsilon}n\rho\big)$. Using this fact and successively applying the tower rule, it follows that
    \begin{align*}
        \E\big[\exp\big(\oCT \big) \big] = \E\Big[\exp\big(\oCTm\big) \E_T\big[\exp\big(C^T\big) \big] \Big] &\leq \exp\big(\alpha^{2+\varepsilon}n\rho\big)\E\big[\exp\big(\oCTm\big) \big] \\
        &\leq \ldots \leq \exp\big(\alpha^{2+\varepsilon}n\rho T\big).
    \end{align*} Using the same approach, one can show that $\E_t\big[\exp\big(\Dt\big)\big] \leq \exp\Big(\frac{96\sigma^2c_2}{n}\big(1 + \log(2d)\big) \Big)$, for any $t \geq 1$, readily implying that $\E\big[\exp\big(\oDT\big)\big] \leq \exp\Big(\frac{96\sigma^2c_2T}{n}\big(1 + \log(2d)\big)\Big)$. Finally, consider the conditional MGF of $\Et$. Recalling that $\|\bzt\|^2 = \sum_{i \in [n]}\|\zit\|^2$, we get
    \begin{align*}
        \E_t\big[\exp\big(\Et\big) \big] &= \exp\bigg(-\alpha^{2+\varepsilon}\rho\sum_{i \in [n]}\|\nabla f(\xit)\|^2 \bigg)\E_t\big[\exp\big(c_3\|\bzt\|^2 \big) \big] \\
        &= \exp\bigg(-\alpha^{2+\varepsilon}\rho\sum_{i \in [n]}\|\nabla f(\xit)\|^2 \bigg)\prod_{i \in [n]}\E_t\big[\exp\big(c_3\|\zit\|^2 \big) \big] \\
        &\leq \exp\bigg(-\alpha^{2+\varepsilon}\rho\sum_{i \in [n]}\|\nabla f(\xit)\|^2 \bigg)\prod_{i \in [n]}\exp\Big(c_3\sigma_i^2 + \alpha^{2+\varepsilon}\rho\big(1 + \|\nabla f(\xit)\|^2\big)\Big) \\
        &\leq \exp\big(c_3n\sigma^2 + \alpha^{2+\varepsilon}n\rho\big). 
    \end{align*} Therefore, using the tower rule and unrolling the recursion, it again follows that $\E\big[\exp\big(\oET \big)\big] \leq \exp\big(n(c_3\sigma^2 + \alpha^{2+\varepsilon}\rho)T\big)$. Now, setting $c_1 = 3\alpha$ and using \eqref{eq:start}, we get
    \begin{align}\label{eq:intermed-1}
        \frac{\alpha}{2}\sum_{t \in [T]}\Bigg[&\bigg(1 - \frac{9\alpha\sigma^2}{2n} \bigg)\|\nabla f(\oxt)\|^2 - \frac{\alpha^{2+\varepsilon}\rho}{3}\sum_{i \in [n]}\|\nabla f(\xit)\|^2\Bigg]  \leq \Delta_f - \oCT/3 \nonumber \\
        &- \frac{\alpha}{4}\sum_{t \in [T]}\|\onft\|^2 +\alpha^2L\sum_{t \in [T]}\|\ozt\|^2 + \frac{\alpha L^2}{2n}\sum_{t \in [T]}\|\bxt - \obxt\|^2
    \end{align} First, note that choosing $\alpha \leq \frac{n}{9\sigma^2}$, we have $1 - \frac{9\alpha\sigma^2}{2n} \geq \frac{1}{2}$. Next, note that $\|\nabla f(\xit)\|^2 \leq 2\|\nabla f(\oxt)\|^2 + 2L^2\|\xit - \oxt\|^2$, implying $\|\nabla f(\oxt)\|^2 \geq \frac{1}{2n}\sum_{i \in [n]}\|\nabla f(\xit)\|^2 - \frac{L^2}{n}\|\bxt - \obxt\|^2$. Combining these two facts, it follows from \eqref{eq:intermed-1} that, if $\alpha \leq \frac{n}{9\sigma^2}$, we have
    \begin{align*}
        \frac{\alpha}{8n}\sum_{t \in [T]}\sum_{i \in [n]}&\bigg(1 - \frac{8n\rho\alpha^{1 + \varepsilon}}{3}\bigg)\|\nabla f(\xit)\|^2  \leq \Delta_f - \oCT/3 \nonumber \\
        &- \frac{\alpha}{4}\sum_{t \in [T]}\|\onft\|^2 +\alpha^2L\sum_{t \in [T]}\|\ozt\|^2 + \frac{\alpha L^2}{n}\sum_{t \in [T]}\|\bxt - \obxt\|^2.
    \end{align*} Using Lemma \ref{lm:consensus-one-step-tracker} on the last term, noting that $\|\onbft\|^2 = n\|\onft\|^2$ and $\|\obzt\|^2 = n\|\ozt\|^2$, we get
    \begin{align*}
        &\frac{\alpha}{8n}\sum_{t \in [T]}\sum_{i \in [n]}\bigg(1 - \frac{8n\rho\alpha^{1 + \varepsilon}}{3} \bigg)\|\nabla f(\xit)\|^2 \leq \Delta_f - \oCT/3 - \bigg(\frac{\alpha}{4} - \frac{768\alpha^5\lambda^4L^4}{(1-\lambda^2)^4}\bigg)\sum_{t \in [T]}\|\onft\|^2 \\
        &+ \alpha^2L\bigg(1 + \frac{768\alpha^3\lambda^4L^3}{(1-\lambda^2)^4} \bigg)\sum_{t \in [T]}\|\ozt\|^2 + \frac{4\alpha L^2\Delta_x}{n(1-\lambda^2)} + \frac{32\alpha^3\lambda^2L^2\|\by^1 - \oby^1\|^2}{n(1-\lambda^2)^3} + \frac{512\alpha^3\lambda^4L^2}{n(1-\lambda^2)^4}\sum_{t \in [T]}\|\bzt\|^2.
    \end{align*} Choosing $\alpha \leq \min\Big\{\frac{1-\lambda^2}{4\lambda L\sqrt[4]{12}}, \frac{(1-\lambda^2)^{\nicefrac{4}{3}}}{4\lambda^{\nicefrac{4}{3}}L\sqrt[3]{12}} \Big\}$ implies $\frac{\alpha}{4} - \frac{768\alpha^5\lambda^4L^4}{(1-\lambda^2)^4} \geq 0$ and $\frac{768\alpha^3\lambda^4L^3}{(1-\lambda^2)^4} \leq 1$, hence 
    \begin{align*}
        &\frac{\alpha}{8n}\sum_{t \in [T]}\sum_{i \in [n]}\bigg(1 - \frac{8n\rho\alpha^{1 + \varepsilon}}{3} \bigg)\|\nabla f(\xit)\|^2 \leq \Delta_f - \oCT/3 + 2\alpha^2L\sum_{t \in [T]}\|\ozt\|^2 \\ &\quad + \frac{4\alpha L^2\Delta_x}{n(1-\lambda^2)} + \frac{32\alpha^3\lambda^2L^2\|\by^1 - \oby^1\|^2}{n(1-\lambda^2)^3} + \frac{512\alpha^3\lambda^4L^2}{n(1-\lambda^2)^4}\sum_{t \in [T]}\|\bzt\|^2.
    \end{align*} Next, using Proposition \ref{prop:Young} with $\theta = \frac{1-\lambda^2}{\lambda^2}$ and the definition of $\by^1$, it follows that $\|\by^1 - \oby^1\|^2 \leq \frac{1}{\lambda^2}\|\nbf^1 - \onbf^1\|^2 + \frac{\lambda^2}{(1-\lambda^2)}\|\bz^1\|^2 = \frac{\Delta_g}{\lambda^2} + \frac{\lambda^2}{(1-\lambda^2)}\|\bz^1\|^2$, therefore 
    \begin{align*}
        &\frac{\alpha}{8n}\sum_{t \in [T]}\sum_{i \in [n]}\bigg(1 - \frac{8n\rho\alpha^{1 + \varepsilon}}{3} \bigg)\|\nabla f(\xit)\|^2 \leq \Delta_f - \oCT/3 + 2\alpha^2L\sum_{t \in [T]}\|\ozt\|^2 \\ &\quad\quad + \frac{4\alpha L^2\Delta_x}{n(1-\lambda^2)} + \frac{32\alpha^3L^2\Delta_g}{n(1-\lambda^2)^3} + \frac{538\alpha^3\lambda^4L^2}{n(1-\lambda^2)^4}\sum_{t \in [T]}\|\bzt\|^2.
    \end{align*} Setting $c_2 = 6\alpha^2L$, $c_3 = \frac{1614\alpha^3\lambda^4L^2}{n(1-\lambda^2)^4}$ and $\alpha \leq \min\Big\{\sqrt{\frac{n}{282e\sigma^2dL}}, \frac{\sqrt[3]{n}(1-\lambda^2)^{4/3}}{\lambda^{4/3}\sigmax^{2/3}L^{2/3}\sqrt[3]{1614}}, \frac{\sigma\sqrt{32}}{\rho}\Big\}$, we get
    \begin{align*}
        &\frac{\alpha}{8n}\sum_{t \in [T]}\sum_{i \in [n]}\bigg(1 - \frac{24n\rho\alpha^{1+\varepsilon}}{3} \bigg)\|\nabla f(\xit)\|^2 \leq \Delta_f - \oCT/3 \\ 
        &\quad + \frac{4\alpha L^2\Delta_x}{n(1-\lambda^2)} + \frac{32\alpha^3L^2\Delta_g}{n(1-\lambda^2)^3} + \oDT/3 + \oET/3.
    \end{align*} Choosing $\alpha \leq \sqrt[1+\varepsilon]{\frac{1}{16n\rho^2}}$, it finally follows that 
    \begin{equation}\label{eq:MGF-ineq}
        \frac{\alpha}{16n}\sum_{t \in [T]}\sum_{i \in [n]}\|\nabla f(\xit)\|^2 \leq \Delta_f + \frac{4\alpha L^2\Delta_x}{n(1-\lambda^2)} + \frac{32\alpha^3\lambda^2L^2\|\nbf^1\|^2}{n(1-\lambda^2)^3} + \frac{1}{3}\Big(\oDT + \oET - \oCT \Big).
    \end{equation} Let $M^T \coloneqq \frac{\alpha}{16n}\sum_{t \in [T]}\sum_{i \in [n]}\|\nabla f(\xit)\|^2 - \Delta_f - \frac{4\alpha L^2\Delta_x}{n(1-\lambda^2)} - \frac{32\alpha^3L^2\Delta_g}{n(1-\lambda^2)^3}$. Using \eqref{eq:MGF-ineq}, we get
    \begin{align*}
        \E\big[\exp\big(M^T \big) \big] &\leq \E\bigg[\exp\bigg(\frac{1}{3}\Big(\oCT + \oDT + \oET\Big) \bigg) \bigg] \\ 
        &\leq \sqrt[3]{\E\Big[ \exp\Big(\oCT \Big)\Big] \E\Big[ \exp\Big(\oDT \Big)\Big] \E\Big[ \exp\Big(\oET \Big)\Big] } \\
        &\leq \exp\bigg(\frac{2\alpha^{2+\varepsilon}n\rho T}{3} + \frac{192\sigma^2\alpha^2LT}{n}\big(1+\log(2d)\big) + \frac{1614\alpha^3\sigma^2\lambda^4L^2T}{(1-\lambda^2)^4} \bigg).
    \end{align*} Applying Markov's inequality, we get $\Prob\big(M^T > \epsilon\big) \leq \exp\Big(-\epsilon + \frac{2\alpha^{2+\varepsilon}n\rho T}{3} + \frac{192\sigma^2\alpha^2LT}{n}\big(1+\log(2d)\big) + \frac{1614\alpha^3\sigma^2\lambda^4L^2T}{(1-\lambda^2)^4} \Big)$, for any $\epsilon > 0$ and $T \geq 2$. It follows from the definition of $M^T$ that, for any $\delta \in (0,1)$, with probability at least $1 - \delta$, we have
    \begin{align*}
          &\frac{\alpha}{16n}\sum_{t \in [T]}\sum_{i \in [n]}\|\nabla f(\xit)\|^2 \leq \log(\nicefrac{1}{\delta}) + \Delta_f + \frac{4\alpha L^2\Delta_x}{n(1-\lambda^2)} + \frac{2\alpha^{2+\varepsilon}n\rho T}{3}\\
          & + \frac{32\alpha^3L^2\Delta_g}{n(1-\lambda^2)^3} + \frac{192\sigma^2\alpha^2LT}{n}\big(1+\log(2d)\big) + \frac{1614\alpha^3\sigma^2\lambda^4L^2T}{(1-\lambda^2)^4}.
    \end{align*} Multiplying both sides by $\frac{16}{\alpha T}$, we get, with probability at least $1 - \delta$
    \begin{align*}
          &\quad\quad\quad\frac{1}{nT}\sum_{t \in [T]}\sum_{i \in [n]}\|\nabla f(\xit)\|^2 \leq \frac{16\big(\log(\nicefrac{1}{\delta}) + \Delta_f\big)}{\alpha T} + \frac{64L^2\Delta_x}{n(1-\lambda^2)T} \\
          &+ \frac{32\alpha^{1+\varepsilon}n\rho}{3} + \frac{3072\alpha\sigma^2L}{n}\big(1+\log(2d)\big) + \frac{512\alpha^2L^2\Delta_g}{n(1-\lambda^2)^3T} + \frac{25824\alpha^2\sigma^2\lambda^4L^2}{(1-\lambda^2)^4}.
    \end{align*} If $\alpha = \min\Big\{C, \frac{\sqrt{n}}{\sqrt{T}}\Big\}$, where $C \leq \min\Big\{\frac{1}{4L},\frac{(1-\lambda^2)^2}{16\lambda^2L\sqrt{3}}, \frac{n}{9\sigma^2}, \frac{1-\lambda^2}{4\lambda L\sqrt[4]{12}},\frac{(1-\lambda^2)^{4/3}}{4\lambda^{4/3}L\sqrt[3]{12}},\frac{\sqrt[3]{n}(1-\lambda^2)^{4/3}}{\lambda^{4/3}\sigmax^{2/3}L^{2/3}\sqrt[3]{1614}}, \linebreak \sqrt{\frac{n}{282e\sigma^2dL}}, \frac{\sigma\sqrt{32}}{\rho},\sqrt[1+\varepsilon]{\frac{1}{16n\rho^2}} \Big\}$, we finally have, for any $T \geq 2$, with probability at least $1 - \delta$
    \begin{align*}
          \frac{1}{nT}\sum_{t \in [T]}&\sum_{i \in [n]}\|\nabla f(\xit)\|^2 \leq \frac{16\big(\log(\nicefrac{1}{\delta}) + \Delta_f + 192\sigma^2L(1+\log(2d))\big)}{\sqrt{nT}} + \frac{32n^{\nicefrac{(3+\varepsilon)}{2}}\rho}{3T^{\nicefrac{(1+\varepsilon)}{2}}} \\
          &+ \frac{64L^2\Delta_x}{n(1-\lambda^2)T} + \frac{16\big(\log(\nicefrac{1}{\delta}) + \Delta_f\big)}{CT} + \frac{25824n\sigma^2\lambda^4L^2}{d(1-\lambda^2)^4T} + \frac{512L^2\Delta_g}{d(1-\lambda^2)^3T^2}.
    \end{align*}
\end{proof}

\section{Proofs for P\L{} Costs}\label{sup:PL}

In this section we provide results and proofs for P\L{} costs, beginning by proving Lemma \ref{lm:descent-inequality-PL}.

\begin{proof}[Proof of Lemma \ref{lm:descent-inequality-PL}]
    Start from \eqref{eq:descent-intermed}-\eqref{eq:bound-grad-diff} and note that $\alpha_t \leq \frac{1}{2L}$ implies $1 - 2\alpha_tL \geq 0$, to get
    \begin{align*}
        f(\oxtp) \leq f(\oxt) - \frac{\alpha_t}{2}\|\nabla f(\oxt)\|^2 - \alpha_t\langle \nabla f(\oxt), \ozt \rangle + \alpha_t^2L\|\ozt\|^2 + \frac{\alpha_t L^2}{2n}\|\bxt - \obxt\|^2.
    \end{align*} The proof is completed by using Assumption \ref{asmpt:PL} and subtracting $\fs$ from both sides.
\end{proof}

Next, we show a modified result for the one-step consensus gap of the gradient tracker. 

\begin{lemma}\label{lm:consensus-one-step-tracker-PL}
    Let Assumptions \ref{asmpt:network}-\ref{asmpt:PL} hold. If $\alpha_t \leq \min\Big\{\frac{1}{2L},\frac{(1-\lambda^2)^{\nicefrac{3}{2}}}{4\lambda^2L\sqrt{6}}\Big\}$, for all $t \geq 1$, we have 
    \begin{align*}
        \|\bytp - \obytp\|^2 &\leq \frac{3+\lambda^2}{4}\|\byt - \obyt\|^2 + \frac{36\lambda^2L^2}{1-\lambda^2}\|\bxt - \obxt\|^2 + \frac{24n\alpha_t^2\lambda^2L^2}{1-\lambda^2}\|\ozt\|^2 \\
        &+ \frac{4\lambda^2}{1-\lambda^2}\|\bztp - \bzt\|^2 + \frac{96n\alpha_t^2\lambda^2L^3}{1-\lambda^2}\big(f(\oxt) - \fs\big).
    \end{align*}
\end{lemma}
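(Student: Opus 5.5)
The plan is to reuse the proof of Lemma \ref{lm:consensus-one-step-tracker} almost verbatim, now with the time-varying step-size $\alpha_t$, and change only how $\|\obgt\|^2$ is handled. Under $\alpha_t \leq \frac{(1-\lambda^2)^{3/2}}{4\lambda^2L\sqrt{6}}$, the argument leading to \eqref{eq:tracker-bd-pt1}-\eqref{eq:tracker-bd-pt2} goes through unchanged. That argument consists of Young's inequality with $\theta = \frac{1-\lambda^2}{2\lambda^2}$, the $L$-smoothness bound $\|\nbftp - \nbft\|^2 \leq L^2\|\bxtp - \bxt\|^2$, and Lemma \ref{lm:consensus-one-step-model} applied to $\|\bxtp - \obxtp\|^2$. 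It yields
\begin{align*}
\|\bytp - \obytp\|^2 &\leq \frac{3+\lambda^2}{4}\|\byt - \obyt\|^2 + \frac{24\lambda^2L^2}{1-\lambda^2}\|\bxt - \obxt\|^2 \\
&\quad + \frac{4\lambda^2}{1-\lambda^2}\|\bztp - \bzt\|^2 + \frac{12\alpha_t^2\lambda^2L^2}{1-\lambda^2}\|\obgt\|^2 .
\end{align*}
All that remains is to bound the last term in terms of the consensus gap, the averaged noise and the optimality gap.

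Since $\obgt = \mathbf{1}_n\otimes\ogt$, we have $\|\obgt\|^2 = n\|\ogt\|^2$. We write $\ogt = \big(\onft - \nabla f(\oxt)\big) + \nabla f(\oxt) + \ozt$ and apply $\|a+b+c\|^2 \leq 2\|c\|^2 + 4\|a\|^2 + 4\|b\|^2$, which gives
\[
n\|\ogt\|^2 \leq 2n\|\ozt\|^2 + 4n\|\onft - \nabla f(\oxt)\|^2 + 4n\|\nabla f(\oxt)\|^2 .
\]
The middle term is controlled by \eqref{eq:bound-grad-diff}: $n\|\onft - \nabla f(\oxt)\|^2 \leq L^2\|\bxt - \obxt\|^2$. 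The last term is controlled by Proposition \ref{prop:descent-ineq}(4): $\|\nabla f(\oxt)\|^2 \leq 2L(f(\oxt)-\fs)$.

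Substituting these bounds produces three contributions. The noise term becomes $\frac{24n\alpha_t^2\lambda^2L^2}{1-\lambda^2}\|\ozt\|^2$ and the optimality term becomes $\frac{96n\alpha_t^2\lambda^2L^3}{1-\lambda^2}(f(\oxt)-\fs)$, both exactly as claimed. The consensus contribution is $\frac{48\alpha_t^2\lambda^2L^4}{1-\lambda^2}\|\bxt - \obxt\|^2$. Using $\alpha_t \leq \frac{1}{2L}$ gives $48\alpha_t^2L^4 \leq 12L^2$, so this term is at most $\frac{12\lambda^2L^2}{1-\lambda^2}\|\bxt-\obxt\|^2$. Added to the existing $\frac{24\lambda^2L^2}{1-\lambda^2}$ coefficient, it yields the stated $\frac{36\lambda^2L^2}{1-\lambda^2}$.

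The main obstacle is bookkeeping rather than any new idea. The weights in the three-term splitting must be chosen so that the constants $24$, $36$ and $96$ come out exactly. With the $(2,4,4)$ split above they do; a naive split of $3$ on each term would give $36$, $48$ and $72$ instead, so that split should be avoided. Assumption \ref{asmpt:PL} is not actually needed beyond its role in the setting, since only smoothness and $\fs > -\infty$ are used.
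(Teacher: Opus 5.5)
Your proof is correct and is essentially the paper's argument. The paper also invokes Lemma~\ref{lm:consensus-one-step-tracker} with $\alpha_t$ and bounds $\|\obgt\|^2 \leq 2\|\obzt\|^2 + 4L^2\|\bxt - \obxt\|^2 + 8nL\big(f(\oxt)-\fs\big)$ with the same $(2,4,4)$ weighting, written as two nested Young steps; it then absorbs the $48\alpha_t^2L^4$ consensus coefficient via $\alpha_t \leq \frac{1}{2L}$. One harmless slip in your aside: the equal-weight split gives $36$, $33$ and $72$ rather than $48$ for the consensus term, so it fails only because of the noise coefficient, and your conclusion to avoid it still stands.
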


\begin{proof}
    Recall that Lemma \ref{lm:consensus-one-step-tracker} gives the following bound on the consensus gap of the tracker
    \begin{align}\label{eq:tracker-bdd-init}
        \|\bytp - \obytp\|^2 &\leq \frac{3+\lambda^2}{4}\|\byt - \obyt\|^2 + \frac{24\lambda^2L^2}{1-\lambda^2}\|\bxt - \obxt\|^2 \nonumber \\
        &+ \frac{4\lambda^2}{1-\lambda^2}\|\bztp - \bzt\|^2 + \frac{12\alpha_t^2\lambda^2L^2}{1-\lambda^2}\|\obgt\|^2.
    \end{align} We now further focus on the term $\|\obgt\|^2$. Using the decomposition $\obgt = \onbft + \obzt$ and recalling that $\onbft = \mathbf{1}_n \otimes \onab f^t$, where $\onab f^t = \frac{1}{n}\sum_{i \in [n]}\nabla f_i(\xit)$, it then follows that
    \begin{align}\label{eq:grad-bd}
        \|\obgt\|^2 \leq 2\|\obzt\|^2 + 2\|\onbft\|^2 &\leq 2\|\obzt\|^2 + 4L^2\|\bxt - \obxt\|^2 + 4n\|\nabla f(\oxt)\|^2 \nonumber \\
        &\leq 2\|\obzt\|^2 + 4L^2\|\bxt - \obxt\|^2 + 8nL(f(\oxt) - \fs),
    \end{align} where in the second inequality we used Proposition \ref{prop:descent-ineq}. Plugging \eqref{eq:grad-bd} into \eqref{eq:tracker-bdd-init} and choosing $\alpha_t \leq \frac{1}{2L}$, completes the proof.
\end{proof}

Next, our goal is to show that the MGF of the model consensus gap $\|\bxt - \obxt\|^2$ decays sufficiently fast. To that end, we first establish the following intermediary result. 

\begin{lemma}\label{lm:bdd-MGF-PL}
    Let Assumptions \ref{asmpt:network}-\ref{asmpt:noise} hold. If $\alpha_t \leq \min\Big\{\sqrt[2+\varepsilon]{\frac{1}{\rho}},\sqrt[2+\varepsilon]{\frac{1}{\sigmax^2\rho}}, \sqrt[2+\varepsilon]{\frac{32\sigma^2}{\rho^2}}, \frac{(1-\lambda^2)^2}{\lambda L\sqrt{6K}}, \frac{\sqrt{n}}{8\sigma\sqrt{3L}}, \linebreak \frac{\sqrt{n}}{6L\sqrt{2}},\frac{\sqrt{n}}{96\sigma\lambda L\sqrt{3}}, \sqrt[1+\varepsilon]{\frac{1}{4n\rho}}, \sqrt[1+\varepsilon]{\frac{\mu}{4n\rho}}, \sqrt[1+\varepsilon]{\frac{\mu}{16n\rho L}}, \sqrt[5+2\varepsilon]{\frac{1}{18\rho^2L^2}}, \frac{n\mu}{384\sigma^2L}, \frac{1}{6}, \frac{1}{6L^2}, \frac{1}{1280\lambda^2L}  \Big\}$, then for any $\nu \leq \min\Big\{1,\frac{n}{96\sigmax^2\lambda^2}\Big\}$, $K \geq \max\Big\{\frac{2L^2}{\mu},640\lambda^2L^2 \Big\}$ and $t \geq 1$, we have 
    \begin{align*}
        \E\bigg[\exp\bigg(\frac{\nu K}{n}\|\bxt - \obxt\|^2\bigg)\bigg] &\leq \exp(\nu a_1) \\
        \E\Big[\exp\big(\nu(f(\oxt) - \fs)\big)\Big] &\leq \exp(\nu a_2)\\
        \E\bigg[\exp\bigg(\frac{\nu(1-\lambda^2)^2}{n}\|\byt - \obyt\|^2\bigg)\bigg] &\leq \exp(\nu a_3),
    \end{align*} where $a_1 = \max\Big\{\frac{K}{n}\|\bx^1 - \obx^{1}\|^2, \frac{1}{2}\big(f(\ox^1) - \fs\big), \frac{1}{2}\big(2 + \log(2d)\big), 256\sigma^2\lambda^2\big(2+\log(2d)\big), \linebreak \frac{8}{3}\lambda^2(1-\lambda^2)^2\Big(\sigma^2 + 4L\big(f(\ox^1) - \fs\big) + \frac{2L^2}{n}\|\bx^1 - \obx^1\|^2\Big)\Big\}$, $a_2 = 2a_1$ and $a_3= \frac{3}{2}a_1$.
\end{lemma}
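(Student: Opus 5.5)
We prove the three bounds simultaneously by induction on $t$, treating them as a coupled system. Set $U^t \coloneqq \frac{K}{n}\|\bxt - \obxt\|^2$, $V^t \coloneqq f(\oxt)-\fs$ and $W^t \coloneqq \frac{(1-\lambda^2)^2}{n}\|\byt - \obyt\|^2$. The induction hypothesis at time $t$ is that $\E[\exp(\nu U^t)] \leq e^{\nu a_1}$, $\E[\exp(\nu V^t)] \leq e^{\nu a_2}$ and $\E[\exp(\nu W^t)] \leq e^{\nu a_3}$. For $t=1$ these follow directly from the definition of $a_1$, since $\bx^1$ is deterministic. The quantity $\|\by^1 - \oby^1\|^2$ contains $\|\bz^1\|^2$. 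We bound it with Young's inequality as in the proof of Theorem~\ref{thm:main-non-cvx}, which gives the initial heterogeneity term plus $\|\bz^1\|^2$. We then handle the noise MGF using Assumption~\ref{asmpt:noise}, together with $\nu \leq \frac{n}{96\sigmax^2\lambda^2}$ and $\alpha_1^{2+\varepsilon}\rho$ small. The term $\frac{8}{3}\lambda^2(1-\lambda^2)^2(\sigma^2 + 4L V^1 + \tfrac{2L^2}{n}\|\bx^1-\obx^1\|^2)$ inside $a_1$ is sized to absorb this contribution.

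For the inductive step, the three one-step recursions are:
\begin{itemize}
\item Lemma~\ref{lm:consensus-one-step-model}, scaled by $K/n$, gives $U^{t+1} \leq \frac{1+\lambda^2}{2}U^t + \frac{2\alpha_t^2\lambda^2 K}{(1-\lambda^2)^3}W^t$.
\item Lemma~\ref{lm:descent-inequality-PL} gives $V^{t+1} \leq (1-\alpha_t\mu)V^t - \alpha_t\langle \nabla f(\oxt),\ozt\rangle + \alpha_t^2L\|\ozt\|^2 + \frac{\alpha_t L^2}{2K}U^t$.
\item Lemma~\ref{lm:consensus-one-step-tracker-PL}, scaled, gives $W^{t+1} \leq \frac{3+\lambda^2}{4}W^t + \frac{36\lambda^2L^2(1-\lambda^2)}{K}U^t + 96\alpha_t^2\lambda^2L^3(1-\lambda^2)V^t + (\text{noise terms in } \|\ozt\|^2, \|\bztp-\bzt\|^2)$.
\end{itemize}
In each exponent we split the right-hand side as a convex combination of $\nu U^t, \nu V^t, \nu W^t$ plus noise terms. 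For example, $\frac{1+\lambda^2}{2}U^t + \frac{1-\lambda^2}{2}\cdot\frac{4\alpha_t^2\lambda^2K}{(1-\lambda^2)^4}W^t$ has coefficient at most $1$ on $W^t$ once $\alpha_t \leq \frac{(1-\lambda^2)^2}{\lambda L\sqrt{6K}}$. We then apply H\"older's inequality (Proposition~\ref{prop:gen-Holder} with weights) and Jensen's inequality (Proposition~\ref{prop:Jensen}), exactly as in Lemma~\ref{lm:mgf-bound-str-cvx}. This yields $\E[e^{\nu X^{t+1}}] \leq \prod(\E[e^{\nu Y}])^{\theta_Y}$ with $\sum\theta_Y \leq 1$, plus noise factors.

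Noise factors are controlled after conditioning on $\Ft$. For the inner product we use Lemma~\ref{lm:avg-noise-properties}(1) with $v = -\nu\alpha_t\nabla f(\oxt)$. This produces $\frac{3\nu^2\alpha_t^2\sigma^2}{4n}\|\nabla f(\oxt)\|^2 \leq \frac{3\nu^2\alpha_t^2\sigma^2 L}{2n}V^t$, which the contraction $-\alpha_t\mu V^t$ absorbs when $\alpha_t \leq \frac{n\mu}{384\sigma^2L}$. It also produces $\alpha_t^{2+\varepsilon}\rho\sum_i\|\nabla f(\xit)\|^2 \leq \alpha_t^{2+\varepsilon}\rho(2n\cdot 2L V^t + 2L^2\|\bxt-\obxt\|^2)$. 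These are absorbed into $-\alpha_t\mu V^t$ and into the $U^t$ slack, using the conditions $\alpha_t^{1+\varepsilon}\leq \frac{\mu}{16n\rho L}$, $K \geq 2L^2/\mu$, etc. For the $\|\ozt\|^2$ and $\|\bzt\|^2$ terms we use Lemma~\ref{lm:avg-noise-properties}(2) and the per-agent relaxed sub-Gaussian bound, after Jensen reduces the coefficients to at most $\frac{n}{96\sigma^2}$ and $\frac{1}{\sigma_i^2}$. This gives factors $\exp(\nu c(2+\log(2d)))$ times a $\rho$-dependent term that is again absorbed. The residual $\|\bztp\|^2$, which is not $\Ft$-measurable, is handled by conditioning on $\mathcal F_{t+1}$ first.

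After these steps each exponent satisfies, for example, $\nu V^{t+1} \lesssim (1-\tfrac{\alpha_t\mu}{2})\nu V^t + \nu\alpha_t\mu\cdot\tfrac12(\ldots)$. Hence by the induction hypothesis $\E[e^{\nu V^{t+1}}] \leq \exp(\nu[(1-\tfrac{\alpha_t\mu}{2})a_2 + \text{const}\cdot\alpha_t])$. This is at most $e^{\nu a_2}$ provided the constants are dominated by $\frac{\mu}{2}a_2$, which is where $a_2 = 2a_1$, $a_1 \geq \tfrac12(2+\log 2d)$ and $a_1 \geq 256\sigma^2\lambda^2(2+\log 2d)$ enter. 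The $U$ and $W$ recursions close the same way, with contraction factors $1-\frac{1-\lambda^2}{2}$ and $1-\frac{1-\lambda^2}{4}$ absorbing cross terms. This uses $K \geq 640\lambda^2L^2$ and the $\alpha_t$ caps on $\lambda^2L$.

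The main obstacle is the bookkeeping of the cyclic coupling $U\to V\to W\to U$. We need weights such that every cross coefficient is strictly below the available contraction gap, so that the invariant $(a_1, 2a_1, \tfrac32 a_1)$ is preserved. I would first fix these weights by checking an analogue of the $2\times 2$ condition \eqref{eq:mx-conditions} in the $3\times3$ case, and only then verify that each step-size cap listed makes the corresponding inequality hold.
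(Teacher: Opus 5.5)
Your overall architecture is the same as the paper's: a simultaneous induction on the three MGFs, H\"older splits weighted by the contraction factors $\frac{1+\lambda^2}{2}$, $1-\frac{\alpha_t\mu}{2}$, $\frac{3+\lambda^2}{4}$, and Jensen to push coefficients below one.

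\textbf{The genuine gap is the base case for the tracker.} Since $\by^0=\bg^0=\mathbf{0}$, we have $\by^1-\oby^1=(\bW-\bJ)\bg^1$. Its deterministic part is $(\bW-\bJ)(\nbf^1-\onbf^1)$, the gradient heterogeneity at initialization. You correctly note that this term appears. You then assert that the term $\frac{8}{3}\lambda^2(1-\lambda^2)^2\big(\sigma^2+4L(f(\ox^1)-\fs)+\frac{2L^2}{n}\|\bx^1-\obx^1\|^2\big)$ in $a_1$ absorbs it. It cannot, because heterogeneity is not controlled by $\sigma^2$, the initial suboptimality, or the initial consensus gap.

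Here is a concrete counterexample. Take $n=2$ and $w_{12}=w_{21}=w\in(0,1)\setminus\{\frac12\}$, so $\lambda=|1-2w|>0$. Take $f_{1,2}(x)=\frac{L}{2}\|x\mp c\|^2$, so $f$ is P\L{} with $\mu=L$ and is minimized at $0$. Use noiseless oracles, which satisfy Assumption~\ref{asmpt:noise} with $\rho=0$ and any $\sigma_i>0$, and set $x_1^1=x_2^1=0$.
\begin{itemize}
\item Then $f(\ox^1)-\fs=0$ and $\bx^1=\obx^1$.
\item Yet $\frac{(1-\lambda^2)^2}{n}\|\by^1-\oby^1\|^2=\lambda^2(1-\lambda^2)^2L^2\|c\|^2$ deterministically.
\item This exceeds $a_3$ once $\|c\|$ is large, while every hypothesis of the lemma (caps on $\alpha_t$, $\nu$, $K$, and $a_1$ itself) is independent of $c$.
\end{itemize}
So the third bound already fails at $t=1$ as stated, and no argument can close this step.

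The paper's proof makes the same slip. It bounds $\|\nbf^1\|^2$ by $2L^2\|\bx^1-\obx^1\|^2+2n\|\nabla f(\ox^1)\|^2$. In fact
\[
\sum_{i}\|\nabla f_i(\ox^1)\|^2=n\|\nabla f(\ox^1)\|^2+\sum_{i}\|\nabla f_i(\ox^1)-\nabla f(\ox^1)\|^2 ,
\]
so the heterogeneity term is dropped. The fix is to add a term of order $\lambda^2(1-\lambda^2)^2\Delta_g$ to the maximum defining $a_1$, with $\Delta_g$ as defined before Theorem~\ref{thm:main-non-cvx}. The other requirements on $a_1,a_2,a_3$ are either lower bounds or involve only the ratios $a_1/a_3$ and $a_3/a_2$, so the rest of the induction is unaffected. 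However, $\Delta_g$ then propagates into $C$ in Lemma~\ref{lm:MGF-decay-PL} and into Theorem~\ref{thm:main-PL}.

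\textbf{Two further points in your inductive step need repair.}

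First, you plug $v=-\nu\alpha_t\nabla f(\oxt)$ into Lemma~\ref{lm:avg-noise-properties}(1). That returns the $\rho$-terms $\alpha_t^{2+\varepsilon}n\rho+\alpha_t^{2+\varepsilon}\rho\sum_{i}\|\nabla f(\xit)\|^2$ with no factor $\nu$; this is exactly the phenomenon in Remark~\ref{rmk:noise}. Your contraction and target, however, are $\nu\alpha_t\mu V^t$ and $\nu a_2$. With the $\nu$-free step-size caps in the statement, this does not close when $\nu$ is small. The paper first uses $\nu\le 1$ and Jensen, $\E_t[e^{\nu X}]\le(\E_t[e^{X}])^{\nu}$, so that every term, including the $\rho$-terms, carries a factor $\nu$.

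Second, conditioning the $\|\bztp\|^2$ term on $\mathcal{F}_{t+1}$ produces $\|\nabla f(x_i^{t+1})\|$. That is, it brings in $V^{t+1}$ and $U^{t+1}$, not time-$t$ quantities, so the $W$-step is not a convex combination of $U^t,V^t,W^t$ alone. The paper resolves this by ordering the inductive step: it proves the $U^{t+1}$ and $V^{t+1}$ bounds first and then feeds them into the $W^{t+1}$ step.
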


\begin{proof}
    Define $X^t \coloneqq \frac{K}{n}\|\bxt - \obxt\|^2$, $Y^t \coloneqq \frac{(1-\lambda^2)^2}{n}\|\byt - \obyt\|^2$ and $F^t \coloneqq f(\oxt) - \fs$. We use induction to show that MGFs of $\nu X^t$, $\nu Y^t$ and $\nu F^t$ are all bounded, for $\nu \in (0,1]$. As the basis of induction, consider $t = 1$. Since the initial model is deterministic, we get
    \begin{align*}
        \E\Big[\exp\big(\nu X^1 \big) \Big] \leq \exp\bigg(\frac{\nu K}{n}\|\bx^1 - \obx^1\|^2 \bigg) \: \text{ and } \: \E\Big[\exp\big(\nu F^1\Big)\big] \leq \exp\big(\nu(f(\ox^1) - \fs)\big).
    \end{align*} Next, using the fact that $\by^0 = \bg^0 = \mathbf{0}$, we get
    \begin{align}\label{eq:base-ind-grad-track}
        &\E\bigg[\exp\bigg(\nu Y^1 \bigg) \bigg] = \E\bigg[\exp\bigg(\frac{\nu(1-\lambda^2)^2}{n}\|\tbW\bg^1\|^2 \bigg) \bigg] \nonumber \\
        &\leq \exp\bigg(\frac{2\nu\lambda^2(1-\lambda^2)^2}{n}\|\nbf^1\|^2 \bigg)\E\bigg[\exp\bigg(\frac{2\nu\lambda^2(1-\lambda^2)^2}{n}\|\bz^1\|^2 \bigg) \bigg] \\
        &\leq \exp\bigg(\frac{4\nu\lambda^2(1-\lambda^2)^2}{n}\big(L^2\|\bx^1 - \obx^1\|^2 + n\|\nabla f(\ox^1)\|^2\big)\bigg)\E\bigg[\exp\bigg(\frac{2\nu\lambda^2(1-\lambda^2)^2}{n}\|\bz^1\|^2 \bigg) \bigg] \nonumber \\
        &\leq \exp\bigg(4\nu\lambda^2(1-\lambda^2)^2\bigg(\frac{L^2}{n}\|\bx^1 - \obx^1\|^2 + 2L\big(f(\ox^1)-\fs\big)\bigg)\bigg)\E\bigg[\exp\bigg(\frac{2\nu\lambda^2(1-\lambda^2)^2}{n}\|\bz^1\|^2 \bigg) \bigg] \nonumber.
    \end{align} Using Lemma \ref{lm:noise-properties} and choosing $\nu \leq \frac{n}{2\sigmax^2\lambda^2(1-\lambda^2)^2}$, it follows that
    \begin{align}\label{eq:base-ind-noise-intermed}
        &\E_1\bigg[\exp\bigg(\frac{2\nu\lambda^2(1-\lambda^2)^2}{n}\|\bz^1\|^2\bigg)\bigg] \leq \prod_{i \in [n]}\Bigg(\E_1\bigg[\exp\bigg(\frac{\|z_i^1\|^2}{\sigma_i^2}\bigg)\bigg]\Bigg)^{\frac{2\nu \sigma_i^2\lambda^2(1-\lambda^2)^2}{n}} \nonumber \\
        &\leq \prod_{i \in [n]}\exp\bigg(\frac{2\nu\sigma_i^2\lambda^2(1-\lambda^2)^2}{n}\Big(1 + \alpha_1^{2+\varepsilon}\rho + \alpha_1^{2+\varepsilon}\rho\|\nabla f(x_i^1)\|^2\Big)\bigg) \\
        &\leq \exp\bigg(2\nu\sigma^2\lambda^2(1-\lambda^2)^2\big(1 + \alpha_1^{2+\varepsilon}\rho\big) + \frac{4\nu \alpha_1^{2+\varepsilon}\rho\lambda^2(1-\lambda^2)^2}{n}\sum_{i \in [n]}\sigma_i^2\Big(\|\nabla f(\ox^1)\|^2 + L^2\|x_i^1 - \ox^1\|^2\Big)\bigg) \nonumber \\
        &\leq \exp\bigg(4\nu\lambda^2(1-\lambda^2)^2\bigg(\sigma^2 + 2\alpha_1^{2+\varepsilon}\sigma^2\rho L\big(f(\ox^1) - \fs\big) + \frac{\alpha_1^{2+\varepsilon}\sigmax^2\rho L^2}{n}\|\bx^1 - \obx^1\|^2\bigg)\bigg), \nonumber
    \end{align} where the last inequality follows by choosing $\alpha_1 \leq \sqrt[2+\varepsilon]{\frac{1}{\rho}}$. Recalling that the initialization is deterministic, choosing $\alpha_1 \leq \sqrt[2+\varepsilon]{\frac{1}{\sigmax^2\rho}}$ and using \eqref{eq:base-ind-grad-track}-\eqref{eq:base-ind-noise-intermed}, we get
    \begin{equation*}
        \E\big[\exp\big(\nu Y^1\big)\big] \leq \exp\bigg(4\nu\lambda^2(1-\lambda^2)^2\bigg(\sigma^2 + \frac{2L^2}{n}\|\bx^1 - \obx^1\|^2 + 4L\big(f(\ox^1) - \fs\big)\bigg) \bigg).
    \end{equation*} Now, assume that, for some $t \geq 1$ and $a_1 \geq \frac{K}{n}\|\bx^1 - \obx^1\|^2$, $a_2 \geq f(\ox^1) - \fs$ and $a_3 \geq 4\lambda^2(1-\lambda^2)^2\Big(\sigma^2 + 4L\big(f(\ox^1) - \fs\big) + \frac{2L^2}{n}\|\bx^1 - \obx^1\|^2\Big)$, we have 
    \begin{align*}
        \E\Big[\exp\big(\nu X^t \big)\Big] \leq \exp(\nu a_1), \quad \E\Big[\exp\big(\nu F^t\big)\Big] \leq \exp(\nu a_2) \quad \text{and} \quad \E\Big[\exp\big(\nu Y^t \big) \Big] \leq \exp(\nu a_3).
    \end{align*} Consider the iteration $t + 1$. Using Lemma \ref{lm:consensus-one-step-model}, we have
    \begin{align*}
        &\E\Big[\exp\big(\nu X^{t+1} \big)\Big] \leq \E\bigg[\exp\bigg( \frac{\nu(1+\lambda^2)}{2}X^t + \frac{2\nu\alpha_t^2\lambda^2K}{(1-\lambda^2)^3}Y^t \bigg)\bigg] \\
        &\leq \bigg(\E\Big[\exp\big(\nu X^t\big)\Big]\bigg)^{\frac{1+\lambda^2}{2}} \Bigg(\E\bigg[\exp\bigg(\frac{4\nu\alpha_t^2\lambda^2K}{(1-\lambda^2)^4}Y^t \bigg)\bigg]\Bigg)^{\frac{1-\lambda^2}{2}} \\
        &\leq \bigg(\E\Big[\exp\big(\nu X^t\big)\Big]\bigg)^{\frac{1+\lambda^2}{2}} \bigg(\E\Big[\exp\big(\nu Y^t \big)\Big]\bigg)^{\frac{2\alpha_t^2\lambda^2K}{(1-\lambda^2)^3}} \\ 
        &\leq \exp\bigg(\frac{1+\lambda^2}{2}\nu a_1 + \frac{2\alpha_t^2\lambda^2K}{(1-\lambda^2)^3}\nu a_3 \bigg) \leq \exp(\nu a_1),
    \end{align*} where the second inequality follows from Proposition \ref{prop:Holder} with $p = \frac{2}{1+\lambda^2}$ and $q = \frac{2}{1-\lambda^2}$, the third follows from $\alpha_t \leq \frac{(1-\lambda^2)^2}{2\lambda\sqrt{K}}$, the fourth follows from the induction hypothesis, while the fifth follows from the choice $\alpha_t \leq \frac{(1-\lambda^2)^2}{2\lambda\sqrt{K}}\sqrt{\frac{a_1}{a_3}}$. Next, using Lemma \ref{lm:descent-inequality-PL} and Proposition \ref{prop:Holder}, we get
    \begin{align*}
        \E_t\big[\exp&\big(\nu F^{t+1}\big)\big] \leq \exp\bigg((1-\alpha_t\mu)\nu F^t + \frac{\nu\alpha_tL^2}{2n}\|\bxt - \obxt\|^2 \bigg)\\
        &\times \sqrt{\E_t\big[\exp\big( -2\nu\alpha_t\langle \nabla f(\oxt),\ozt \rangle\big)\big]\E_t\big[\exp\big(2\nu\alpha_t^2L\|\ozt\|^2\big)\big]} \\ 
        &\stackrel{(i)}{\leq} \exp\bigg((1-\alpha_t\mu)\nu F^t + \frac{\nu\alpha_tL^2}{2n}\|\bxt - \obxt\|^2 \bigg)\Big(\E_t\big[\exp\big(-2\alpha_t\langle \nabla f(\oxt),\ozt \rangle \big) \big] \Big)^{\nu/2}\\
        &\times \exp\bigg(\frac{96\nu\alpha_t^2\sigma^2L}{n}\big(1+\log(2d)\big) +  \frac{\nu\alpha_t^{4+2\varepsilon}\rho^2}{64\sigma^2}\sum_{i \in [n]}\|\nabla f(\xit)\|^2\bigg) \\
        &\stackrel{(ii)}{\leq} \exp\bigg((1-\alpha_t\mu)\nu F^t + \frac{\nu\alpha_tL^2}{2n}\|\bxt - \obxt\|^2 + \frac{3\nu\alpha_t^2\sigma^2\|\nabla f(\oxt)\|^2}{2n} + \frac{\nu\alpha_t^{2+\varepsilon}n\rho}{2}\bigg)\\
        &\times \exp\bigg(\frac{96\nu\alpha_t^2\sigma^2L}{n}\big(1+\log(2d)\big) +  \bigg(\frac{\nu\alpha_t^{4+2\varepsilon}\rho^2}{64\sigma^2} + \frac{\nu\alpha_t^{2+\varepsilon}\rho}{2}\bigg)\sum_{i \in [n]}\|\nabla f(\xit)\|^2\bigg) \\
        &\stackrel{(iii)}{\leq} \exp\bigg((1-\alpha_t\mu)\nu F^t + \frac{\nu\alpha_tL^2}{2n}\|\bxt - \obxt\|^2 + \frac{3\nu\alpha_t^2\sigma^2\|\nabla f(\oxt)\|^2}{2n} + \frac{\nu\alpha_t^{2+\varepsilon}n\rho}{2}\bigg)\\
        &\times \exp\bigg(\frac{96\nu\alpha_t^2\sigma^2L}{n}\big(1+\log(2d)\big) + 2\nu\alpha_t^{2+\varepsilon}n\rho\bigg(\|\nabla f(\oxt)\|^2 + \frac{L^2}{n}\|\bxt - \obxt\|^2\bigg)\bigg) \\
        &\stackrel{(iv)}{\leq} \exp\bigg(\bigg(1-\alpha_t\mu + \alpha_t\bigg(4n\alpha_t^{1+\varepsilon}\rho L + \frac{3\alpha_t\sigma^2L}{n}\bigg)\bigg)\nu F^t\bigg) \\
        &\times \exp\bigg( \frac{96\nu\alpha_t^2\sigma^2L}{n}\big(1+ \log(2d)\big) + \frac{\nu\alpha_t^{2+\varepsilon}n\rho}{2} + \bigg(\frac{1}{2} + 2n\alpha_t^{1+\varepsilon}\rho \bigg) \frac{\nu\alpha_tL^2}{n}\|\bxt - \obxt\|^2 \bigg)\\
        &\stackrel{(v)}{\leq} \exp\bigg( \big(1-\alpha_t\mu/2\big)\nu F^t + \frac{\nu\alpha_tL^2}{n}\|\bxt - \obxt\|^2 + \nu\alpha_t^2\bigg(\frac{96\sigma^2L}{n}\big(1+\log(2d)\big) + \frac{\alpha_t^\varepsilon n\rho}{2}\bigg)\bigg),
    \end{align*} where $(i)$ follows from $\nu \in (0,1]$, Proposition \ref{prop:Jensen}, Lemma \ref{lm:avg-noise-properties} and choosing $\alpha_t \leq \frac{\sqrt{n}}{8\sigma\sqrt{3L}}$, in $(ii)$ we use Lemma \ref{lm:avg-noise-properties}, $(iii)$ follows by choosing $\alpha_t \leq \sqrt[2+\varepsilon]{\frac{32\sigma^2}{\rho}}$ and $\|\nabla f(\xit)\|^2 \leq 2\|\nabla f(\oxt)\|^2 + 2L^2\|\xit - \oxt\|^2$, $(iv)$ follows from Proposition \ref{prop:descent-ineq},  while $(v)$ follows from $\alpha_t \leq \min\Big\{\sqrt[1+\varepsilon]{\frac{1}{4n\rho}},\sqrt[1+\varepsilon]{\frac{\mu}{16n\rho L}},\frac{n\mu}{12\sigma^2L} \Big\}$. Taking the full expectation, we have
    \begin{align*}
        \E\big[\exp\big(\nu F^{t+1}\big) \big] &\leq \exp\bigg(\nu\alpha_t^2\bigg(\frac{96\sigma^2L}{n}\big(1+\log(2d)\big) + \frac{\alpha_t^\varepsilon n\rho}{2}\bigg)\bigg) \\
        &\times \E\bigg[\exp\bigg(\big(1-\alpha_t\mu/2\big)\nu F^t + \frac{\nu\alpha_tL^2}{n}\|\bxt - \obxt\|^2 \bigg)\bigg] \\
        &\leq \exp\bigg(\nu\alpha_t^2\bigg(\frac{96\sigma^2L}{n}\big(1+\log(2d)\big) + \frac{\alpha_t^\varepsilon n\rho}{2}\bigg)\bigg) \\
        &\times \bigg(\E\big[\exp\big(\nu F^t\big)\big]\bigg)^{1-\frac{\alpha_t\mu}{2}}\Bigg(\E\bigg[\exp\bigg(\frac{2\nu L^2}{n\mu}\|\bxt - \obxt\|^2\bigg)\bigg]\Bigg)^{\frac{\alpha_t\mu}{2}} \\
        &\leq \exp\bigg(\nu\alpha_t^2\bigg(\frac{96\sigma^2L}{n}\big(1+\log(2d)\big) + \frac{\alpha_t^\varepsilon n\rho}{2}\bigg) + \bigg(1 - \frac{\alpha_t\mu}{2}\bigg)\nu a_2 + \frac{\alpha_t\mu}{2}\nu a_1\bigg) \\
        &\leq \exp\bigg(\frac{\alpha_t\mu\nu a_2}{4} + \bigg(1 - \frac{\alpha_t\mu}{2}\bigg)\nu a_2 + \frac{\alpha_t\mu\nu a_2}{4} \bigg) = \exp\big(\nu a_2\big),
    \end{align*} where the second inequality follows from Proposition \ref{prop:Holder} with $p = \Big(1-\frac{\alpha_t\mu}{2}\Big)^{-1}$ and $q = \frac{2}{\alpha_t\mu}$, the third follows from $K \geq \frac{2L^2}{\mu}$ and the induction hypothesis, while the fourth inequality follows from the choice $\alpha_t \leq \min\Big\{\frac{n\mu}{384\sigma^2L},\sqrt[1+\varepsilon]{\frac{\mu}{4n\rho}}\Big\}$ and $a_2 \geq \max\big\{2+\log(2d),2a_1\big\}$. Finally, starting from Lemma \ref{lm:consensus-one-step-tracker-PL}, we get
    \begin{align}\label{eq:bounding-mgf-step-1}
        &\E_{t+1}\Big[\exp\big(\nu Y^{t+1}\big)\Big]\leq \exp\bigg(\frac{(3+\lambda^2)}{4}\nu Y^t + 12\nu\lambda^2L^2(1-\lambda^2)\bigg(\frac{3}{n}\|\bxt - \obxt\|^2 + 2\alpha_t^2\|\ozt\|^2\bigg)\bigg) \nonumber \\
        &\times \exp\bigg(8\nu\lambda^2(1-\lambda^2)\bigg(12\alpha_t^2L^3F^t + \frac{\|\bzt\|^2}{n}\bigg)\bigg)\E_{t+1}\bigg[\exp\bigg(\frac{8\nu\lambda^2(1-\lambda^2)}{n}\|\bztp\|^2\bigg)\bigg] \nonumber \\ 
        &\leq \exp\bigg(\frac{3+\lambda^2}{4}\nu Y^t + 4\nu\lambda^2L^2(1-\lambda^2)\bigg(\frac{9}{n}\|\bxt - \obxt\|^2 + 6\alpha_t^2\|\ozt\|^2 + 24\alpha_t^2LF^t + \frac{2}{nL^2}\|\bzt\|^2\bigg)\bigg) \nonumber \\
        &\times \exp\bigg(8\nu\lambda^2(1-\lambda^2)\bigg(\sigma^2 + \alpha_t^{2+\varepsilon}\rho\sigma^2 + 4\alpha_t^{2+\varepsilon}\sigma^2\rho LF^{t+1} + \frac{2\alpha_t^{2+\varepsilon}\sigmax^2\rho L^2}{n}\|\bxtp - \obxtp\|^2\bigg)\bigg) \nonumber \\ 
        &\leq \exp\bigg(\frac{3+\lambda^2}{4}\nu Y^t + 4\nu\lambda^2L^2(1-\lambda^2)\bigg(\frac{9}{n}\|\bxt - \obxt\|^2 + 6\alpha_t^2\|\ozt\|^2 + 24\alpha_t^2LF^t + \frac{2}{nL^2}\|\bzt\|^2\bigg)\bigg) \nonumber \\
        &\times \exp\bigg(8\nu\lambda^2(1-\lambda^2)\bigg(2\sigma^2 + 4\alpha_tLF^{t+1} + \frac{2\alpha_tL^2}{n}\|\bxtp - \obxtp\|^2\bigg)\bigg),
    \end{align} where the second inequality follows from $\nu \leq \frac{n}{8\sigmax^2\lambda^2(1-\lambda^2)}$ and Assumption \ref{asmpt:noise}, while the third follows from the step-size choice $\alpha_t \leq \min\Big\{\sqrt[2+\varepsilon]{\frac{1}{\rho}},\sqrt[1+\varepsilon]{\frac{1}{\sigmax^2\rho}}\Big\}$. Taking the expectation conditioned on $\Ft$, using the tower rule and Proposition \ref{prop:Holder} with $p = \frac{4}{3+\lambda^2}$ and $q = \frac{4}{1-\lambda^2}$, we then have
    \begin{align*}
        &\E_t\Big[\exp\big(\nu Y^{t+1}\big)\Big] \leq \exp\bigg(4\nu\lambda^2(1-\lambda^2)\bigg(4\sigma^2 + \frac{9L^2}{n}\|\bxt - \obxt\|^2 + 24\alpha_t^2L^3F^t\bigg)\bigg)\E_t\bigg[\exp\bigg(\frac{3+\lambda^2}{4}\nu Y^t\bigg)\bigg] \\
        &\times \E_t\bigg[\exp\bigg(8\nu\lambda^2(1-\lambda^2)\bigg(3\alpha_t^2L^2\|\ozt\|^2 + \frac{\|\bzt\|^2}{n} + 4\alpha_tLF^{t+1} + \frac{2\alpha_tL^2}{n}\|\bxtp - \obxtp\|^2\bigg)\bigg)\bigg] \\
        &\leq \exp\bigg(4\nu\lambda^2(1-\lambda^2)\bigg(4\sigma^2 + \frac{9L^2}{n}\|\bxt - \obxt\|^2 + 24\alpha_t^2L^3F^t\bigg)\bigg)\bigg(\E_t\big[\exp\big(\nu Y^t\big)\big]\bigg)^{\frac{3+\lambda^2}{4}} \\
        &\times \Bigg(\E_t\bigg[\exp\bigg(32\nu\lambda^2\bigg(3\alpha_t^2L^2\|\ozt\|^2 + \frac{\|\bzt\|^2}{n} + 4\alpha_tLF^{t+1} + \frac{2\alpha_tL^2}{n}\|\bxtp - \obxtp\|^2\bigg)\bigg)\bigg]\Bigg)^{\frac{1-\lambda^2}{4}}.
    \end{align*} Next, applying Proposition \ref{prop:gen-Holder} with $k = 3$ to the last expectation, it follows that
    \begin{align*}
        \E_t\Big[&\exp\big(\nu Y^{t+1}\big)\Big] \leq \exp\bigg(4\nu\lambda^2(1-\lambda^2)\bigg(4\sigma^2 + \frac{9L^2}{n}\|\bxt - \obxt\|^2 + 24\alpha_t^2L^3F^t\bigg)\bigg) \\
        &\times \bigg(\E_t\big[\exp\big(\nu Y^t\big)\big]\bigg)^{\frac{3+\lambda^2}{4}}\Bigg(\E_t\big[\exp\big(288\nu\alpha_t^2\lambda^2L^2\|\ozt\|^2\big)\big]\E_t\bigg[\exp\bigg(\frac{96\nu\lambda^2}{n}\|\bzt\|^2\bigg)\bigg]\Bigg)^{\frac{1-\lambda^2}{12}} \\
        &\times\Bigg(\E_t\bigg[\exp\bigg(96\nu\lambda^2\bigg( 4\alpha_tLF^{t+1} + \frac{2\alpha_tL^2}{n}\|\bxtp - \obxtp\|^2\bigg)\bigg)\bigg]\Bigg)^{\frac{1-\lambda^2}{12}} \\
         &\leq \exp\bigg(4\nu\lambda^2(1-\lambda^2)\bigg(8\sigma^2 + \frac{576\alpha_t^2\sigma^2L^2}{n}\big(1+\log(2d)\big) + \frac{10L^2}{n}\|\bxt - \obxt\|^2 + 16\alpha_tLF^t\bigg)\bigg) \\ &\times\bigg(\E_t\big[\exp\big(\nu Y^t\big)\big]\bigg)^{\frac{3+\lambda^2}{4}}\Bigg(\E_t\bigg[\exp\bigg(192\nu\lambda^2\bigg( 2\alpha_tLF^{t+1} + \frac{\alpha_tL^2}{n}\|\bxtp - \obxtp\|^2\bigg)\bigg)\bigg]\Bigg)^{\frac{1-\lambda^2}{12}},
    \end{align*} where the second inequality follows from Lemma \ref{lm:avg-noise-properties} and choosing the parameter $\nu \leq \frac{n}{96\sigmax^2\lambda^2}$ and the step-size as $\alpha_t \leq \min\Big\{\sqrt[1 +\varepsilon]{\frac{1}{\sigmax^2\rho}},\sqrt[5+2\varepsilon]{\frac{1}{18\rho^2L^2}},\sqrt{\frac{n}{27648\sigma^2\lambda^2L^2}},\frac{1}{6},\frac{1}{6L^2} \Big\}$. Taking the full expectation, it follows that
    \begin{align*}
        &\E\big[\exp\big(\nu Y^{t+1}\big)\big] \leq \exp\bigg(32\nu\sigma^2\lambda^2(1-\lambda^2) + \frac{2304\nu\alpha_t^2\sigma^2\lambda^2(1-\lambda^2)L^2}{n}\big(1+\log(2d)\big) \bigg) \\
        &\E\Bigg[\exp\bigg(8\nu\lambda^2(1-\lambda^2)L\bigg(\frac{5L}{n}\|\bxt - \obxt\|^2 + 8\alpha_tF^t\bigg)\bigg(\E_t\big[\exp\big(\nu Y^t\big)\big]\bigg)^{\frac{3+\lambda^2}{4}} \\
        &\times \Bigg(\E_t\bigg[\exp\bigg(192\nu\lambda^2\bigg( 2\alpha_tLF^{t+1} + \frac{\alpha_tL^2}{n}\|\bxtp - \obxtp\|^2\bigg)\bigg)\bigg]\Bigg)^{\frac{1-\lambda^2}{12}} \Bigg]. 
    \end{align*} Applying Proposition \ref{prop:Holder} with $p = \frac{4}{3+\lambda^2}$ and $q = \frac{4}{1-\lambda^2}$, we get
    \begin{align*}
        &\E\big[\exp\big(\nu Y^{t+1}\big)\big] \leq \exp\bigg(32\nu\sigma^2\lambda^2(1-\lambda^2) + \frac{2304\nu\alpha_t^2\sigma^2\lambda^2(1-\lambda^2)L^2}{n}\big(1+\log(2d)\big) \bigg) \\
        &\times\bigg(\E\big[\exp\big(\nu Y^t\big)\big]\bigg)^{\frac{3+\lambda^2}{4}} \Bigg(\E\Bigg[\exp\bigg(32\nu\lambda^2L\bigg(\frac{5L}{n}\|\bxt - \obxt\|^2 + 8\alpha_tF^t\bigg) \\
        &\times \Bigg(\E_t\bigg[\exp\bigg(192\nu\lambda^2\bigg( 2\alpha_tLF^{t+1} + \frac{\alpha_tL^2}{n}\|\bxtp - \obxtp\|^2\bigg)\bigg)\bigg]\Bigg)^{\frac{1}{3}} \Bigg]\Bigg)^{\frac{1-\lambda^2}{4}} \\
        &\stackrel{(i)}{\leq} \exp\bigg(32\nu\sigma^2\lambda^2(1-\lambda^2) + \frac{2304\nu\alpha_t^2\sigma^2\lambda^2(1-\lambda^2)L^2}{n}\big(1+\log(2d)\big) + \frac{3+\lambda^2}{4}\nu a_3\bigg) \\
        &\times\Bigg(\E\bigg[\exp\bigg(64\nu\lambda^2L\bigg(\frac{5L}{n}\|\bxt - \obxt\|^2 + 6\alpha_tF^t\bigg)\bigg] \\ 
        &\times\E\Bigg[\E_t\bigg[\exp\bigg(192\nu\lambda^2\bigg( 2\alpha_tLF^{t+1} + \frac{\alpha_tL^2}{n}\|\bxtp - \obxtp\|^2\bigg)\bigg)\bigg]\Bigg)^{\frac{2}{3}} \Bigg]\Bigg)^{\frac{1-\lambda^2}{8}} \\
        &\stackrel{(ii)}{\leq} \exp\bigg(32\nu\sigma^2\lambda^2(1-\lambda^2) + \frac{2304\nu\alpha_t^2\sigma^2\lambda^2(1-\lambda^2)L^2}{n}\big(1+\log(2d)\big) + \frac{3+\lambda^2}{4}\nu a_3\bigg) \\
        &\times\Bigg(\sqrt{\E\bigg[\exp\bigg(\frac{640\nu\lambda^2L^2}{n}\|\bxt - \obxt\|^2\bigg)\bigg]\E\bigg[\exp\bigg(768\nu\alpha_t\lambda^2LF^t\bigg)\bigg]} \\ 
        &\times\E\Bigg[\E_t\bigg[\exp\bigg(192\nu\lambda^2\bigg( 2\alpha_tLF^{t+1} + \frac{\alpha_tL^2}{n}\|\bxtp - \obxtp\|^2\bigg)\bigg)\bigg]\Bigg)^{\frac{2}{3}} \Bigg]\Bigg)^{\frac{1-\lambda^2}{8}} \\
        &\stackrel{(iii)}{\leq} \exp\bigg(32\nu\sigma^2\lambda^2(1-\lambda^2) + \frac{2304\nu\alpha_t^2\sigma^2\lambda^2(1-\lambda^2)L^2}{n}\big(1+\log(2d)\big) + \frac{3+\lambda^2}{4}\nu a_3 + \frac{1-\lambda^2}{16}\nu a_1 \bigg) \\
        &\times \exp\Big(48\alpha_t\lambda^2(1-\lambda^2)L\nu a_2\Big)\Bigg(\E\bigg[\exp\bigg(192\nu\lambda^2\bigg( 2\alpha_tLF^{t+1} + \frac{\alpha_tL^2}{n}\|\bxtp - \obxtp\|^2\bigg)\bigg)\bigg]\Bigg)^{\frac{1-\lambda^2}{12}}
    \end{align*} where the $(i)$ follow by applying Proposition \ref{prop:Holder} with $p = q = 2$ and using the induction hypothesis, in $(ii)$ we again used Proposition \ref{prop:Holder} with $p = q = 2$, while $(iii)$ follows from the induction hypothesis, choosing $K \geq 640\lambda^2L^2$, $\alpha_t \leq \frac{1}{768\lambda^2L}$ and by using Proposition \ref{prop:Jensen} and the tower rule. Finally, applying Proposition \ref{prop:Holder} with $p = q = 2$ once more, choosing $\alpha_t \leq \frac{1}{768\lambda^2L}$ and $K \geq 384\lambda^2L^2$ and using the induction hypothesis, we then get
    \begin{align*}
        &\E\big[\exp\big(\nu Y^{t+1}\big)\big] \leq \exp\bigg(32\nu\sigma^2\lambda^2(1-\lambda^2)\bigg(1 + \frac{72\alpha_t^2L^2}{n}\big(1+\log(2d)\big)\bigg) + \frac{3+\lambda^2}{4}\nu a_3 + \frac{1-\lambda^2}{16}\nu a_1\bigg) \\
        &\times \exp\Big(48\alpha_t\lambda^2(1-\lambda^2)L\nu a_2\Big)\Bigg(\E\bigg[\exp\bigg(768\nu\alpha_t\lambda^2LF^{t+1}\bigg)\bigg]\E\bigg[\exp\bigg(\frac{384\nu\alpha_t\lambda^2L^2}{n}\|\bxtp - \obxtp\|^2\bigg)\bigg]\Bigg)^{\frac{1-\lambda^2}{24}} \\
        &\leq \exp\bigg(32\nu\sigma^2\lambda^2(1-\lambda^2)\bigg(1 + \frac{72\alpha_t^2L^2}{n}\big(1+\log(2d)\big)\bigg) + \frac{3+\lambda^2}{4}\nu a_3 + \frac{1-\lambda^2}{8}\nu a_1 + 80\alpha_t\lambda^2(1-\lambda^2)L\nu a_2\bigg).
    \end{align*} Choosing $\alpha_t \leq \min\Big\{\frac{\sqrt{n}}{6L\sqrt{2}}, \frac{1}{960L\lambda^2}\frac{a_3}{a_2}\Big\}$ and $a_3 \geq \max\Big\{384\sigma^2\lambda^2\big(2+\log(2d)\big), \frac{3}{2}a_1 \Big\}$, we finally get
    \begin{equation*}
        \E\big[\exp\big(\nu Y^{t+1}\big)\big] \leq \exp\bigg(\frac{1-\lambda^2}{12}\nu a_3 + \frac{3+\lambda^2}{4}\nu a_3 + \frac{1-\lambda^2}{12}\nu a_3 + \frac{1-\lambda^2}{12}\nu a_3 \bigg) = \exp(\nu a_3),
    \end{equation*} proving the induction. To complete the proof, we now verify that $a_1,a_2,a_3$ can be chosen so that all the conditions are satisfied. To that end, recall that we have the following conditions $a_1 \geq \frac{K}{n}\|\bx^1 - \obx^1\|^2$, $a_2 \geq \max\big\{f(\ox^1) - \fs,2+\log(2d),2a_1\big\}$ and $a_3 \geq \Big\{4\lambda^2(1-\lambda^2)^2\Big(\sigma^2 + 4L\big(f(\ox^1) - \fs\big) + \frac{2L^2}{n}\|\bx^1 - \obx^1\|^2\Big),384\sigma^2\lambda^2\big(2+\log(2d)\big),\frac{3}{2}a_1\Big\}$, as well as
    \begin{align*}
        \alpha_t \leq \frac{(1-\lambda^2)^2}{2\lambda\sqrt{K}}\sqrt{\frac{a_1}{a_3}} \quad \text{ and } \quad \alpha_t \leq \frac{1}{960\lambda^2L}\frac{a_3}{a_2}.
    \end{align*} It can be readily verified that choosing $a_1 = \max\Big\{\frac{K}{n}\|\bx^1 - \obx^{1}\|^2, \frac{1}{2}\big(f(\ox^1) - \fs\big), \frac{1}{2}\big(2 + \log(2d)\big), \linebreak 256\sigma^2\lambda^2\big(2+\log(2d)\big), \frac{8}{3}\lambda^2(1-\lambda^2)^2\Big(\sigma^2 + 4L\big(f(\ox^1) - \fs\big) + \frac{2L^2}{n}\|\bx^1 - \obx^1\|^2\Big)\Big\}$, $a_2 = 2a_1$ and $a_3 = \frac{3}{2}a_1$, all the conditions are satisfied if $\alpha_t \leq \min\Big\{\frac{(1-\lambda^2)^2}{\lambda\sqrt{6K}}, \frac{1}{1280\lambda^2L} \Big\}$, where we recall that $K \geq \max\Big\{\frac{2L^2}{\mu},640\lambda^2L^2\Big\}$. 
\end{proof}

\begin{remark}\label{rmk:trans-time-PL-analysis}
    Unlike the MSE analysis in \cite{ran-improved}, where the authors consider $\frac{1}{nL^2}\E\big[\|\byt - \obyt\|^2\big]$, here we bound the MGF of $\|\byt - \obyt\|^2$ scaled by $\frac{(1-\lambda^2)^2}{n}$. This is necessary, as otherwise we would introduce a dependence on $(1-\lambda^2)^2$ in $\nu$, see, e.g., equation \eqref{eq:base-ind-grad-track} ahead. Recalling that the leading term in the rate in Theorem \ref{thm:main-PL} is of the form $\bigO\Big(\frac{\nu^{-1}}{n(t+t_0)}\Big)$, this would result in propagating the effect of network connectivity in the leading term, making it impossible to achieve the known optimal network-independent rate $\bigO\Big(\frac{1}{n(t+t_0)}\Big)$, for any $t \geq 1$.
\end{remark}

We are now ready to prove Lemma \ref{lm:MGF-decay-PL}.

\begin{proof}[Proof of Lemma \ref{lm:MGF-decay-PL}]
    We again proceed via induction. For $t = 1$, recalling that agents have the same initialization (implying that $\Delta_x = 0$), we have $\E\bigg[\exp\bigg(\frac{\nu K_1}{n}\|\bx^1 - \obx^1\|^2 \bigg)\bigg] = 1 \leq \exp\big(\nu\alpha_1^2K_1C)$, for any $C \geq 0$. Define $X^t \coloneqq \frac{\|\bxt - \obxt\|^2}{n}$ and assume that, for some $t \geq 1$ and $C \geq 0$, we have $\E\big[\exp\big(\nu K_tX^t\big)\big] \leq \exp\big(\nu\alpha_t^2K_tC \big)$. Consider iteration $t+1$. Use Lemma \ref{lm:consensus-one-step-model} and define $Y^t \coloneqq \frac{(1-\lambda^2)^2}{n}\|\byt - \obyt\|^2$, to get
    \begin{align}\label{eq:mgf-cons-intermed}
        &\E\big[\exp\big(\nu K_{t+1}X^{t+1}\big)\big] \leq \E\bigg[\exp\bigg(\nu K_{t+1}\bigg(\frac{1+\lambda^2}{2}X^t + \frac{2\alpha_t^2\lambda^2L^2}{(1-\lambda^2)^3}Y^t\bigg)\bigg)\bigg] \nonumber \\
        &\leq \Bigg(\E\bigg[\exp\bigg(\frac{p(1+\lambda^2)(t+t_0+2)}{2(t+t_0+1)}\nu K_tX^t \bigg)\bigg] \Bigg)^{\frac{1}{p}}\Bigg(\E\bigg[\exp\bigg(\frac{2q\alpha_t^2\lambda^2L^2K_{t+1}}{(1-\lambda^2)^3}\nu Y^t \bigg)\bigg] \Bigg)^{\frac{1}{q}},
    \end{align} where the second inequality follows from Proposition \ref{prop:Holder}, for some $p, q \in [1,\infty]$, such that $\frac{1}{p} + \frac{1}{q} = 1$. We now want to choose $p$ such that $\frac{p(1+\lambda^2)(t+t_0+2)}{2(t+t_0+1)} \leq 1$. To that end, note that
    \begin{equation}\label{eq:upper-bnd}
        \frac{p(1+\lambda^2)(t+t_0+2)}{2(t+t_0+1)} \leq \frac{p(1+\lambda^2)(t_0+2)}{2(t_0+1)} = p\bigg(1 - \frac{1-\lambda^2}{2}\bigg)\bigg(1+\frac{1}{t_0+1}\bigg) \leq p\frac{3+\lambda^2}{4},
    \end{equation} where the last inequality follows from the choice $t_0 \geq \frac{3+\lambda^2}{1-\lambda^2}$. Therefore, choosing $p = \frac{4}{3+\lambda^2} > 1$, it follows from \eqref{eq:upper-bnd} that $\frac{p(1+\lambda^2)(t+t_0+2)}{2(t+t_0+1)} \leq 1$. Noting that the choice of $p$ implies $q = \frac{4}{1-\lambda^2} > 1$, plugging everything back in \eqref{eq:mgf-cons-intermed} and using Proposition \ref{prop:Jensen}, we get
    \begin{align*}
        \E\big[\exp\big(\nu K_{t+1}X^{t+1}\big)\big] &\leq \bigg(\E\big[\exp\big(\nu K_tX^t\big)\big]\bigg)^{\frac{(1+\lambda^2)(t+t_0+2)}{2(t+t_0+1)}}\Bigg(\E\bigg[\exp\bigg(\frac{8\alpha_t^2\lambda^2L^2K_{t+1}}{(1-\lambda^2)^4}\nu Y^t\bigg)\bigg]\Bigg)^{\frac{1-\lambda^2}{4}} \\
        &\stackrel{(i)}{\leq} \exp\bigg(\frac{1+\lambda^2}{2}\nu\alpha_t^2K_{t+1}C\bigg)\bigg(\E\big[\exp\big(\nu Y^t \big)\big]\bigg)^{\frac{2\alpha_t^2\lambda^2L^2K_{t+1}}{(1-\lambda^2)^3}} \\
        &\stackrel{(ii)}{\leq} \exp\bigg(\frac{1+\lambda^2}{2}\nu\alpha_t^2K_{t+1}C + \frac{2\alpha_t^2\lambda^2L^2K_{t+1}}{(1-\lambda^2)^3}\nu a_3 \bigg) \\
        &\stackrel{(iii)}{=} \exp\bigg(\nu\alpha_{t+1}^2K_{t+1}C\bigg(\frac{t+t_0+1}{t+t_0}\bigg)^2\bigg(\frac{1+\lambda^2}{2} + \frac{2a_3\lambda^2L^2}{(1-\lambda^2)^3C} \bigg)\bigg)
    \end{align*} where $(i)$ follows from the induction hypothesis, Proposition \ref{prop:Jensen} and the fact that $\alpha_t \leq \frac{(1-\lambda^2)^2}{2\lambda L\sqrt{2K_{t+1}}}$ (follows from $t_0 \geq \frac{16a^2\lambda^2\kappa^2K}{(1-\lambda^2)^4}$), in $(ii)$ we use Lemma \ref{lm:bdd-MGF-PL}, while $(iii)$ follows from the step-size schedule $\alpha_t = \frac{a}{\mu(t+t_0)}$. To complete the proof, we need to show that
    \begin{equation}\label{eq:induc-condition}
        \bigg(\frac{t+t_0+1}{t+t_0} \bigg)^2\bigg(\frac{1+\lambda^2}{2} + \frac{2a_3\lambda^2L^2}{(1-\lambda^2)^3C} \bigg) \leq \bigg(\frac{t_0+1}{t_0} \bigg)^2\bigg(\frac{1+\lambda^2}{2} + \frac{2a_3\lambda^2L^2}{(1-\lambda^2)^3C} \bigg) \leq 1.
    \end{equation} If $C \geq \frac{8a_3\lambda^2L^2}{(1-\lambda^2)^4}$, then \eqref{eq:induc-condition} is satisfied if $\Big(\frac{t_0+1}{t_0}\Big)^2\Big(\frac{1+\lambda^2}{2} + \frac{1 - \lambda^2}{4}\Big) = \Big(\frac{t_0+1}{t_0}\Big)^2\Big(\frac{3+\lambda^2}{4}\Big) \leq 1$. Noting that $\Big(\frac{t_0+1}{t_0}\Big)^2 = 1 + \frac{2t_0 + 1}{t_0^2} \leq 1 + \frac{3}{t_0}$ if $t_0 \geq 1$, it follows that \eqref{eq:induc-condition} is satisfied if
    \begin{equation*}
        \bigg(1 + \frac{3}{t_0}\bigg)\bigg(\frac{3+\lambda^2}{4}\bigg) = \bigg(1 + \frac{3}{t_0}\bigg)\bigg(1 - \frac{1- \lambda^2}{4}\bigg) \leq 1,
    \end{equation*} which clearly holds for $t_0 \geq \frac{12}{1-\lambda^2}$. Therefore, the proof is complete by setting $C = \frac{8a_3\lambda^2L^2}{(1-\lambda^2)^4}$.
\end{proof}

We are now ready to prove Theorem \ref{thm:main-PL}.

\begin{proof}[Proof of Theorem \ref{thm:main-PL}]
    Start from Lemma \ref{lm:descent-inequality-PL} and define $F_t = n(t+t_0)(f(\ox_{t})-\fs)$, to get
    \begin{align*}
        F_{t+1} &\leq (1 - \alpha_t\mu)\frac{t+t_0+1}{t+t_0}F_t - \alpha_t n(t+t_0+1)\langle \nabla f(\oxt), \oz_t \rangle \\
        &+ \alpha_t^2n(t+t_0+1)L\|\oz_t\|^2 + \frac{\alpha_t(t+t_0+1)L^2}{2}\|\bxt - \obxt\|^2.
    \end{align*} Next, consider the MGF of $F_{t+1}$ conditioned on $\Ft$. Let $\nu \in (0,1]$, we then have
    \begin{align}\label{eq:proof-PL-init-recursion}
        \E_t\big[&\exp(\nu F_{t+1})\big] \stackrel{(a)}{\leq} \exp\bigg((1 - \alpha_t\mu)\frac{t+t_0+1}{t+t_0}\nu F_t + \frac{\alpha_t\nu(t+t_0+1) L^2}{2}\|\bxt - \obxt\|^2 \bigg) \nonumber \\
        &\times \E_t\Big[\exp(- \alpha_t\nu n(t+t_0+1)\langle \nabla f(\oxt), \oz_t \rangle + \alpha_t^2\nu n(t+t_0+1) L\|\oz_t\|^2) \Big] \nonumber \\
        &\stackrel{(b)}{\leq} \exp\bigg((1 - \alpha_t\mu)\frac{t+t_0+1}{t+t_0}\nu F_t + \frac{\alpha_t\nu(t+t_0+1) L^2}{2}\|\bxt - \obxt\|^2 \bigg) \nonumber \\
        &\times \sqrt{\E_t\Big[\exp(-2 \alpha_t\nu n(t+t_0+1)\langle \nabla f(\oxt), \oz_t \rangle)\Big]\E_t\Big[\exp(2\nu\alpha_t^2n(t+t_0+1) L\|\oz_t\|^2) \Big]} \nonumber \\
        &\stackrel{(c)}{\leq} \exp\bigg((1 - \alpha_t\mu)\frac{t+t_0+1}{t+t_0}\nu F_t + \frac{\alpha_t\nu(t+t_0+1) L^2}{2}\|\bxt - \obxt\|^2 + \frac{\alpha_t^{2+\varepsilon}n\rho}{2} \bigg) \\
        &\times \exp\bigg(\frac{3\nu^2\alpha_t^2n(t+t_0+1)^2\sigma^2\|\nabla f(\oxt)\|^2}{2} + 96\nu\alpha_t^2\sigma^2(t+t_0+1)L\big(1+\log(2d)\big)\bigg) \nonumber \\ 
        &\times\exp\bigg(3\nu\alpha_t^{6+2\varepsilon}\rho^2(t+t_0+1)L\sum_{i \in [n]}\|\nabla f(\xit)\|^2 + \frac{\alpha_t^{2+\varepsilon}\rho}{2}\sum_{i \in [n]}\|\nabla f(\xit)\|^2\bigg) \nonumber \\
        &\stackrel{(d)}{\leq} \exp\bigg(\nu\Big(b_t^\prime F_t + c_t^\prime\|\bxt - \obxt\|^2 + d_t^\prime\Big) + \frac{\alpha_t^{2+\varepsilon}n\rho}{2} + \frac{2\alpha_t^{2+\varepsilon}\rho LF_t}{t+t_0} + \alpha_t^{2+\varepsilon}\rho L^2\|\bxt - \obxt\|^2 \bigg), \nonumber
    \end{align} where in $(a)$ we used the fact that $F_t$ and $\|\bxt-\obxt\|^2$ are $\Ft$-measurable, $(b)$ follows from Proposition \ref{prop:Holder} with $p = q = 2$, in $(c)$ we use Lemma \ref{lm:avg-noise-properties}, Proposition \ref{prop:Jensen} and impose the condition $\alpha_t \leq \frac{1}{8\sigma\sqrt{3(t+t_0+1)L}}$, while $(d)$ follows from $\alpha_t \leq \sqrt[4+2\varepsilon]{\frac{1}{12\rho^2L}}$, Proposition \ref{prop:descent-ineq} and the definition of $F_t$, with $b_t^\prime = \Big(1 - \alpha_t\mu + 4\nu\alpha_t^2(t+t_0+1)L\Big)\frac{t+t_0+1}{t+t_0}$, $c_t^\prime = \alpha_t(t+t_0+1)L^2$ and $d_t^\prime = 96\alpha_t^2\sigma^2(t+t_0+1)L\big(1+\log(2d)\big)$. Note that choosing $t_0 \geq \frac{1-\nu^2}{\nu^2}$, we get $\frac{2\alpha_t^{2+\varepsilon}\rho LF^t}{t+t_0} \leq 2\nu^2\alpha_t^{2+\varepsilon}LF^t\frac{t+t_0+1}{t+t_0}$ and $\alpha_t^{2+\varepsilon}\rho L^2\|\bxt - \obxt\|^2 \leq \nu^2\alpha_t^{2+\varepsilon}\rho(t+t_0+1)L^2\|\bxt - \obxt\|^2 \leq \nu\alpha_t^{2+\varepsilon}\rho(t+t_0+1)L^2\|\bxt - \obxt\|^2$, where the last inequality follows from $\nu \leq 1$. Requiring $\alpha_t \leq \min\Big\{\sqrt[1+\varepsilon]{\frac{1}{\rho}},\sqrt[\varepsilon]{\frac{1}{n\rho}}\Big\}$, defining $b_t \coloneqq \Big(1 - \alpha_t\mu + 6\nu\alpha_t^2(t+t_0+1)L\Big)\frac{t+t_0+1}{t+t_0}$, $c_t \coloneqq 2\alpha_t(t+t_0+1)L^2$, $d_t = \alpha_t^2(t+t_0+1)\big(1/2 + 96\sigma^2\big(1+\log(2d)\big)L\big)$ and plugging the above relations into \eqref{eq:proof-PL-init-recursion}, we then get
    \begin{equation*}
        \E_t\big[\exp\big(\nu F_{t+1}\big) \big] \leq \exp\Big(\nu\Big(b_tF_t + c_t\|\bxt - \obxt\|^2 + d_t\Big)\Big)
    \end{equation*} Taking the full expectation and applying Proposition \ref{prop:Holder}, we get
    \begin{align}\label{eq:bdd-mgf-str-cvx}
        \E\big[\exp(\nu F_{t+1})\big] &\leq \exp\big(\nu d_t\big) \E\Big[\exp\Big(\nu b_tF_t + \nu c_t\|\bxt-\obxt\|^2 \Big) \Big] \nonumber \\
        &\leq \exp\big(\nu d_t\big) \sqrt[p]{\E\big[\exp\big(\nu pb_tF_t\big)\big]}\sqrt[q]{\E\big[\exp\big(\nu qc_t\|\bxt-\obxt\|^2 \big)\big]}
    \end{align} for some $p,q \in [1,\infty]$. We now analyze $pb_t$. Recalling the definition of $b_t$, we get
    \begin{equation*}
        pb_t = p\bigg(1 - \frac{a}{t+t_0} + \frac{6\nu a^2(t+t_0+1)L}{\mu^2(t+t_0)^2}\bigg)\frac{t+t_0+1}{t+t_0} \leq p\bigg(1 - \frac{a(\mu^2 - 12\nu a^2L)}{\mu^2(t+t_0)}\bigg)\frac{t+t_0+1}{t+t_0}.
    \end{equation*} Choosing $\nu \leq \frac{\mu^2}{24a^2L}$ and $p = 1 + \frac{\alpha_t\mu}{4}$, it follows that
    \begin{equation}\label{eq:pbt-bound}
        pb_t \leq p\bigg(1-\frac{a}{2(t+t_0)} \bigg)\frac{t+t_0+1}{t+t_0} \leq \bigg(1 - \frac{a}{4(t+t_0)}\bigg)\bigg(1 +\frac{1}{t+t_0}\bigg) \leq 1,
    \end{equation} where the last inequality follows since $a \geq 4$. Next, note that the choice of $p = 1+\frac{\alpha_t\mu}{4}$ implies that $q = 1 + \frac{4}{\alpha_t\mu}$. From the definition of $c_t$, we then have
    \begin{align}\label{eq:qct-bound}
        qc_t = 2\bigg(1 + \frac{4}{\alpha_t\mu}\bigg)\alpha_t(t+t_0+1)L^2 = 2\bigg(\alpha_t + \frac{4}{\mu} \bigg)(t+t_0+1)L^2 \leq \frac{4aL^2}{\mu}(t+t_0+1), 
    \end{align} where the inequality follows from $\alpha_t \leq \frac{a}{\mu}$ and $a \geq 6$. Using \eqref{eq:pbt-bound}-\eqref{eq:qct-bound} in \eqref{eq:bdd-mgf-str-cvx}, it follows that
    \begin{align}\label{eq:setup-for-Lemma-5}
        \E\big[\exp(\nu F_{t+1})\big] &\leq \exp\big(d_t\nu\big)\sqrt[p]{\big(\E\big[\exp(\nu F_{t})\big]\big)^{pb_t}}\sqrt[q]{\E\big[\exp\big(\nu(t+t_0+1)4a\kappa L\|\bxt-\obxt\|^2 \big)\big]} \nonumber \\
        &= \exp\big(d_t\nu)\big(\E\big[\exp(\nu F_{t})\big]\big)^{b_t}\sqrt[q]{\E\big[\exp\big(\nu (t+t_0+1)4a\kappa L\|\bxt-\obxt\|^2 \big)\big]}.
    \end{align} Using Lemma \ref{lm:MGF-decay-PL} with $K = 4na\kappa L$, we get $\E\big[\exp\big(\nu q c_t\|\bxt - \obxt\|^2\big)\big] \leq \E\bigg[\exp\bigg(\frac{\nu K_t}{n}\|\bxt - \obxt\|^2\bigg)\bigg] \leq \exp\Big(\nu\alpha_t^2K_tC\Big)$. Noting that $\frac{1}{q} = \frac{\alpha_t\mu}{4+\alpha_t\mu} \leq \frac{\alpha_t\mu}{4}$, we finally get
    \begin{equation*}
        \sqrt[q]{\E\big[\exp\big(\nu q c_t\|\bxt - \obxt\|^2\big)\big]} \leq \exp\bigg(\frac{\nu\mu\alpha_t^3K_tC}{4}\bigg) \leq \exp\bigg(\frac{2\nu na^4\kappa^2C}{\mu(t+t_0)^2} \bigg). 
    \end{equation*} Define $G_1 \coloneqq \frac{a^2}{\mu^2} + \frac{192\kappa a^2\sigma^2(1+\log(2d))}{\mu}$, $G_2 \coloneqq \frac{2na^4\kappa^2C}{\mu}$ and plug into \eqref{eq:setup-for-Lemma-5}, to get
    \begin{equation*}
        \E\big[\exp\big(\nu F_{t+1}\big)\big] \leq \Big(\E\big[\exp\big(\nu F_t \big)\big]\Big)^{b_t}\exp\bigg(\frac{\nu G_1}{t+t_0} + \frac{\nu G_2}{(t+t_0)^2}\bigg).
    \end{equation*} Recalling the definition of $b_t$ and \eqref{eq:pbt-bound}, it follows that $b_t \leq 1 - \frac{a/2-1}{t+t_0}$, therefore we can bound the MGF of $\nu F_{t+1}$ using Lemma \ref{lm:mgf-bound-str-cvx} and choosing $a \geq 6$, to  finally get
    \begin{align}\label{eq:bdd-mgf}
        \E\big[\exp\big( \nu F_{t+1} \big) \big] &\leq \exp\bigg(\frac{n\nu(t_0+1)^{a/2} \Delta_f}{(t+1+t_0)^{a/2-1}} + \frac{2\nu G_1}{a-2} + \frac{6\nu G_2/(a-4)}{t+1+t_0}\bigg).
    \end{align} Applying Markov's inequality, we then get, for any $\epsilon > 0$
    \begin{equation*}
        \Prob\lp f(\oxtp) - \fs > \epsilon \rp = \Prob\big( \nu F_{t+1} > \nu n(t+1+t_0)\epsilon \big) \leq \exp\lp -\nu n(t+1+t_0)\epsilon \rp\E\big[\big(\nu F_{t+1} \big) \big].
    \end{equation*} Using \eqref{eq:bdd-mgf}, it can be readily verified that choosing 
    \begin{align}\label{eq:eps-1}
        \epsilon_t^1 &= \frac{\nu^{-1}\log(\nicefrac{1}{\delta}) + 2G_1/(a-2)}{n(t+t_0)} + \frac{6G_2/(a-4)}{n(t+t_0)^2} + \frac{(t_0+1)^{a/2}\Delta_f}{(t+t_0)^{a/2}}, 
    \end{align} for any $\delta \in (0,1)$, results in $\Prob\lp f(\oxt) - \fs > \epsilon_t^1 \rp \leq \delta$. Next, using Proposition \ref{prop:descent-ineq} with $x = \xit$ and $y = \oxt$, we get
    \begin{align*}
        f(\xit) &\leq f(\oxt) + \langle \nabla f(\oxt), \xit - \oxt\rangle + \frac{L}{2}\|\xit-\oxt\|^2 \\
        &\stackrel{(i)}{\leq} f(\oxt) + \frac{1}{2L}\|\nabla f(\oxt)\|^2 + \frac{L}{2}\|\xit - \oxt\|^2 + \frac{L}{2}\|\xit-\oxt\|^2 \\
        &\stackrel{(ii)}{\leq} f(\oxt) + f(\oxt) - \fs + L\|\xit-\oxt\|^2,
    \end{align*} where in $(i)$ we used Proposition \ref{prop:Young} with $\epsilon = L$, while $(ii)$ follows from Proposition \ref{prop:descent-ineq}. Subtracting $\fs$ from both sides and averaging over all agents $i \in [n]$, we get
    \begin{equation}\label{eq:global-local-cost}
        \frac{1}{n}\sum_{i \in [n]}\big(f(\xit)-\fs\big) \leq 2\big(f(\oxt)-\fs\big) + \frac{L}{n}\|\bxt-\obxt\|^2.
    \end{equation} We now consider two events, $A_{t,\epsilon} \coloneqq \lcb \omega: f(\oxt)-\fs > \epsilon \rcb$ and $B_{t,\epsilon} \coloneqq \lcb \omega: \frac{L}{n}\|\bxt - \oxt\|^2 > \epsilon \rcb$. From the previous analysis, we know that, for any $\delta \in (0,1)$ and $\epsilon_t^1$ from \eqref{eq:eps-1}, we have $\Prob\big( A_{t,\epsilon_t^1}\big) \leq \delta$. Using Markov's inequality and Lemma \ref{lm:MGF-decay-PL} with $K = nL$, we have, for any $\epsilon > 0$
    \begin{align*}
        \Prob\bigg( \frac{L}{n}\|\bxt - \obxt\|^2 > \epsilon \bigg) &= \Prob\bigg(\frac{\nu K_{t}}{n}\| \bxt - \obxt\|^2 > \nu n(t+t_0+1)\epsilon\bigg) \\
        &\leq \exp\big(-\epsilon\nu n(t+t_0+1)\big)\E\bigg[\exp\bigg(\frac{\nu K_{t}}{n}\|\bxt - \obxt\|^2 \bigg) \bigg] \\
        &\leq \exp\Big( -\epsilon\nu n(t+t_0+1) + \nu\alpha_t^2K_tC\Big) \\
        &= \exp\Big(n\nu(t+t_0+1)\big(\alpha_t^2LC - \epsilon \big) \Big).
    \end{align*} Therefore, it can be readily seen that, choosing 
    \begin{align}\label{eq:eps-2}
        \epsilon_t^2 = \frac{\nu^{-1}\log(\nicefrac{1}{\delta})}{n(t+t_0+1)} + \frac{a^2\kappa C}{\mu(t+t_0)^2},  
    \end{align} we get $\Prob\big(B_{t,\epsilon_t^2}\big) \leq \delta$, for any $\delta \in (0,1)$. Finally, let $C_t \coloneqq \lcb \omega: \frac{1}{n}\sum_{i \in [n]}\big(f(\xit)-\fs\big) > 2\epsilon_t^1+\epsilon_t^2 \rcb$. From \eqref{eq:global-local-cost} it readily follows that, for any $\delta \in (0,\nicefrac{1}{2})$, we have $\Prob(C_t) \leq \Prob\big(A_{t,\epsilon_t^1}\cap B_{t,\epsilon_t^2}\big) \leq \Prob\big(A_{t,\epsilon_t^1}\big) + \Prob\big(B_{t,\epsilon_t^2}\big) \leq 2\delta$. Therefore, we get, for any $\delta \in (0,1)$, with probability at least $1 - \delta$
    \begin{align*}
        \frac{1}{n}\sum_{i \in [n]}\big(f(\xit)-\fs\big) = \mathcal{O}\bigg(\frac{\nu^{-1}\log(\nicefrac{2}{\delta}) + \sigma^2\kappa\big(1+\log(2d)\big)/\mu}{n(t+t_0)} + \frac{\kappa C(1 + \kappa)}{\mu(t+t_0)^2} + \frac{(t_0+1)^{a/2}\Delta_f}{(t+t_0)^{a/2}} \bigg).
    \end{align*} Finally, it can be verified that the conditions on $a$, $t_0$ and $\nu$ in the statement of the theorem ensure that all the step-size conditions are satisfied, completing the proof.
\end{proof}

\section{On the Transient Time}\label{sup:trans-time}

In this section we give details on the transient time resulting from our bounds. We focus on the dependence on the network connectivity and number of agents, ignoring other problem constants, as in \cite{unified-refined}. For non-convex costs, the bound in Theorem \ref{thm:main-non-cvx} is of the form
\begin{equation}\label{eq:tran-t-non-conv-1}
    \bigO\bigg( \frac{1}{\sqrt{nT}} + \frac{n^{\nicefrac{(3+\varepsilon)}{2}}\rho}{T^{\nicefrac{(1+\varepsilon)}{2}}} + \frac{1}{CT} + \frac{1}{n(1-\lambda^2)T} + \frac{n}{(1-\lambda^2)^4T} + \frac{n}{(1-\lambda^2)^3T^2} \bigg),    
\end{equation} where the problem related constant $C$ depends on the network connectivity via the expression $C = \bigO\big(\min\big\{(1-\lambda^2)^2, 1-\lambda^2, (1-\lambda^2)^{4/3}, \sqrt[3]{n}(1-\lambda^2)^{4/3} \big\}\big)$. Since we are interested in the worst-case transient time, corresponding to $\lambda \approx 1$, it follows that $C = \bigO\big((1-\lambda^2)^2\big)$. In this case, the bound in \eqref{eq:tran-t-non-conv-1} becomes
\begin{equation}\label{eq:tran-t-non-conv-2}
    \bigO\bigg( \frac{1}{\sqrt{nT}} + \frac{n^{\nicefrac{(3+\varepsilon)}{2}}\rho}{T^{\nicefrac{(1+\varepsilon)}{2}}} + \frac{1}{(1-\lambda^2)^2T} + \frac{1}{n(1-\lambda^2)T} + \frac{n}{(1-\lambda^2)^4T} + \frac{n}{(1-\lambda^2)^3T^2} \bigg).    
\end{equation} It is now evident that, for any time $T \geq \bigO\Big(\max\Big\{n^{\frac{4+\varepsilon}{\varepsilon}}\rho^{\frac{2}{\varepsilon}},\frac{n}{(1-\lambda^2)^4}, \frac{1}{(1-\lambda^2)^2}, \frac{n^3}{(1-\lambda^2)^8}, \frac{n^{2/3}}{(1-\lambda^2)^2} \Big\}\Big) \linebreak = \bigO\Big(\max\Big\{n^{\frac{4+\varepsilon}{\varepsilon}}\rho^{\frac{2}{\varepsilon}},\frac{n^3}{(1-\lambda)^8}\Big\}\Big)$, the bound in \eqref{eq:tran-t-non-conv-2} becomes $\bigO\Big(\frac{1}{\sqrt{nT}}\Big)$, implying a transient time of order $\bigO\Big(\max\Big\{n^{\frac{4+\varepsilon}{\varepsilon}}\rho^{\frac{2}{\varepsilon}},\frac{n^3}{(1-\lambda^2)^8}\Big\}\Big)$. The first term is induced by the relaxed sub-Gaussianity condition and vanishes as $\varepsilon \rightarrow \infty$, or when $\rho = 0$ (i.e., standard sub-Gaussianity). As discussed in the main body, the term $\bigO\Big(\frac{n^3}{(1-\lambda^2)^8}\Big)$ is slightly worse than the transient time stemming from the \gtdsgd-specific analysis in \cite{ran-improved}, which is of order $\bigO\Big(\frac{n^3}{(1-\lambda^2)^6} \Big)$. However, as mentioned in Remark \ref{rmk:sharp-constants}, this stems from the slightly less tight constants in our intermediate results, which themselves stem from the need to first provide deterministic bounds that can be used in the MGF, while in the MSE analysis it is possible to directly bound the expected quantities. For P\L{} costs, the bound in Theorem \ref{thm:main-PL} is of the form
\begin{align}\label{eq:tran-t-str-cvx-1}
    \mathcal{O}\bigg(\frac{1}{n(T+t_0)} + \frac{C}{(T+t_0)^2} + \frac{t_0^{a/2}}{(T+t_0)^{a/2}} \bigg),
\end{align} where $a \geq 6$ is a user-specified constant, while $t_0$, $C$ are of the form $t_0 = \Omega\Big(\max\Big\{\frac{1}{1-\lambda^2},\frac{1}{(1-\lambda^2)^2}, \linebreak \frac{n}{(1-\lambda^2)^4}, \sqrt[1+\varepsilon]{\rho n}\Big\}\Big)$ and $C = \Omega\Big(\max\Big\{\frac{1}{(1-\lambda^2)^2}, \frac{1}{(1-\lambda^2)^4} \Big\}\Big)$. Since we are interested in the worst-case transient time, we set $t_0 = \Omega\Big(\frac{n}{(1-\lambda^2)^4}\Big)$ and $C = \Omega\Big(\frac{1}{(1-\lambda^2)^4} \Big)$, and for ease of exposition, let $a = 2b$, for a user-specified $b \geq 3$, so that the bound in \eqref{eq:tran-t-str-cvx-1} becomes
\begin{align}\label{eq:tran-t-str-cvx-2}
    &\quad\quad\bigO\bigg(\frac{1}{n(T+t_0)} + \frac{n^b}{(1-\lambda^2)^{4b}(T+t_0)^b} \bigg).
\end{align} It can be seen that for $T \geq \frac{n^{\frac{b+1}{b-1}}}{(1-\lambda^2)^{\frac{4b}{b-1}}}$, the bound in \eqref{eq:tran-t-str-cvx-2} becomes $\mathcal{O}\Big(\frac{1}{n(t+t_0)}\Big)$, implying a transient time of order $\mathcal{O}\bigg(\frac{n^{\frac{b+1}{b-1}}}{(1-\lambda^2)^{\frac{4b}{b-1}}}\bigg)$, for any $b \geq 3$. Choosing $b = \frac{2+\epsilon}{\epsilon}$, for any $\epsilon \in (0,1]$, we get a transient time of order $\bigO\Big(\frac{n^{1+\epsilon}}{(1-\lambda^2)^{4+2\epsilon}}\Big)$, which can be made arbitrarily close to $\bigO\Big(\frac{n}{(1-\lambda^2)^4} \Big)$, by choosing a small $\epsilon$. This is close to the transient time $\bigO\Big(\frac{n}{(1-\lambda^2)^3}\Big)$ stemming from the MSE rate in \cite{ran-improved}. However, this comes at the cost of slightly larger multiplicative constants in Theorem \ref{thm:main-PL}, recalling that the final rate is of the form $\bigO\Big(\frac{b}{n(t+t_0)}\Big) = \bigO\Big(\frac{2}{n\epsilon(t+t_0)}\Big)$. 

\end{document}